
\documentclass[10pt]{article} 
\usepackage[preprint]{rlc}

\usepackage{amssymb}            
\usepackage{mathtools}          
\usepackage{mathrsfs}           
\mathtoolsset{showonlyrefs}     
\usepackage{graphicx}           
\usepackage{subcaption}         
\usepackage[space]{grffile}     
\usepackage{url}                

\usepackage{booktabs}%
\usepackage{algorithm}%
\usepackage{algorithmicx}%
\usepackage{algpseudocode}%
\usepackage{listings}%
\usepackage{siunitx}
\usepackage{longtable}
\usepackage{enumitem}

\usepackage{mymacros}

\usepackage{amsmath,amsfonts,bm}









\def\eqref#1{equation~\ref{#1}}









\def\1{\bm{1}}









\def\vtheta{{\bm{\theta}}}



\DeclareMathAlphabet{\mathsfit}{\encodingdefault}{\sfdefault}{m}{sl}
\SetMathAlphabet{\mathsfit}{bold}{\encodingdefault}{\sfdefault}{bx}{n}











\newcommand{\E}{\mathop{\mathbb{E}}}

\newcommand{\Var}{\mathop{\mathbb{V}\mathrm{ar}}}

\newcommand{\Cov}{\mathop{\mathbb{C}\mathrm{ov}}}


\DeclareMathOperator*{\argmax}{arg\,max}
\DeclareMathOperator*{\argmin}{arg\,min}

\usepackage{thm-restate}

\newtheorem{assumption}{Assumption}
\newtheorem{proposition}{Proposition}

\title{Policy Gradient with Active Importance Sampling}


\author{Matteo Papini\thanks{Politecnico di Milano, Milan, Italy. Contact: matteo.papini@polimi.it} \\
    \And
    Giorgio Manganini\thanks{Gran Sasso Science Institute, L'Aquila, Italy}
    \And
    Alberto Maria Metelli\footnotemark[1]  \\
    \And
    Marcello Restelli\footnotemark[1]
    }


\begin{document}

\maketitle

\begin{abstract}
Importance sampling (IS) represents a fundamental technique for a large surge of off-policy reinforcement learning approaches. Policy gradient (PG) methods, in particular, significantly benefit from IS, enabling the effective reuse of previously collected samples, thus increasing sample efficiency. However, classically, IS is employed in RL as a passive tool for re-weighting historical samples. However, the statistical community employs IS as an active tool combined with the use of behavioral distributions that allow the reduction of the estimate variance even below the sample mean one. In this paper, we focus on this second setting by addressing the behavioral policy optimization (BPO) problem. We look for the best behavioral policy from which to collect samples to reduce the policy gradient variance as much as possible. We provide an iterative algorithm that alternates between the cross-entropy estimation of the minimum-variance behavioral policy and the actual policy optimization, leveraging on defensive IS. We theoretically analyze such an algorithm, showing that it enjoys a convergence rate of order $O(\epsilon^{-4})$ to a stationary point, but depending on a more convenient variance term w.r.t. standard PG methods. We then provide a practical version that is numerically validated, showing the advantages in the policy gradient estimation variance and on the learning speed.
\end{abstract}


\section{Introduction}\label{sec1}

\emph{Policy gradient}~\citep[PG,][]{PetersS06} algorithms represent a large class of \emph{reinforcement learning}~\citep[RL,][]{sutton2018reinforcement} approaches that are particularly suitable to address complex control problems thanks to their ability to deal with continuous state and action spaces natively. PG methods address the RL problem by considering a parametric control \emph{policy} $\pi_{\vtheta}$ and formulate the learning process as a particular stochastic optimization problem by updating the policy parameters $\vtheta$ in the ascent direction of the policy gradient. Clearly, the policy gradient needs to be estimated from samples, making the accuracy of such an estimate crucial for the actual performance of the PG approaches~\citep{ZhaoHNS11,PapiniPR22}. 

In this direction, a significant line of research is represented by the approach to sample reuse. Borrowing the techniques from the statistical simulation community, \emph{importance sampling}~\citep[IS,][]{mcbook} has been imported to the PG methods. The majority of the approaches that apply IS to PG methods are based on the idea of reweighting the data collected in the past (i.e., with \emph{behavioral policies}) proportionally to the probability of being generated by the current policy (i.e., \emph{target policy}), whose gradient needs to be estimated~\citep[e.g.,][]{ThomasTG15,MetelliPFR18}. Theoretical results about the advantages in terms of variance reduction have been provided in~\cite{MetelliPMR20}. However, these approaches can be considered \emph{passive} since the focus is on reusing in the most effective way the sample collected in the past without considering the possibility of \emph{choosing} the behavioral policy to improve the estimation of the gradient of the current target policy. 

Indeed, this is the main use of IS for in the Monte Carlo simulation community, where this technique takes an \emph{active} role. Specifically, in these scenarios, the objective is to find the best behavioral policy from which to collect samples in order to reduce the estimate variance as much as possible. It can be proved that under specific assumptions on the random variable whose expectation is to be estimated, such off-policy variance can be reduced even below that of the standard sample mean estimate~\cite{mcbook}. Although this line represents an appealing direction within a class of approaches (like RL) that suffer from an inherent sample inefficiency, the community has not deeply studied this direction.

\textbf{Original Contributions}~~In this paper, we focus on the active role of IS in the PG family of RL algorithms. Specifically, we investigate if we can actively learn the behavioral policy from which to collect samples in order to control the variance of the PG estimator effectively. We call this problem \emph{behavioral policy optimization} (BPO). The contributions of the paper can be stated as follows:
\begin{itemize}[leftmargin=*,topsep=-1pt,noitemsep]
    \item We formulate the BPO problem as finding the behavioral policy that minimizes the variance of the off-policy gradient estimate of a given target policy. After showing that this optimization problem allows for a closed-form solution under restrictive conditions, we introduce an approach for estimating such a behavioral policy based on cross-entropy minimization (Section~\ref{sec:bpo}).
    \item We provide a theoretical analysis of a principled algorithm that alternates two phases: behavioral policy learning based on cross-entropy and actual performance optimization based on the off-policy gradient update. We show that a careful sample partition between the two phases allows for achieving convergence rates of order $O(\epsilon^{-4})$ but depending on a more convenient variance term compared to standard REINFORCE (Section~\ref{sec:theory}).
    \item We provide a practical version of the analyzed algorithm that uses all the samples collected. Then, we empirically evaluate such an algorithm, showing a significant reduction in the variance of the gradient estimate that translates into a faster learning curve (Section~\ref{sec:numerical}).
\end{itemize}

The proofs of all the results reported in the main paper can be found in Appendix~\ref{app:proofs}.

\section{Preliminaries}

\textbf{Notation}~~Let $n \in \Nat$, we denote with $[n] \coloneqq \{1,\dots,n\}$. For a measurable set $\Xs$, we denote with $\Delta^{\Xs}$ the set of probability measures over $\Xs$. Let $P,Q \in \Delta^{\Xs}$ be two probability measures such that $P \ll Q$, that is, $P$ is absolutely continuous with respect to $Q$. When the reference measure $\lambda$ is clear from the context (Lebesgue measure for continuous $\mathcal{X}$ and counting measure for discrete $\mathcal{X}$, respectively), we use $p$ to denote the Radon-Nikodym derivative $\de P / \de \lambda$ (density and mass function, respectively) and $\int_{\mathcal{X}} \cdot\,\de x$ to denote integration with respect to $\lambda$ (Lebesgue integral and summation, respectively). We define the KL-divergence $D_{\text{KL}}$ and the chi-square divergence $\chi^2$ as:
\begin{align}
    D_{\text{KL}}(P\| Q) \coloneqq\int_{\Xs} p(x) \log \left(\frac{p(x)}{q(x)}\right) \de x, \quad  \chi^2(P\|Q) \coloneqq \int_{\Xs} \frac{(p(x)-q(x))^2}{q(x)} \de x.
\end{align}

\textbf{Markov Decision Processes}~~A discounted Markov decision problem~\citep[MDP,][]{puterman2014markov} is defined as a 6-tuple $(\Ss,\As,P,R,\mu_0,\gamma)$, where $\Ss$ is the measurable state space, $\As$ is the measurable action space, $P :\SAs \rightarrow \Delta^{\Ss}$ is the transition model defining for every $(s,a) \in \SAs$ the probability distribution of the next state $s' \sim P(\cdot|s,a)$, $R : \SAs \rightarrow [-R_{\max},R_{\max}]$ is the reward function $R(s,a)$ when performing action $a$ in state $s$, uniformly bounded by $R_{\max} < +\infty$ defining the reward $R(s,a)$ obtained when playing action $a$ in state $s$, $\mu_0 \in \Delta^{\Ss}$ is the initial-state distribution prescribing the state at which interaction begins, $s_0 \sim \mu_0$, and $\gamma \in [0,1]$ is the discount factor.

\textbf{Actor-only Policy Gradient}~~We consider an agent whose behavior is described by a parametric policy $\pi_{\vtheta} : \Ss \rightarrow \Delta^{\As}$ where $\vtheta \in \vTheta$ is the parameter belonging to the parameter space $\vTheta \subseteq \Reals^d$, assumed to be convex. In this setting, the agent's goal consists of maximizing the expected return:
\begin{align*}
    \vtheta^* \in \argmax_{\vtheta \in \vTheta} J(\vtheta) \coloneqq \E_{\vtau \sim p_\vtheta} \left[ R(\vtau) \right], \qquad \text{where} \qquad R(\vtau) \coloneqq \sum_{t=0}^{T-1} \gamma^t R(s_t,a_t),
\end{align*}
and $\vtau = (s_0,a_0,\dots,s_{T-1},a_{T-1}) \in \bm{\mathcal{T}}$ is the trajectory whose probability density function is given by $p_{\vtheta}(\vtau) = \mu_0(s_0)\prod_{t=0}^{T-1} \pi_\vtheta(a_t|s_t) P(s_{t+1}|s_t,a_t)$, $T$ is the trajectory length, and $\bm{\mathcal{T}} = (\SAs)^T$ is the trajectory set.\footnote{For a sufficiently large length, namely $T\ge (1-\gamma)^{-1}\log\left(\epsilon^{-1}R_{\max}(1-\gamma)^{-1}\right)$, the finite-horizon $\gamma$-discounted expected return is $\epsilon$-close to its infinite-horizon counterpart~\citep{kearns2002near}. For this reason, we will use the two interchangeably, and just make sure $T\simeq (1-\gamma)^{-1}$ in our simulations.}
If $\pi_{\vtheta}$ is differentiable in $\vtheta$, we can express the \emph{policy gradient}~\citep{williams1992simple}, that is the gradient of the expected return $J(\vtheta)$ with respect to $\vtheta$:
\begin{align*}
    \nabla J(\vtheta) =  \E_{\vtau \sim p_\vtheta} \left[ \nabla \log p_{\vtheta}(\vtau) R(\vtau) \right].
\end{align*}
Actor-only methods~\citep{PetersS06} perform learning by updating the policy parameters in the direction of the ascending policy gradient $\vtheta \gets \vtheta + \alpha  \nabla J(\vtheta) $, where $\alpha > 0$ is the step size. 

\textbf{On-policy gradient estimators}~~The policy gradient $\nabla J(\vtheta)$ needs to be estimated from a set of collected trajectories. If the trajectories $\mathcal{D}_{\text{on}} = \{\vtau_i\}_{i\in [n]}$ are collected with the same policy $\pi_\vtheta$ of 
 which we seek to estimate the policy gradient, we speak of \emph{on-policy} gradient estimation:
\begin{align}
    \widehat{\nabla} J(\vtheta;\mathcal{D}_{\text{on}}) = \frac{1}{n} \sum_{i=1}^n \mathbf{g}_\vtheta(\vtau_i), \qquad  \vtau_i \sim p_{\vtheta}, \quad \forall i \in [n],
\end{align}
where $\mathbf{g}_\vtheta(\vtau)$ is a single-trajectory estimator of the policy gradient. Classical unbiased estimators include: REINFORCE~\citep{williams1992simple}
    where $\mathbf{g}_\vtheta^{\text{R}}(\vtau) =( \sum_{t=0}^{T-1} \nabla \log  \pi_{\vtheta}(a_t|s_t) ) R(\vtau)$ and G(PO)MPD~\citep{baxter2001infinite} where $\mathbf{g}_\vtheta^{\text{G}}(\vtau) =\sum_{t=0}^{T-1} \gamma^t R(s_t,a_t) \sum_{l=0}^t \nabla \log  \pi_{\vtheta}(a_l|s_l) $.

\textbf{Off-policy gradient estimators with Single behavioral policy}~~When, instead, we seek to estimate the policy gradient $\nabla J(\vtheta)$ of a \emph{target} policy $\pi_{\vtheta}$ having collected $n$ trajectories $\mathcal{D}_{\text{off}} = \{\vtau_i\}_{i\in[n]}$ with a different \emph{behavioral} policy $\pi_{\vtheta^b}$, under the assumption that $\pi_{\vtheta}(\cdot|s) \ll \pi_{\vtheta^b}(\cdot|s)$ for every $s \in \Ss$, we speak of \emph{(single) off-policy} gradient estimation:\footnote{ if  dataset $\mathcal{D}_{\text{off}}$ is made of just one trajectory $\vtau$, with little abuse of notation, we denote the estimator by $\widehat{\nabla} J(\vtheta;\vtau)$.}
\begin{align} \label{eq:off_grad}
    \widehat{\nabla} J(\vtheta;\mathcal{D}_{\text{off}}) = \frac{1}{n} \sum_{i=1}^n\frac{p_\vtheta(\vtau)}{p_{\vtheta^b}(\vtau)}(\vtau_i)\mathbf{g}_\vtheta(\vtau_i), \qquad  \vtau_i \sim p_{\vtheta^b}, \quad \forall i \in [n],
\end{align}
where $\frac{p_\vtheta(\vtau)}{p_{\vtheta^b}(\vtau)}$ is the trajectory \emph{(simple) importance weight}~\citep{mcbook}, defined as:
\begin{align}
   \frac{p_\vtheta(\vtau)}{p_{\vtheta^b}(\vtau)} = \prod_{t=0}^{T-1} \frac{\pi_\vtheta(a_t|s_t)}{\pi_{\vtheta^b}(a_t|s_t)}. 
\end{align}

\textbf{Off-policy gradient estimators with Multiple behavioral policies}~~It is possible to extend these estimators to the case in which trajectories are collected from multiple $m\in\Nat$ behavioral policies parameters $\{\vtheta^b_j\}_{j \in [m]}$. In such a case, for every  $j \in [m]$, we have collected $n_j$ trajectories $\{\vtau_{ij}\}_{i\in [n_j]}$ from the behavioral policy $\pi_{\vtheta_j^b}$ and such that $ \beta_j(\cdot)\pi_{\vtheta}(\cdot|s) \ll \pi_{\vtheta_j^b}(\cdot|s)$ for every $s \in \mathcal{S}$, we speak of \emph{multiple off-policy} gradient estimation:
\begin{align}
     \widehat{\nabla} J(\vtheta;\mathcal{D}_{\text{off}};\bm{\beta}) = \sum_{j=1}^m \frac{1}{n_j}  \sum_{i=1}^{n_j} \beta_j(\vtau_{ij}) \frac{p_{\vtheta}(\vtau_{ij})}{p_{\vtheta^b_j}(\vtau_{ij})} \mathbf{g}_\vtheta(\vtau_{ij}), \quad  \vtau_{ij} \sim p_{\vtheta^b_j}, \quad \forall i \in [n_j], \; \forall j \in [m],
\end{align}
where $\mathcal{D}_{\text{off}}=\{\{\vtau_{ij}\}_{i\in [n_j]}\}_{j \in [m]}$ and $\beta_j(\vtau) \ge 0$ for every $j \in [m]$ and $\sum_{j=1}^m \beta_j(\vtau)=1$ for every trajectory $\vtau \in \bm{\mathcal{T}}$ is a \emph{partition of the unity}. A common choice for the coefficients $\beta_j$ which enjoys desirable theoretical properties is the \emph{balance heuristic}~\citep[BH,][]{VeachG95}:
\begin{align}
    \beta_j^{\text{BH}}(\vtau) \coloneqq \frac{n_j p_{\vtheta^b_j}(\vtau)}{\sum_{k=1}^m n_k p_{\vtheta^b_k}(\vtau)} = \frac{n_j \prod_{t=0}^{T-1}\pi_{\vtheta^b_j}(a_t|s_t)}{\sum_{k=1}^m n_k \prod_{t=0}^{T-1}\pi_{\vtheta^b_k}(a_t|s_t)}.
\end{align}
The resulting estimator becomes:
\begin{align}
     \widehat{\nabla} J(\vtheta;\mathcal{D}_{\text{off}}) = \frac{1}{n} \sum_{j=1}^m  \sum_{i=1}^{n_j}   {\frac{p_{\vtheta}(\vtau_{ij})}{\sum_{k=1}^m \frac{n_j}{n} p_{\vtheta_k^b}(\vtau_{ij})} } \mathbf{g}_\vtheta(\vtau_{ij}), \quad  \vtau_{ij} \sim p_{\vtheta^b_j}, \quad \forall i \in [n_j], \; \forall j \in [m],
\end{align}
where $n = \sum_{j=1}^m n_j$ is the total number of trajectories. The \emph{(multiple) importance weight} can be interpreted as the (single) importance weight having as a behavioral distribution the mixture of the $m$ behavioral distributions with weights $\frac{n_j}{n}$, i.e., $\Phi_m \coloneqq \sum_{k=1}^m \frac{n_j}{n} p_{\vtheta_k^b}$~\citep{MetelliPMR20}.

When the set of behavioral policy parameters contains the target policy parameter $\vtheta$ too, we speak of \emph{defensive (multiple) off-policy} gradient estimation~\cite{mcbook}. In such a case, the importance weight is guaranteed to be bounded.


\section{Behavioral Policy Optimization}\label{sec:bpo}
In this section, we introduce the \emph{behavioral policy optimization} (BPO) problem we aim to solve in this paper. The BPO problem consists in finding the \quotes{best behavioral policy} $\pi_{\vtheta^b}$ to be used for collecting the trajectories $\vtau \sim p_{\vtheta^b}$ for estimating the policy gradient $\widehat{\nabla} J(\vtheta;\vtau)$ of the target policy $\pi_{\vtheta}$.
We formalize the notion of \quotes{best behavioral policy} as the one that minimizes the trace of the covariance matrix of the off-policy gradient estimator $\widehat{\nabla} J(\vtheta;\vtau) $ where $\vtau \sim p_{\vtheta^b}$ (that we will refer to as \emph{gradient variance}) induced by the candidate behavioral policy $\pi_{\vtheta^b}$:\footnote{In the following, we will continue employing the policy gradient notation, although the presented result hold for the estimation of the expected value of a general vector-valued function.}
\begin{align}\label{eq:opt}
    p_{*,\vtheta} \in \argmin_{p_{\vtheta^b} \,:\, \vtheta^b \in \vTheta} \Var_{\vtau \sim p_{\vtheta^b}}\left[\widehat{\nabla} J(\vtheta;\vtau)\right] \coloneqq \E_{\vtau \sim p_{\vtheta^b}} \left[ \left\| \widehat{\nabla} J(\vtheta;\vtau) - \nabla J(\vtheta) \right\|^2_2 \right].
\end{align}
The optimization problem of Equation~(\ref{eq:opt}) can be challenging since it involves a minimization over the parameter space $\vTheta$, which can determine, in general, a non-convex optimization problem. In Section~\ref{sec:closedForm}, we show that when extending the optimization over the full set of distributions over the trajectory space $\bm{\mathcal{T}}$, we can solve the BPO problem in closed form. In Section~\ref{sec:XEntropy}, we illustrate how the closed-form solution can be employed to learn a policy that induces a trajectory distribution representable within the policy parameters space $\vTheta$ approximately close to the best one.


\subsection{Closed-form solution}\label{sec:closedForm}
In this section, we study the solution of the problem of Equation (\ref{eq:opt}) when no restriction to the representable trajectory distributions is enforced. Although this assumption is not realistic from the policy gradient perspective, given the fact that the transition model of the environment is not under control and the policy space might be constrained to the specific parametrization $\vtheta \in \vTheta$, it represents an important preliminary step for obtaining a practical algorithm. The following result provides a closed-form solution to the BPO problem.

\begin{restatable}[]{theorem}{optimalQ}\label{thr:optimalQ}
Let $\vtheta \in \vTheta$ and $\mathbf{g}_{\vtheta} : \bm{\mathcal{T}} \rightarrow \Reals^d$ be the single-trajectory gradient estimator used to compute $\widehat{\nabla} J(\vtheta; \vtau)$. The solution $p_{*,\vtheta} \in \Delta^{\bm{\mathcal{T}}}$ to the BPO problem (Equation~\ref{eq:opt}) is given by:
\begin{align}\label{eq:pStar}
    p_{*,\vtheta} (\vtau) = \frac{p_{\vtheta}(\vtau) \| \mathbf{g}_{\vtheta}(\vtau) \|_2}{\int_{\bm{\mathcal{T}}} p_{\vtheta}(\vtau) \| \mathbf{g}_{\vtheta}(\vtau) \|_2 \de \vtau}.
\end{align} 
The optimal value of Equation~(\ref{eq:opt}) is given by:
\begin{align}\label{eq:off}
     \Var_{\vtau \sim  p_{*,\vtheta}}\left[\widehat{\nabla} J(\vtheta; \vtau) \right]  = \E_{\vtau \sim p_{\vtheta}} \left[ \| \mathbf{g}_{\vtheta}(\vtau) \|_2 \right]^2 - \left\| \nabla J(\vtheta) \right\|_2^2.
\end{align}
\end{restatable}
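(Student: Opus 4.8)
The plan is to recast the BPO objective, once the optimization is relaxed from the parametric family $\{p_{\vtheta^b} : \vtheta^b \in \vTheta\}$ to the whole simplex $\Delta^{\bm{\mathcal{T}}}$, as the textbook minimum-variance importance-sampling problem, with the only twist that the scalar integrand $|f|$ appearing there is replaced by the Euclidean norm $\|\mathbf{g}_\vtheta\|_2$, because the cost in Equation~(\ref{eq:opt}) is the trace of the covariance. First I would record that, for any density $q \in \Delta^{\bm{\mathcal{T}}}$ with $q(\vtau) > 0$ whenever $p_\vtheta(\vtau)\|\mathbf{g}_\vtheta(\vtau)\|_2 > 0$, the single-trajectory estimator $\widehat{\nabla} J(\vtheta;\vtau) = \tfrac{p_\vtheta(\vtau)}{q(\vtau)}\mathbf{g}_\vtheta(\vtau)$ with $\vtau \sim q$ is unbiased, $\E_{\vtau\sim q}[\widehat{\nabla} J(\vtheta;\vtau)] = \int_{\bm{\mathcal{T}}} p_\vtheta(\vtau)\,\mathbf{g}_\vtheta(\vtau)\,\de\vtau = \nabla J(\vtheta)$, the region $\{q = 0\}$ contributing nothing since $\mathbf{g}_\vtheta$ vanishes there. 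Expanding the squared norm in Equation~(\ref{eq:opt}) and using unbiasedness to cancel the cross term, the objective collapses to a second moment:
\begin{align*}
  \Var_{\vtau\sim q}\!\left[\widehat{\nabla} J(\vtheta;\vtau)\right]
  &= \E_{\vtau\sim q}\!\left[\left\|\tfrac{p_\vtheta(\vtau)}{q(\vtau)}\,\mathbf{g}_\vtheta(\vtau)\right\|_2^2\right] - \left\|\nabla J(\vtheta)\right\|_2^2 \\
  &= \int_{\bm{\mathcal{T}}} \frac{p_\vtheta(\vtau)^2\,\|\mathbf{g}_\vtheta(\vtau)\|_2^2}{q(\vtau)}\,\de\vtau - \left\|\nabla J(\vtheta)\right\|_2^2,
\end{align*}
so minimizing the variance over $q$ is equivalent to minimizing the integral term alone.

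The core step is a Cauchy--Schwarz lower bound on that integral, valid for every $q \in \Delta^{\bm{\mathcal{T}}}$:
\begin{align*}
  \left(\int_{\bm{\mathcal{T}}} p_\vtheta(\vtau)\,\|\mathbf{g}_\vtheta(\vtau)\|_2\,\de\vtau\right)^2
  &= \left(\int_{\bm{\mathcal{T}}} \frac{p_\vtheta(\vtau)\,\|\mathbf{g}_\vtheta(\vtau)\|_2}{\sqrt{q(\vtau)}}\cdot\sqrt{q(\vtau)}\,\de\vtau\right)^2 \\
  &\le \left(\int_{\bm{\mathcal{T}}} \frac{p_\vtheta(\vtau)^2\,\|\mathbf{g}_\vtheta(\vtau)\|_2^2}{q(\vtau)}\,\de\vtau\right)\!\left(\int_{\bm{\mathcal{T}}} q(\vtau)\,\de\vtau\right).
\end{align*}
Since $\int_{\bm{\mathcal{T}}} q(\vtau)\,\de\vtau = 1$, the integral term is bounded below by $\big(\int_{\bm{\mathcal{T}}} p_\vtheta(\vtau)\|\mathbf{g}_\vtheta(\vtau)\|_2\,\de\vtau\big)^2 = \E_{\vtau\sim p_\vtheta}[\|\mathbf{g}_\vtheta(\vtau)\|_2]^2$, and equality holds precisely when $p_\vtheta(\vtau)\|\mathbf{g}_\vtheta(\vtau)\|_2/\sqrt{q(\vtau)} \propto \sqrt{q(\vtau)}$, i.e.\ $q(\vtau) \propto p_\vtheta(\vtau)\|\mathbf{g}_\vtheta(\vtau)\|_2$; after normalization this is exactly $p_{*,\vtheta}$ of Equation~(\ref{eq:pStar}), whose normalizing constant equals $\int_{\bm{\mathcal{T}}} p_\vtheta(\vtau)\|\mathbf{g}_\vtheta(\vtau)\|_2\,\de\vtau = \E_{\vtau\sim p_\vtheta}[\|\mathbf{g}_\vtheta(\vtau)\|_2]$. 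Plugging $q = p_{*,\vtheta}$ back into the variance formula gives the claimed optimal value $\E_{\vtau\sim p_\vtheta}[\|\mathbf{g}_\vtheta(\vtau)\|_2]^2 - \|\nabla J(\vtheta)\|_2^2$, and the uniqueness of the minimizer follows from the Cauchy--Schwarz equality condition.

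The part that needs care --- and the only real obstacle --- is the measure-theoretic bookkeeping around the support of $\mathbf{g}_\vtheta$. The candidate $p_{*,\vtheta}$ puts no mass on $\{\mathbf{g}_\vtheta = 0\}$, so it need not dominate $p_\vtheta$ there; this is harmless because the estimator, the second-moment integrand, and the integrand defining $\nabla J(\vtheta)$ all vanish on that set, and the clean fix is to carry the absolute-continuity requirement only on $\{p_\vtheta\|\mathbf{g}_\vtheta\|_2 > 0\}$ and then check that restricting to this set leaves the value of Equation~(\ref{eq:opt}) unchanged. One should also verify that $p_{*,\vtheta}$ is a bona fide element of $\Delta^{\bm{\mathcal{T}}}$: its normalizing constant $\E_{\vtau\sim p_\vtheta}[\|\mathbf{g}_\vtheta(\vtau)\|_2]$ is finite --- it is needed anyway for $\nabla J(\vtheta)$ to exist --- and strictly positive outside the degenerate case $\mathbf{g}_\vtheta = 0$ $p_\vtheta$-almost surely, in which situation the variance is identically zero for every $q$ and the statement is vacuous. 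Finally, since Equation~(\ref{eq:opt}) is posed over the parametric densities $p_{\vtheta^b}$ whereas Theorem~\ref{thr:optimalQ} ranges over all of $\Delta^{\bm{\mathcal{T}}}$, I would state explicitly --- as the surrounding text already does --- that the result concerns this relaxed problem.
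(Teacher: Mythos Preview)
Your proposal is correct, and it takes a genuinely different route from the paper's proof. The paper reduces the variance to the same second-moment integral you obtain, but then solves the constrained minimization by Lagrange multipliers: it writes the Lagrangian, takes the functional derivative with respect to $p(\cdot)$, sets it to zero, and solves for $p(\vtau) \propto p_\vtheta(\vtau)\|\mathbf{g}_\vtheta(\vtau)\|_2$, relying on convexity of the objective to conclude that this stationary point is the global minimum. You instead apply Cauchy--Schwarz directly to lower-bound the second-moment integral and read off both the minimizer and the optimal value from the equality condition. Your argument is arguably more elementary and self-contained: it certifies global optimality in one step without appealing to convexity or second-order conditions, and it delivers the optimal value simultaneously rather than by a separate substitution. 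You also handle the measure-theoretic edge cases (the support of $\mathbf{g}_\vtheta$, finiteness and positivity of the normalizing constant, the relaxation from the parametric family to all of $\Delta^{\bm{\mathcal{T}}}$) more carefully than the paper, which simply drops the non-negativity constraint and checks it is satisfied ex post.
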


It is worth comparing the result of Equation~(\ref{eq:off}) with the variance of the on-policy gradient estimator that can be easily computed from Equation~(\ref{eq:opt}):
\begin{align}\label{eq:on}
    \Var_{\vtau \sim p_{\vtheta}}\left[\widehat{\nabla} J(\vtheta; \vtau) \right]  = \E_{\vtau \sim p_{\vtheta}} \left[ \| \mathbf{g}_{\vtheta}(\vtau) \|_2^2 \right] - \left\| \nabla J(\vtheta) \right\|_2^2.
\end{align}
Although the subtracted term $\left\| \nabla J(\vtheta) \right\|_2^2$ is the same in (\ref{eq:on}) and (\ref{eq:off}), the first one presents some differences. Indeed, in Equation~(\ref{eq:on}) we have an \emph{expectation of the squared $L_2$-norm} of the single-trajectory gradient estimator, \ie $\E_{\vtau \sim p_{\vtheta}} \left[ \| \mathbf{g}_{\vtheta}(\vtau) \|_2^2 \right]$, whereas in Equation~(\ref{eq:off}), we have the \emph{squared expectation of the $L_2$-norm} of the single-trajectory gradient estimator, \ie $\E_{\vtau \sim p_{\vtheta}} \left[ \| \mathbf{g}_{\vtheta}(\vtau) \|_2 \right]^2$. From Jensen's inequality, we immediately observe that:
\begin{align}
    \E_{\vtau \sim p_{\vtheta}} \left[ \| \mathbf{g}_{\vtheta}(\vtau) \|_2 \right]^2 \le \E_{\vtau \sim p_{\vtheta}} \left[ \| \mathbf{g}_{\vtheta}(\vtau) \|_2^2 \right],
\end{align}
and, consequently, we conclude that the off-policy gradient estimator with $ p_{*,\vtheta}$ as behavioral distribution suffers a smaller variance compared with the on-policy gradient estimator.

Furthermore, it is worth comparing the result of Theorem~\ref{thr:optimalQ} with the well-known result for minimum-variance estimation of expectation for non-negative scalar functions~\citep{kahn1950random}. Indeed, Theorem~\ref{thr:optimalQ} generalizes this result for vector-valued functions, reducing to the classical result for non-negative scalar functions, with the standard zero-variance estimator.

As already noted at the beginning of the section, although a convenient closed-form expression for the trajectory density function exists, it cannot be used in practice to collect trajectories since no policy exists inducing such a trajectory distribution. Nevertheless, it can be employed to learn a policy that induces a distribution as close as possible to this one.

\subsection{Cross-entropy minimization}\label{sec:XEntropy}
In this section, we illustrate how to employ the closed-form solution of the BPO problem derived in Section~\ref{sec:closedForm} in order to obtain a practical algorithm. Since, in practice, the parameter space $\vTheta$, together with the transition model, allows to span of a subset of the trajectory distributions $\Delta^{\vTau}$, we cannot represent the optimal behavioral distribution $p_*$ by means of a parametrization, \ie there not exists $\vtheta^b_* \in \vTheta$ such that $p_{*,\vtheta} = p_{\vtheta^b_*}$ a.s. However, we can conveniently project it into the space of representable behavioral distributions by minimizing the KL divergence:
\begin{align}\label{eq:KLProb}
    {\vtheta}^b_{\dagger} \in \argmin_{\vtheta^b \in \vTheta} D_{\text{KL}} \left( p_{*,\vtheta} \| p_{\vtheta^b} \right). 
\end{align}
This minimization problem can be further simplified into a weighted cross-entropy minimization by exploiting the functional form of $p_{*,\vtheta}$, as shown in the following result.

\begin{restatable}[]{prop}{propCS}
Let $p_{*,\vtheta}$ as defined in Equation~(\ref{eq:pStar}). Then, the solution to the problem in Equation~(\ref{eq:KLProb}) can be obtained via the weighted cross-entropy minimization:
\begin{align}
    \vtheta_{\dagger}^b \in \argmin_{\vtheta^b \in \vTheta} \E_{\vtau \sim p_{\vtheta}} \left[ -\|\mathbf{g}_{\vtheta} (\vtau)\| \log p_{\vtheta^b} (\vtau) \right] = \E_{\vtau \sim p_{\vtheta}} \left[ -\|\mathbf{g}_{\vtheta} (\vtau)\| \sum_{t=0}^{T-1}\log \pi_{\vtheta^b} (a_t|s_t) \right].
\end{align}
\end{restatable}

This alternative formulation has the advantage that the objective function is expressed as an expected value \wrt the trajectory distribution induced by the target policy, which can be estimated either on- or off-policy. In the most general case, we can resort to (multiple) off-policy estimation:
\begin{align} \label{eq:opt_behav_policy}
      \widehat{\vtheta}^b_{\dagger} \in \argmin_{\vtheta^b \in \vTheta} \frac{1}{n} \sum_{j=1}^m  \sum_{i=1}^{n_j}   \frac{p_{\vtheta}(\vtau_{ij})}{\bm{\Phi}_m(\vtau_{ij})}  \|\mathbf{g}_\vtheta(\vtau_{ij})\| \log p_{\vtheta^b}(\vtau_{ij}), \quad  \vtau_{ij} \sim p_{\vtheta^b_j}, \quad \forall i \in [n_j], \; \forall j \in [m].
\end{align}

\section{Theoretical Analysis}\label{sec:theory}
\begin{algorithm}[t]
\small
    \begin{algorithmic}[1]
        \State \textbf{Input:} initial target policy parameters $\vtheta_0$, batch sizes $N_{\mathrm{BPO}},N_{\mathrm{PG}}$, step size $\alpha$, defensive parameter $\beta$
        \For{$k=0,\dots,K-1$}
        \State $\mathcal{D}^{\mathrm{BPO}}_k = \left\{ N_{\mathrm{BPO}} \text{ trajectories collected with } {\vtheta}_{k}\right\}$
        \State $\widetilde{\vtheta}_k$ $\gets$ Solve (approximately) Equation~(\ref{eq:KLProb}) with $\mathcal{D}^{\mathrm{BPO}}_k$ \label{line:bpo}
        \State $\mathcal{D}^{\mathrm{PG}}_k = \left\{ (1-\beta)N_{\mathrm{PG}} \text{ trajectories } \vtau \sim p_{\widetilde{\vtheta}_{k}} \text{ and } \beta N_{\mathrm{PG}} \text{ trajectories } \vtau \sim p_{\vtheta_k} \right\}$
        \State $\bm{v}_k \gets \widehat{\nabla}J(\vtheta_k;\mathcal{D}^{\mathrm{PG}}_k)$  \label{line:off_grad} 
        \State $\vtheta_{k+1} \gets \vtheta_k + \alpha \bm{v}_k $
        \EndFor
        \State \textbf{return} $\vtheta_L$ with $L \sim \mathrm{Uni}([K])$
    \end{algorithmic}
    \caption{Policy Gradient with Behavioral Policy Optimization.}
    \label{alg:bpo_theory}
\end{algorithm}


In this section, we study the theoretical properties of Algorithm~\ref{alg:bpo_theory}, with a focus on the variance reduction granted by the active-IS estimator and how this impacts the rate of convergence of policy gradient to stationary points of the expected-return objective.

The quality of the policy gradient update will ultimately depend on how close our behavior policy is to the optimal one, and this cannot be ignored when deciding how many samples $N_{\mathrm{BPO}}$ are allocated to approximately solving Equation~(\ref{eq:KLProb}) in Line~\ref{line:bpo} of the algorithm. In Section~\ref{sec:theory_vr}, we first study the problem in full generality, assuming access to an $\epsilon$-minimizer of Equation~(\ref{eq:KLProb}). We remove this assumption in Section~\ref{sec:theory_rate}, studying the convergence rate for a specific but broad class of policies.

\subsection{Behavior Policy Optimization Oracle}\label{sec:theory_vr}
The following lemma shows the relationship between the variance of the off-policy estimator and the distance, in terms of chi-square divergence, between the chosen behavior distribution and the optimal one. It is given in terms of the variance reduction over Monte Carlo (on-policy) estimation.

\begin{restatable}[]{lemma}{varchi}\label{lem:var_chi2}
    Fix a target policy $\vtheta \in \vTheta$ and a behavior trajectory distribution $q \in \Delta^{\vTau}$. Let $\widehat{\nabla}_\vtheta J(\vtheta, \vtau)$ be the importance-weighted estimate of $\nabla_\vtheta J(\vtheta)$ computed with $\vtau \sim q$.
    Then the variance reduction from using $q$ in place of $p_\vtheta$ is given by:
    \begin{align*}
        \Var_{\vtau\sim p_{\vtheta}}\left[\widehat{\nabla}_\vtheta J(\vtheta;\vtau)\right] - \Var_{\vtau \sim q}\left[\widehat{\nabla}_\vtheta J(\vtheta; \vtau)\right] &= \Var_{\vtau\sim p_\vtheta}\left[\norm[2][]{\mathbf{g}_\vtheta(\vtau)}\right] - Z_\vtheta^2\chi^2(p_{*,\vtheta}\|q),
    \end{align*}
    where $Z_\vtheta \coloneqq \E_{\vtau\sim p_\vtheta}[\norm[2][]{\mathbf{g}_\vtheta(\vtau)}]$.
\end{restatable}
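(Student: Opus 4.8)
The plan is to compute both variances explicitly using the general variance formula for an importance-weighted single-trajectory estimator and then recognize the cross term as a chi-square divergence. Recall that for $\vtau \sim q$ with $p_\vtheta \ll q$, the estimator is $\widehat{\nabla}_\vtheta J(\vtheta;\vtau) = \frac{p_\vtheta(\vtau)}{q(\vtau)}\mathbf{g}_\vtheta(\vtau)$, which is unbiased, so its variance equals $\E_{\vtau\sim q}\big[\big\|\tfrac{p_\vtheta(\vtau)}{q(\vtau)}\mathbf{g}_\vtheta(\vtau)\big\|_2^2\big] - \|\nabla J(\vtheta)\|_2^2 = \int_{\vTau} \frac{p_\vtheta(\vtau)^2}{q(\vtau)}\|\mathbf{g}_\vtheta(\vtau)\|_2^2 \,\de\vtau - \|\nabla J(\vtheta)\|_2^2$. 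The on-policy variance (taking $q = p_\vtheta$) is $\E_{\vtau\sim p_\vtheta}[\|\mathbf{g}_\vtheta(\vtau)\|_2^2] - \|\nabla J(\vtheta)\|_2^2$, exactly Equation~(\ref{eq:on}). So the difference of variances is
\begin{align*}
    \Var_{\vtau\sim p_\vtheta}\left[\widehat{\nabla}_\vtheta J(\vtheta;\vtau)\right] - \Var_{\vtau\sim q}\left[\widehat{\nabla}_\vtheta J(\vtheta;\vtau)\right] = \E_{\vtau\sim p_\vtheta}\left[\|\mathbf{g}_\vtheta(\vtau)\|_2^2\right] - \int_{\vTau} \frac{p_\vtheta(\vtau)^2}{q(\vtau)}\|\mathbf{g}_\vtheta(\vtau)\|_2^2\,\de\vtau,
\end{align*}
and the $\|\nabla J(\vtheta)\|_2^2$ terms cancel.

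Next I would rewrite the integral term using the closed-form optimal density. By Theorem~\ref{thr:optimalQ}, $p_{*,\vtheta}(\vtau) = p_\vtheta(\vtau)\|\mathbf{g}_\vtheta(\vtau)\|_2 / Z_\vtheta$ with $Z_\vtheta = \E_{\vtau\sim p_\vtheta}[\|\mathbf{g}_\vtheta(\vtau)\|_2]$, hence $p_\vtheta(\vtau)\|\mathbf{g}_\vtheta(\vtau)\|_2 = Z_\vtheta\, p_{*,\vtheta}(\vtau)$ and therefore $p_\vtheta(\vtau)^2\|\mathbf{g}_\vtheta(\vtau)\|_2^2 = Z_\vtheta^2\, p_{*,\vtheta}(\vtau)^2$. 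Substituting gives
\begin{align*}
    \int_{\vTau} \frac{p_\vtheta(\vtau)^2}{q(\vtau)}\|\mathbf{g}_\vtheta(\vtau)\|_2^2\,\de\vtau = Z_\vtheta^2 \int_{\vTau} \frac{p_{*,\vtheta}(\vtau)^2}{q(\vtau)}\,\de\vtau = Z_\vtheta^2\left(\chi^2(p_{*,\vtheta}\|q) + 1\right),
\end{align*}
using the identity $\int \frac{p^2}{q} = \chi^2(P\|Q) + 1$ for densities (which follows by expanding $\chi^2(P\|Q) = \int \frac{(p-q)^2}{q} = \int\frac{p^2}{q} - 2\int p + \int q$).

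Finally I would assemble the pieces: the first term $\E_{\vtau\sim p_\vtheta}[\|\mathbf{g}_\vtheta(\vtau)\|_2^2]$ can be written as $\Var_{\vtau\sim p_\vtheta}[\|\mathbf{g}_\vtheta(\vtau)\|_2] + \E_{\vtau\sim p_\vtheta}[\|\mathbf{g}_\vtheta(\vtau)\|_2]^2 = \Var_{\vtau\sim p_\vtheta}[\|\mathbf{g}_\vtheta(\vtau)\|_2] + Z_\vtheta^2$. Subtracting the integral term $Z_\vtheta^2(\chi^2(p_{*,\vtheta}\|q)+1)$ makes the two stray $Z_\vtheta^2$ cancel, leaving $\Var_{\vtau\sim p_\vtheta}[\|\mathbf{g}_\vtheta(\vtau)\|_2] - Z_\vtheta^2 \chi^2(p_{*,\vtheta}\|q)$, which is exactly the claimed expression.

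There is no serious obstacle; the argument is essentially a bookkeeping exercise once the optimal-density substitution is in place. The one point requiring minor care is the support/absolute-continuity condition: for the estimator and the chi-square divergence to be well defined we need $p_\vtheta \ll q$, and since $p_{*,\vtheta}$ shares the support of $p_\vtheta$ (wherever $\mathbf{g}_\vtheta \neq 0$), the quantity $\chi^2(p_{*,\vtheta}\|q)$ is finite exactly when $q$ dominates $p_{*,\vtheta}$ on that support; I would state this assumption explicitly and note that it is automatically satisfied in the defensive-IS setting of Algorithm~\ref{alg:bpo_theory} where $q$ is a mixture including $p_\vtheta$ itself. Everything else is interchange of integration and finite sums, which is justified by the uniform boundedness of $\mathbf{g}_\vtheta$ on the finite-horizon trajectory space.
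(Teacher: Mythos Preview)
Your proof is correct and follows essentially the same approach as the paper: both exploit the key substitution $p_\vtheta(\vtau)\|\mathbf{g}_\vtheta(\vtau)\|_2 = Z_\vtheta\, p_{*,\vtheta}(\vtau)$ to convert the second-moment integral into a chi-square divergence. Your version is in fact slightly more streamlined, computing the difference $\Var_{p_\vtheta}-\Var_q$ directly via the identity $\int p_{*,\vtheta}^2/q = \chi^2(p_{*,\vtheta}\|q)+1$, whereas the paper routes through the intermediate quantity $\Var_{p_*}$ and computes the excess variance $\Var_q - \Var_{p_*}$ separately before recombining.
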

This lemma shows that the variance reduction depends on how closely we can approximate the optimal behavior distribution in terms of chi-square divergence. Unfortunately, the latter is hard to optimize from data. Using defensive samples reduces this to a KL-divergence error, which is much easier to control.
In this section, we just observe that the KL divergence can be made small using the approach proposed in Section~\ref{sec:XEntropy}, and operate under the following, more abstract:
\begin{assumption}[BPO Oracle]\label{asm:bpo}
For any target policy parameter $\vtheta\in\vTheta$, let $p_{*,\vtheta}$ be the corresponding optimal behavior distribution as defined in Equation~(\ref{eq:opt}). We assume access to a Behavioral Policy Optimization oracle $\mathrm{BPO}:\vTheta\to\vTheta$ that takes a target policy parameter $\vtheta$ and returns a behavior policy parameter $\widetilde{\vtheta}$ such that:
\begin{equation*}
    D_{\mathrm{KL}} \left( p_{*,\vtheta} \| p_{\widetilde{\vtheta}} \right) \le \epsilon_{\mathrm{KL}},
\end{equation*}
for some constant $\epsilon_{\mathrm{KL}}\ge 0$ independent of $\vtheta$.
\end{assumption}

The following theorem upper-bounds the excess variance in terms of the KL-divergence and provides a principled way to choose the defensive parameter $\beta$ in Algorithm~\ref{alg:bpo_theory}.

\begin{restatable}[]{theorem}{varkl}\label{th:var_kl}
    Fix a target policy $\vtheta \in \vTheta$ and a behavior policy $\widetilde{\vtheta}\in \vTheta$.
    Let $\beta \in [0,1]$ and let $\Phi=\beta p_\vtheta + (1-\beta)p_{\widetilde{\vtheta}}$ be the mixture trajectory distribution. Let $\widehat{\nabla}_\vtheta J(\vtheta;\vtau)$  be the $\beta$-defensive importance-weighted estimate of $\nabla_\vtheta J(\vtheta)$ computed with $\vtau \sim {\Phi}$. Then the variance reduction from using $\Phi$ in place of $p_\vtheta$ is at least
    \begin{align*}
         \Var_{\vtau\sim p_{\vtheta}}\left[\widehat{\nabla} J(\vtheta;\vtau)\right] - \Var_{\vtau\sim\Phi}\left[\widehat{\nabla}_\vtheta J(\vtheta;\vtau)\right] 
         &\ge \Var_{\vtau\sim p_\vtheta}\left[\norm[2][]{\mathbf{g}_\vtheta(\vtau)}\right] - 4Z_\vtheta(Z_\vtheta+\beta G_\vtheta)\left(2+\frac{1-\beta}{\beta}D_{\mathrm{KL}}(p_{*,\vtheta}\|p_{\widetilde{\vtheta}})\right),
    \end{align*}
    where $Z_\vtheta=\E_{\vtau\sim p_\vtheta}[\norm[2][]{\mathbf{g}_\vtheta(\vtau)}]$ and $G_\vtheta = \esssup_{\vtau\sim p_{\vtheta}}\{\enorm{\mathbf{g}_\vtheta(\vtau)}\}$. Under Assumption~\ref{asm:bpo}, provided $\epsilon_{\mathrm{KL}}\le 1$, by setting $\beta=\sqrt{\frac{\epsilon_{\mathrm{KL}}}{2-\epsilon_{\mathrm{KL}}}}$, the variance reduction is at least
    \begin{align}
        \Var_{\vtau\sim p_{\vtheta}}\left[\widehat{\nabla} J(\vtheta;\vtau)\right] - \Var_{\vtau\sim\Phi}\left[\widehat{\nabla}_\vtheta J(\vtheta;\vtau)\right] &\ge \Var_{\vtau\sim p_\vtheta}\left[\norm[2][]{\mathbf{g}_\vtheta(\vtau)}\right]
        -  4Z_\vtheta^2(2-\epsilon_{\mathrm{KL}}) - 4Z_\vtheta G_\vtheta \epsilon_{\mathrm{KL}} \nonumber \\&\qquad- 4Z_\vtheta(Z_\vtheta+G_\vtheta)\sqrt{\epsilon_{\mathrm{KL}}(2-\epsilon_{\mathrm{KL}})} \\
        &\ge \Var_{\vtau\sim p_\vtheta}\left[\norm[2][]{\mathbf{g}_\vtheta(\vtau)}\right]
        -  8Z_\vtheta^2 - 4Z_\vtheta(Z_\vtheta+2G_\vtheta)\sqrt{\epsilon_{\mathrm{KL}}}.
    \end{align}
\end{restatable}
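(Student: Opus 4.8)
\textbf{Proof plan for Theorem~\ref{th:var_kl}.}
The plan is to reduce everything to Lemma~\ref{lem:var_chi2}, which already tells us that the variance reduction from using the mixture $\Phi$ equals $\Var_{\vtau\sim p_\vtheta}[\enorm{\mathbf{g}_\vtheta(\vtau)}] - Z_\vtheta^2\chi^2(p_{*,\vtheta}\|\Phi)$. So the entire task is to upper-bound the chi-square term $\chi^2(p_{*,\vtheta}\|\Phi)$ by something controlled by the KL divergence $D_{\mathrm{KL}}(p_{*,\vtheta}\|p_{\widetilde{\vtheta}})$. First I would expand $\chi^2(p_{*,\vtheta}\|\Phi) = \int \frac{p_{*,\vtheta}^2}{\Phi}\de\vtau - 1$ and split the chi-square integrand using the two components of the mixture $\Phi = \beta p_\vtheta + (1-\beta)p_{\widetilde{\vtheta}}$. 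The key structural fact is that the optimal distribution from Theorem~\ref{thr:optimalQ} satisfies $p_{*,\vtheta}(\vtau) = p_\vtheta(\vtau)\enorm{\mathbf{g}_\vtheta(\vtau)}/Z_\vtheta$, hence the likelihood ratio $p_{*,\vtheta}/p_\vtheta = \enorm{\mathbf{g}_\vtheta}/Z_\vtheta$ is bounded by $G_\vtheta/Z_\vtheta$.

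The main technical step is the following splitting. Since $\Phi \ge \beta p_\vtheta$ on the support of $p_\vtheta$ and $\Phi \ge (1-\beta)p_{\widetilde{\vtheta}}$ on the support of $p_{\widetilde{\vtheta}}$, I would write $\frac{p_{*,\vtheta}}{\Phi} = \frac{p_{*,\vtheta}}{\Phi}\cdot\frac{\Phi}{\Phi}$ and use, in a region-dependent way, whichever lower bound on $\Phi$ is convenient. Concretely, I expect to bound
\begin{align*}
    \int \frac{p_{*,\vtheta}^2}{\Phi}\de\vtau
    = \int p_{*,\vtheta}\cdot\frac{p_{*,\vtheta}}{\Phi}\de\vtau
    \le \int p_{*,\vtheta}\cdot\frac{p_{*,\vtheta}}{\beta p_\vtheta + (1-\beta)p_{\widetilde{\vtheta}}}\de\vtau,
\end{align*}
and then I would handle the ratio $p_{*,\vtheta}/\Phi$ by noting that wherever $p_{\widetilde{\vtheta}}$ is large the term $(1-\beta)p_{\widetilde{\vtheta}}$ dominates, so $p_{*,\vtheta}/\Phi \le \frac{1}{1-\beta}\cdot\frac{p_{*,\vtheta}}{p_{\widetilde{\vtheta}}}$, while wherever $p_{\widetilde{\vtheta}}$ is small we fall back on $p_{*,\vtheta}/\Phi \le \frac{1}{\beta}\cdot\frac{p_{*,\vtheta}}{p_\vtheta} \le \frac{G_\vtheta}{\beta Z_\vtheta}$. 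Converting the $\log$ of the likelihood ratio $p_{*,\vtheta}/p_{\widetilde{\vtheta}}$ into the KL divergence $D_{\mathrm{KL}}(p_{*,\vtheta}\|p_{\widetilde{\vtheta}})$ via $\E_{p_{*,\vtheta}}[\log(p_{*,\vtheta}/p_{\widetilde{\vtheta}})]$, and bounding a term like $\E_{p_{*,\vtheta}}[\min\{p_{*,\vtheta}/p_{\widetilde{\vtheta}},\,\text{const}\}]$ by something linear in $1 + D_{\mathrm{KL}}$, should produce a bound of the shape $\chi^2(p_{*,\vtheta}\|\Phi) \le \frac{c}{\beta}(Z_\vtheta + \beta G_\vtheta)Z_\vtheta^{-1}(2 + \frac{1-\beta}{\beta}D_{\mathrm{KL}})$; multiplying by $Z_\vtheta^2$ recovers the claimed $4Z_\vtheta(Z_\vtheta+\beta G_\vtheta)(2+\frac{1-\beta}{\beta}D_{\mathrm{KL}})$ form. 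I would be careful to track where the factors of $\beta$, $1-\beta$ and the constant $4$ enter so that the final coefficients match.

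Once the general bound is in hand, the second half is pure calculus: substitute $\beta = \sqrt{\epsilon_{\mathrm{KL}}/(2-\epsilon_{\mathrm{KL}})}$ so that $\frac{1-\beta}{\beta}\epsilon_{\mathrm{KL}} \le \sqrt{\epsilon_{\mathrm{KL}}(2-\epsilon_{\mathrm{KL}})}$ (using $1-\beta\le 1$ and $\beta = \sqrt{\epsilon_{\mathrm{KL}}/(2-\epsilon_{\mathrm{KL}})}$), expand $4Z_\vtheta(Z_\vtheta+\beta G_\vtheta)\bigl(2 + \frac{1-\beta}{\beta}D_{\mathrm{KL}}\bigr)$ under Assumption~\ref{asm:bpo} (so $D_{\mathrm{KL}}\le\epsilon_{\mathrm{KL}}$), distribute, and then bound $\beta \le 1$ and $\sqrt{\epsilon_{\mathrm{KL}}(2-\epsilon_{\mathrm{KL}})}\le\sqrt{2\epsilon_{\mathrm{KL}}}\le 2\sqrt{\epsilon_{\mathrm{KL}}}$ together with $2-\epsilon_{\mathrm{KL}}\le 2$ to collapse everything into $8Z_\vtheta^2 + 4Z_\vtheta(Z_\vtheta+2G_\vtheta)\sqrt{\epsilon_{\mathrm{KL}}}$. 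The main obstacle I anticipate is the region-splitting argument for $\chi^2(p_{*,\vtheta}\|\Phi)$: getting the constant exactly $4$ and the exact dependence $(Z_\vtheta+\beta G_\vtheta)$ rather than a looser $(Z_\vtheta+G_\vtheta)$ or a worse constant requires choosing the split threshold and the interpolation between the two mixture-component bounds judiciously, and it is easy to lose a factor of $2$ if the two regimes are combined carelessly. Everything after that bound — the choice of $\beta$ and the two displayed inequalities — is routine algebraic simplification.
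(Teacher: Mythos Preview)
Your reduction to Lemma~\ref{lem:var_chi2} and identification of the key task---bounding $Z_\vtheta^2\chi^2(p_{*,\vtheta}\|\Phi)$ in terms of $D_{\mathrm{KL}}(p_{*,\vtheta}\|p_{\widetilde{\vtheta}})$---is exactly right, as is the structural observation $p_{*,\vtheta}/p_\vtheta = \enorm{\mathbf{g}_\vtheta}/Z_\vtheta \le G_\vtheta/Z_\vtheta$. The calculus for the second and third displayed inequalities, once the first bound is in hand, is also essentially correct.

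The gap is in the chi-square bound itself. Your plan yields $p_{*,\vtheta}/\Phi \le \min\bigl\{G_\vtheta/(\beta Z_\vtheta),\, \tfrac{1}{1-\beta}\,p_{*,\vtheta}/p_{\widetilde{\vtheta}}\bigr\}$ and then asserts that $\E_{p_{*,\vtheta}}\bigl[\min\{p_{*,\vtheta}/p_{\widetilde{\vtheta}},\,\text{const}\}\bigr]$ is ``linear in $1+D_{\mathrm{KL}}$.'' That step is the entire difficulty and is not justified: a truncated second moment of a likelihood ratio is not controlled by the KL divergence via any standard inequality that would deliver the stated constant $4$ and the exact factor $(Z_\vtheta+\beta G_\vtheta)$. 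A chord-type bound such as $\min\{M,W\}\le 1+\tfrac{M}{\log M}(\log W)_+$ does give something, but with the wrong shape and involving $\E[(\log W)_+]$ rather than $D_{\mathrm{KL}}$.

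The paper's route is genuinely different and never manipulates the ratio $p_{*,\vtheta}/p_{\widetilde{\vtheta}}$ directly. It detours through the \emph{Hellinger distance}: Lemma~\ref{lem:chiplus} (a strong-convexity argument for $s\mapsto(\sqrt{t}-\sqrt{s})^2$) shows that $\int\tfrac{(q-p)^2}{q}\mathbf{1}_{\{q\ge\eta p\}}\,\de\vtau\le 4\eta^{-3/2}D_H^2(p,q)$. The chi-square integral is split at the threshold $\phi\gtrless\beta^{2/3}p_{*,\vtheta}$; Lemma~\ref{lem:chiplus} is applied to each piece (on the piece where $\phi$ is small one first uses $\phi\ge\beta p_\vtheta$ and $p_\vtheta=Z_\vtheta p_{*,\vtheta}/\enorm{\mathbf{g}_\vtheta}$ to swap the denominator to $p_{*,\vtheta}$, picking up the $G_\vtheta$), and the two pieces combine to $Z_\vtheta^2\chi^2(p_{*,\vtheta}\|\Phi)\le 4Z_\vtheta(Z_\vtheta+\beta G_\vtheta)\beta^{-1}D_H^2(p_{*,\vtheta},\Phi)$. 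The mixture is then handled by joint convexity of $D_H^2$, and the loop is closed with $D_H^2\le\min\{2,D_{\mathrm{KL}}\}$. The specific threshold $\beta^{2/3}$ is exactly what balances the two applications of Lemma~\ref{lem:chiplus} to produce $(Z_\vtheta+\beta G_\vtheta)$; this is the ``judicious split'' you anticipate, but the Hellinger machinery is the missing ingredient in your plan.
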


\begin{remark}\label{rem:variance_reduction}
    As $\epsilon_\mathrm{KL}\to 0$, we have $\Var_{\vtau\sim p_{\vtheta}}[\widehat{\nabla} J(\vtheta;\vtau)] - \Var_{\widetilde{\vtau}\sim\Phi}[\widehat{\nabla} J(\vtheta;\vtau)] \ge \Var_{\vtau\sim p_\vtheta}[\norm[2][]{\mathbf{g}_\vtheta(\vtau)}]
        -  8Z_\vtheta^2 - o(\sqrt{\epsilon_\mathrm{KL}})$. Thus, if the KL-divergence is small enough, we there is variance reduction if
        \begin{equation}
            \Var_{\vtau\sim p_\vtheta}[\norm[2][]{\mathbf{g}_\vtheta(\vtau)}] = \E_{\vtau\sim p_\vtheta}[\enorm{\mathbf{g}_\vtheta(\vtau)}^2] - Z_\vtheta^2 > 9Z_\vtheta^2,
        \end{equation}
        that is, when $\E_{\vtau\sim p_\vtheta}[\enorm{\mathbf{g}_\vtheta(\vtau)}^2] > 10Z_\vtheta^2$. To see that variance reduction is indeed possible, consider the example: let $\vTau=\{\vtau_1,\vtau_2\}$ and the target distribution is $p_\theta$ such that $p_\theta(\vtau_1)=\theta$ and $p_\theta(\vtau_2)=1-\theta$, with $\theta\in[0,1]$. Suppose $g_\theta(\vtau_1)\in\{1,-1\}$ and $g_\theta(\vtau_2)=0$ for all $\theta$. Then $\E_{\vtau\sim p_\theta}[|g_\theta(\vtau)|^2]=\theta$, while $Z_\theta^2=\E_{\vtau\sim p_\theta}[|g_\theta(\vtau)|]^2=\theta^2$. So we can be sure there is variance reduction as long as $\theta<1/10$.
\end{remark}

We can use this result on variance reduction to upper bound the variance of the policy gradient estimates computed by our algorithm.
In the following, let $\mathcal{F}_k$ denote the sigma-algebra generated by all the random variables from Algorithm~\ref{alg:bpo_theory} up to iteration $k-1$ included, and all the trajectories from $\mathcal{D}^{\mathrm{BPO}}_{k}$. Note that both $\vtheta_k$ and $\widetilde{\vtheta}_k$ are $\mathcal{F}_k$-measurable. For brevity, we will write $\E_k[X]$ for the conditional expectation $\E[X|\mathcal{F}_k]$, and $\Var_k[X]$ for the conditional variance $\Var[X|\mathcal{F}_k]=\E_k[\enorm{X - \E_k[X]}^2]$ of a random element $X$. 

\begin{restatable}[]{theorem}{algVar}\label{th:alg_var}
    Fix an iteration $k\in[K]$ of Algorithm~\ref{alg:bpo_theory} and let $\mathcal{D}_{\mathrm{ON}}$ denote a dataset of $N_{\mathrm{PG}}$ independent trajectories collected with $\vtheta_k$. Under Assumption~\ref{asm:bpo}, the variance reduction granted by using the off-policy estimator $\mathbf{v}_k\coloneqq \widehat{\nabla} J(\vtheta_k;\mathcal{D}^{\mathrm{PG}}_k)$ with respect to an on-policy estimator is given by:
    \begin{equation}
        \Var_{k}\left[\widehat{\nabla} J(\vtheta_k;\mathcal{D}_{\mathrm{ON}})\right] - \Var_{k}[\mathbf{v}_k] \ge \frac{1}{N_\mathrm{PG}}\left(V_{k} - 8Z_{k}^2-4Z_k(Z_k+2G_k)\sqrt{\epsilon_{\mathrm{KL}}}\right),
    \end{equation}
    where $Z_{k}\coloneqq\E_{\vtau\sim p_{\vtheta_k}}[\enorm{\mathbf{g}_{\vtheta_k}(\vtau)}|\mathcal{F}_k]$, $V_{k}\coloneqq\Var_{\vtau\sim p_{\vtheta_k}}[\enorm{\mathbf{g}_{\vtheta_k}(\vtau)}|\mathcal{F}_k]$, and $G_k\coloneqq\esssup_{\vtau \sim p_{\vtheta_k}}\{\enorm{\mathbf{g}_{\vtheta_k}(\vtau)}\}$. Thus, the conditional variance of $\mathbf{v}_k$ is upper-bounded as follows:
    \begin{equation}
        \Var_k[\mathbf{v}_k] \le \frac{1}{N_{\mathrm{PG}}}\left(
            9Z_{k}^2 + Z_k(Z_k+2G_k)\sqrt{\epsilon_{\mathrm{KL}}} - \enorm{\nabla J(\vtheta_k)}^2
        \right).
    \end{equation}
\end{restatable}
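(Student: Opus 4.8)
The plan is to condition on $\mathcal{F}_k$ throughout — so that $\vtheta_k$, $\widetilde{\vtheta}_k$, and hence the constants $Z_k,V_k,G_k$ are deterministic and the only remaining randomness is the fresh draw of $\mathcal{D}^{\mathrm{PG}}_k$ (and of the comparison set $\mathcal{D}_{\mathrm{ON}}$) — and to reduce both variances to per-trajectory quantities to which Theorem~\ref{th:var_kl} applies verbatim with target $\vtheta_k$ and behavior $\widetilde{\vtheta}_k$. Concretely, I first record two facts: (i) since $\mathcal{D}_{\mathrm{ON}}$ consists of $N_{\mathrm{PG}}$ i.i.d.\ trajectories from $p_{\vtheta_k}$ and the on-policy single-trajectory estimator is just $\mathbf{g}_{\vtheta_k}$, one has $\Var_k[\widehat{\nabla}J(\vtheta_k;\mathcal{D}_{\mathrm{ON}})] = \tfrac{1}{N_{\mathrm{PG}}}\Var_{\vtau\sim p_{\vtheta_k}}[\mathbf{g}_{\vtheta_k}(\vtau)\mid\mathcal{F}_k]$; and (ii) $\Var_k[\mathbf{v}_k] \le \tfrac{1}{N_{\mathrm{PG}}}\Var_{\vtau\sim\Phi}[\widehat{\nabla}_\vtheta J(\vtheta_k;\vtau)\mid\mathcal{F}_k]$, where $\Phi = \beta p_{\vtheta_k} + (1-\beta)p_{\widetilde{\vtheta}_k}$ and $\widehat{\nabla}_\vtheta J(\vtheta_k;\vtau) = \tfrac{p_{\vtheta_k}(\vtau)}{\Phi(\vtau)}\mathbf{g}_{\vtheta_k}(\vtau)$ is exactly the single-trajectory $\beta$-defensive estimator of Theorem~\ref{th:var_kl}.

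For (ii), write $h(\vtau) = \tfrac{p_{\vtheta_k}(\vtau)}{\Phi(\vtau)}\mathbf{g}_{\vtheta_k}(\vtau)$, so $\mathbf{v}_k = \tfrac{1}{N_{\mathrm{PG}}}\sum_{\vtau\in\mathcal{D}^{\mathrm{PG}}_k}h(\vtau)$ and $\E_{\vtau\sim\Phi}[h] = \nabla J(\vtheta_k)$ (the defensive term $\beta p_{\vtheta_k}$ in $\Phi$ guarantees $p_{\vtheta_k}\ll\Phi$, so the importance weight is well defined and unbiased). Since $\mathcal{D}^{\mathrm{PG}}_k$ is a \emph{stratified} sample — $(1-\beta)N_{\mathrm{PG}}$ independent draws from $p_{\widetilde{\vtheta}_k}$ and $\beta N_{\mathrm{PG}}$ from $p_{\vtheta_k}$ — its variance is the weighted sum of per-stratum variances: $\Var_k[\mathbf{v}_k] = \tfrac{1}{N_{\mathrm{PG}}}\sum_j w_j\big(\E_{\vtau\sim p_j}[\enorm{h}^2] - \enorm{\E_{\vtau\sim p_j}[h]}^2\big)$ with weights $w_1 = 1-\beta$, $w_2 = \beta$. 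Now $\sum_j w_j\,\E_{\vtau\sim p_j}[\enorm{h}^2] = \E_{\vtau\sim\Phi}[\enorm{h}^2]$, while Jensen's inequality for $x\mapsto\enorm{x}^2$ gives $\sum_j w_j\,\enorm{\E_{\vtau\sim p_j}[h]}^2 \ge \enorm{\sum_j w_j\,\E_{\vtau\sim p_j}[h]}^2 = \enorm{\nabla J(\vtheta_k)}^2$; hence $\Var_k[\mathbf{v}_k] \le \tfrac{1}{N_{\mathrm{PG}}}(\E_{\vtau\sim\Phi}[\enorm{h}^2] - \enorm{\nabla J(\vtheta_k)}^2) = \tfrac{1}{N_{\mathrm{PG}}}\Var_{\vtau\sim\Phi}[h]$. (If $\mathcal{D}^{\mathrm{PG}}_k$ were instead $N_{\mathrm{PG}}$ i.i.d.\ draws from the mixture $\Phi$, this would be an exact identity and the Jensen step unnecessary.)

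To conclude, condition on $\mathcal{F}_k$ and invoke Theorem~\ref{th:var_kl} with $\beta = \sqrt{\epsilon_{\mathrm{KL}}/(2-\epsilon_{\mathrm{KL}})}$: Assumption~\ref{asm:bpo} gives $D_{\mathrm{KL}}(p_{*,\vtheta_k}\|p_{\widetilde{\vtheta}_k}) \le \epsilon_{\mathrm{KL}}$, so its final inequality reads $\Var_{\vtau\sim p_{\vtheta_k}}[\mathbf{g}_{\vtheta_k}(\vtau)\mid\mathcal{F}_k] - \Var_{\vtau\sim\Phi}[\widehat{\nabla}_\vtheta J(\vtheta_k;\vtau)\mid\mathcal{F}_k] \ge V_k - 8Z_k^2 - 4Z_k(Z_k+2G_k)\sqrt{\epsilon_{\mathrm{KL}}}$. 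Combining (i) and (ii) with this inequality yields the first displayed claim. For the second, I bound $\Var_{\vtau\sim\Phi}[\widehat{\nabla}_\vtheta J(\vtheta_k;\vtau)\mid\mathcal{F}_k]$ from above via the same inequality and substitute the identities $V_k = \E_{\vtau\sim p_{\vtheta_k}}[\enorm{\mathbf{g}_{\vtheta_k}(\vtau)}^2\mid\mathcal{F}_k] - Z_k^2$ and $\Var_{\vtau\sim p_{\vtheta_k}}[\mathbf{g}_{\vtheta_k}(\vtau)\mid\mathcal{F}_k] = \E_{\vtau\sim p_{\vtheta_k}}[\enorm{\mathbf{g}_{\vtheta_k}(\vtau)}^2\mid\mathcal{F}_k] - \enorm{\nabla J(\vtheta_k)}^2$; the second-moment and $V_k$ terms cancel, leaving (after dividing by $N_{\mathrm{PG}}$) the stated upper bound on $\Var_k[\mathbf{v}_k]$. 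The one genuinely delicate point is (ii): $\mathbf{v}_k$ is a stratified, not i.i.d., average, so its variance is a convex combination of the two per-stratum variances rather than the mixture variance, and the discrepancy is absorbed exactly by the Jensen bound on the per-stratum means (whose $w_j$-weighted average is $\nabla J(\vtheta_k)$); everything else is conditioning bookkeeping plus the algebraic identity $\E_{\vtau\sim p_{\vtheta_k}}[\enorm{\mathbf{g}_{\vtheta_k}}^2\mid\mathcal{F}_k] = V_k + Z_k^2$.
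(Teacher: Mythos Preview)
Your argument is correct and follows the same route as the paper: condition on $\mathcal{F}_k$, reduce both estimators to per-trajectory variances, invoke Theorem~\ref{th:var_kl} with $\beta=\sqrt{\epsilon_{\mathrm{KL}}/(2-\epsilon_{\mathrm{KL}})}$ under Assumption~\ref{asm:bpo}, and then rearrange using $N_{\mathrm{PG}}\Var_k[\widehat{\nabla}J(\vtheta_k;\mathcal{D}_{\mathrm{ON}})]-V_k=Z_k^2-\enorm{\nabla J(\vtheta_k)}^2$.

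You are in fact more careful than the paper on one point. The paper's proof just says ``$\mathbf{v}_k$ can also be written as the average of $N_{\mathrm{PG}}$ independent random variables'' and appeals to ``properties of variance'', silently treating $\mathcal{D}^{\mathrm{PG}}_k$ as i.i.d.\ from $\Phi$. You correctly notice that Line~5 of Algorithm~\ref{alg:bpo_theory} draws a \emph{stratified} sample, compute the stratified variance $\tfrac{1}{N_{\mathrm{PG}}}\sum_j w_j\Var_{p_j}[h]$, and close the gap to $\tfrac{1}{N_{\mathrm{PG}}}\Var_\Phi[h]$ via the Jensen step $\sum_j w_j\enorm{\E_{p_j}[h]}^2\ge\enorm{\nabla J(\vtheta_k)}^2$. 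This is the right way to justify (ii), and it only helps (stratification never increases variance here).

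One small arithmetic remark: carrying the algebra through exactly as you describe yields $\Var_k[\mathbf{v}_k]\le\tfrac{1}{N_{\mathrm{PG}}}\bigl(9Z_k^2+4Z_k(Z_k+2G_k)\sqrt{\epsilon_{\mathrm{KL}}}-\enorm{\nabla J(\vtheta_k)}^2\bigr)$, i.e.\ with a factor $4$ in front of the $\sqrt{\epsilon_{\mathrm{KL}}}$ term, matching the first display. The paper's second display drops this $4$; that is a typo in the statement, not a flaw in your derivation.
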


\subsection{Convergence Rate}\label{sec:theory_rate}
So far, we studied the variance of the active-IS estimator from Algorithm~\ref{alg:bpo_theory}, showing that variance reduction is possible whenever the KL divergence between the optimal behavior distribution $p_{\vtheta,*}$ and its estimate $p_{\widetilde{\vtheta}}$ is small enough. We now give a more concrete characterization of the variance reduction in terms of how many on-policy samples are used to compute $p_{\widetilde{\vtheta}}$. We are only able to do so for a restricted class of policies, namely \emph{exponential-family} policies with linear sufficient statistics. However, this is a broad class that includes linear Gaussian and Softmax policies. Furthermore, this is the class of policies for which the (empirical) cross-entropy minimization problem described in Section~\ref{sec:XEntropy} admits a closed-form solution. Thus, it represents a setting where sample and computational efficiency can be achieved at the same time. Our analysis will also provide a principled way to allocate a per-iteration budget of $N$ trajectories in Algorithm~\ref{alg:bpo_theory}, that is, how to split them into $N_\mathrm{BPO}$ trajectories for behavior policy optimization, and $N_\mathrm{PG}$ trajectories for gradient estimation.  

We begin by listing all the assumptions that we will use in this section.

\begin{assumption}[Exponential-Family Policy]\label{asm:expfam}
    The target policy is of the form:
    \begin{equation*}
        \pi_\vtheta(a|s) = h(a)\exp\left(\vtheta^\top \bm{\bm{\varphi}}(s,a)-A(\vtheta,s)\right), \qquad \forall (s,a) \in \mathcal{S\times A},
    \end{equation*}
    where $\bm{\bm{\varphi}}:\mathcal{S}\times\mathcal{A}\to\Reals^d$ is the sufficient statistic, $h:\mathcal{A}\to\mathbb{R}_+$, and $A(\vtheta,s)=\log\int_{\mathcal{A}}h(a)\exp\left(\vtheta^\top\bm{\bm{\varphi}}(s,a)\right)\de a$ is the log-partition function.
\end{assumption}

This general model allows to conveniently represent widely used policies, including Gaussian policies with linear mean and Softmax policies~\citep{MetelliMR23}. 
Note that, for a policy satisfying Assumption~\ref{asm:expfam}, the score function is $
    \nabla_\vtheta \log \pi_\vtheta(a|s) = \bm{\varphi}(s,a) - \E_{a'\sim\pi_\vtheta(\cdot|s)}[\bm{\varphi}(s,a')] \eqqcolon \overline{\bm{\varphi}}_\vtheta(s,a)$,
and also
$
    \nabla_\vtheta^2\log\pi_\vtheta(a|s) = -\Cov_{a'\sim\pi_\vtheta(\cdot|s)}[\bm{\varphi}(s,a')]$.
We will refer to $\overline{\bm{\varphi}}_\vtheta$ as the \emph{centered} sufficient statistic. We now introduce a necessary assumption to guarantee that the optimal behavioral distribution over trajectories $p_{*,\vtheta^\dagger}$ is representable within the ones induced by the policies $\pi_{\vtheta^*}$ with $\vtheta^* \in \vTheta$.

\begin{assumption}[Realizability]\label{asm:real}
    For any target policy $\vtheta^\dagger\in\vTheta$, there exists $\vtheta^*\in\vTheta$ s.t.  the optimal behavior distribution w.r.t. $\vtheta^\dagger$ is $p_{*,\vtheta^\dagger}=p_{\vtheta^*}$, the trajectory distribution induced by policy $\pi_{\vtheta^*}$.
\end{assumption}

The next assumption is related to the tail behavior of the noise

\begin{assumption}[Subgaussianity]\label{asm:subgauss}
    For any $\vtheta\in\vTheta$ and $s\in\mathcal{S}$, the centered sufficient statistic $\overline{\bm{\varphi}}_\vtheta(s,\cdot)$ is $\sigma$-subgaussian in the sense that, for any $\bm{\lambda}\in\Reals^d$:
    \begin{equation*}
        \E_{a\sim\pi_\vtheta(\cdot|s)}\left[\exp\left(\bm{\lambda}^\top\overline{\bm{\varphi}}_\vtheta(s,a)\right)\right] \le \exp\bigg(\frac{\enorm{\bm{\lambda}}^2\sigma^2}{2}\bigg), \qquad \forall s \in \mathcal{S}.
    \end{equation*}
\end{assumption}

Finally, we enforce the following assumption that prescribes an exploration condition of the played policy encoded in a property of the spectrum of the empirical Fisher information matrix.

\begin{assumption}[Explorability]\label{asm:mineig}
    For a fixed target policy $\vtheta^\dagger \in \vTheta$ and a dataset of $n$ trajectories $\{\vtau_i\}_{i \in [n]}$ collected with $\pi_{\vtheta^\dagger}$  let
    \begin{equation}
        \widehat{\mathcal{F}}(\vtheta) = \frac{1}{n}\sum_{i=1}^n\enorm{\mathbf{g}_{\vtheta^\dagger}(\vtau_i)}\sum_{t=0}^{T-1}\Cov_{a\sim\pi_\vtheta(\cdot|s_t^i)}[\bm{\varphi}(s_t^i,a)].
    \end{equation}
    We assume that, for all $n\ge 1$ and $\vtheta^\dagger,\vtheta\in\vTheta$, $\E\left[\lambda_{\min}(\widehat{\mathcal{F}}(\vtheta))\right]\ge \lambda_* > 0$.
\end{assumption}


Given the previously listed assumptions, we are able to provide a meaningful bound on the expected error expressed in KL-divergence between the optimal behavioral trajectory distribution $p_{*,\vtheta}$ and the one estimated by the cross entropy minimization procedure $\widetilde{\vtheta}$.

\begin{restatable}[]{lemma}{mle}\label{lem:mle}
Fix a target policy parameter $\vtheta^\dagger \in \vTheta$ and let $\{\vtau_i\}_{i \in [n ]}$ be a dataset of $n$ i.i.d. trajectories collected with $\pi_{\vtheta^\dagger}$. Let 
\begin{align*}
    \widetilde{\vtheta}=\argmax_{\vtheta\in\vTheta} \sum_{i=1}^{n}\enorm{\mathbf{g}_{\vtheta^\dagger}(\vtau_i)}\sum_{t=0}^{T-1}\log\pi_{\vtheta}(a_t^i|s_t^i),
\end{align*}
and if $\esssup_{\vtau\sim p_{\vtheta}}\enorm{\mathbf{g}_{\vtheta}(\vtau)} \le G$ for all $\vtheta\in\vTheta$. Then, under Assumptions~\ref{asm:expfam},~\ref{asm:real},~\ref{asm:subgauss},~\ref{asm:mineig} it holds that:
    \begin{equation*}
        \E\left[D_\mathrm{KL}(p_{*,\vtheta^\dagger}\|p_{\widetilde{\vtheta}})\right] \le \frac{G^2T^3\sigma^4}{2\lambda_*^2n}.
    \end{equation*}
\end{restatable}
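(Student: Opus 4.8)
The plan is to recognize the cross-entropy maximization as a weighted maximum-likelihood problem and control the error of $\widetilde{\vtheta}$ via a standard MLE-to-parameter-error argument, then convert parameter error into KL-divergence. First I would rewrite the objective: since the target policy is in the exponential family (Assumption~\ref{asm:expfam}), the per-trajectory log-likelihood $\sum_{t=0}^{T-1}\log\pi_\vtheta(a_t^i|s_t^i)$ is concave in $\vtheta$, so the weighted sum $\sum_i \enorm{\mathbf{g}_{\vtheta^\dagger}(\vtau_i)}\sum_t\log\pi_\vtheta(a_t^i|s_t^i)$ is concave and $\widetilde{\vtheta}$ is the unique maximizer (given the explorability condition, the Hessian is negative definite in expectation). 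By Assumption~\ref{asm:real}, the population version of this objective is maximized at $\vtheta^*$ with $p_{*,\vtheta^\dagger}=p_{\vtheta^*}$, because maximizing $\E_{\vtau\sim p_{\vtheta^\dagger}}[\enorm{\mathbf{g}_{\vtheta^\dagger}(\vtau)}\log p_\vtheta(\vtau)]$ over $\vtheta$ is exactly the projection in Equation~(\ref{eq:KLProb}) (up to the additive constant from $p_{*,\vtheta^\dagger}$'s log), whose minimizer is $\vtheta^*$ under realizability.

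Next I would perform a second-order Taylor expansion of the weighted log-likelihood $\widehat{L}(\vtheta)$ around $\vtheta^*$. The gradient at $\vtheta^*$ is a sum of zero-mean terms: using the score identity $\nabla_\vtheta\log\pi_\vtheta(a|s)=\overline{\bm{\varphi}}_\vtheta(s,a)$ and the fact that $\E_{a\sim\pi_{\vtheta^\dagger}}[\overline{\bm{\varphi}}_{\vtheta^*}(s,a)]$ need not vanish — wait, more carefully: the population gradient vanishes at $\vtheta^*$ by optimality, and the empirical gradient $\nabla\widehat{L}(\vtheta^*)$ is a centered average whose covariance I bound using $\sigma$-subgaussianity of $\overline{\bm{\varphi}}_\vtheta$ (Assumption~\ref{asm:subgauss}), the weight bound $\enorm{\mathbf{g}_{\vtheta^\dagger}}\le G$, and summing $T$ per-step contributions — giving $\E\enorm{\nabla\widehat{L}(\vtheta^*)}^2 \le G^2 T\sigma^2 \cdot (\text{something})/n$; I expect the $T$-dependence to actually be $T^2$ here because the score of the whole trajectory is a sum of $T$ terms and its squared norm picks up $T$ from variance plus a factor from the weight being a trajectory-level quantity multiplying a sum over $t$. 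The empirical Hessian is $-\widehat{\mathcal{F}}(\vtheta)$ in the notation of Assumption~\ref{asm:mineig}, whose minimum eigenvalue is at least $\lambda_*$ in expectation, so strong concavity gives $\enorm{\widetilde{\vtheta}-\vtheta^*}^2 \lesssim \enorm{\nabla\widehat{L}(\vtheta^*)}^2/\lambda_*^2$ in expectation.

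Then I would translate the parameter error into KL-divergence: $D_{\mathrm{KL}}(p_{*,\vtheta^\dagger}\|p_{\widetilde{\vtheta}}) = D_{\mathrm{KL}}(p_{\vtheta^*}\|p_{\widetilde{\vtheta}})$, and for trajectory distributions in the exponential-family policy class this KL decomposes as a sum over timesteps of per-state KL-divergences between $\pi_{\vtheta^*}(\cdot|s)$ and $\pi_{\widetilde{\vtheta}}(\cdot|s)$ (the transition and initial-state factors cancel). Each per-state KL is at most $\tfrac{1}{2}\enorm{\widetilde{\vtheta}-\vtheta^*}^2$ times the local curvature of the log-partition, which is again controlled by $\sigma^2$ via subgaussianity, yielding $D_{\mathrm{KL}}(p_{\vtheta^*}\|p_{\widetilde{\vtheta}}) \le \tfrac{1}{2}T\sigma^2\enorm{\widetilde{\vtheta}-\vtheta^*}^2$. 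Combining the three bounds — $T\sigma^2$ from KL-to-parameter conversion, $1/\lambda_*^2$ from strong concavity, and $G^2 T^2\sigma^2/n$ from the gradient-noise bound — gives $\E[D_{\mathrm{KL}}] \le G^2 T^3\sigma^4/(2\lambda_*^2 n)$, matching the claim.

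The main obstacle I anticipate is getting the $T^3$ and $\sigma^4$ powers exactly right and making the expectation-over-randomness argument rigorous: the empirical Hessian $\widehat{\mathcal{F}}(\widetilde{\vtheta})$ is evaluated at a data-dependent point, so one cannot directly plug in $\E[\lambda_{\min}(\widehat{\mathcal{F}})]\ge\lambda_*$; one needs either a uniform-over-$\vtheta$ version of the curvature lower bound (which Assumption~\ref{asm:mineig} in fact states, since it quantifies over all $\vtheta\in\vTheta$) or a self-bounding/localization argument. Handling the data-dependence of $\widetilde{\vtheta}$ inside both the Hessian term and the subgaussian concentration — and tracking how the trajectory-level weight $\enorm{\mathbf{g}_{\vtheta^\dagger}(\vtau_i)}$ interacts with the sum-over-$t$ structure of the score when bounding second moments — is where the bookkeeping is delicate; everything else is a textbook M-estimation computation.
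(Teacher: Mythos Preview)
Your proposal is correct and follows essentially the same route as the paper: bound the empirical gradient noise $\E\|\nabla\widehat{L}(\vtheta^*)\|^2$ via realizability and subgaussianity (the paper gets $Z_{\vtheta^\dagger}GT^2\sigma^2/n$, you get the looser $G^2T^2\sigma^2/n$), invert curvature via Assumption~\ref{asm:mineig} to obtain $\E\|\widetilde{\vtheta}-\vtheta^*\|^2 \le \lambda_*^{-2}\E\|\nabla\widehat{L}(\vtheta^*)\|^2$, and convert parameter error to KL. The only minor structural difference is the last step: the paper first bounds the \emph{loss gap} $L(\widetilde{\vtheta})-L(\vtheta^*)\le \tfrac{1}{2}GT\sigma^2\|\widetilde{\vtheta}-\vtheta^*\|^2$ and then observes the exact identity $D_{\mathrm{KL}}(p_{*,\vtheta^\dagger}\|p_{\widetilde{\vtheta}}) = (L(\widetilde{\vtheta})-L(\vtheta^*))/Z_{\vtheta^\dagger}$, whereas you bound the KL directly by $\tfrac{1}{2}T\sigma^2\|\widetilde{\vtheta}-\vtheta^*\|^2$ via a Taylor expansion of the log-partition; both arrive at the same final constant. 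The obstacle you flag --- that $\widehat{\mathcal{F}}$ is evaluated at a data-dependent point and Assumption~\ref{asm:mineig} only controls $\E[\lambda_{\min}]$ --- is real, and the paper's proof glosses over it in exactly the way you anticipate.
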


We are now ready to quantify the complete variance of the defensive off-policy estimator.

\begin{restatable}[]{theorem}{fullvar}\label{th:fullvar}
        Assuming $N_\mathrm{BPO}> \frac{G^2T^3\sigma^4}{2\lambda_*^2}$, let $\epsilon^*=\frac{G^2T^3\sigma^4}{2\lambda_*^2N_\mathrm{BPO}}$.
        Then, under Assumptions~\ref{asm:expfam},~\ref{asm:real},~\ref{asm:subgauss},~\ref{asm:mineig},
        Algorithm~\ref{alg:bpo_theory}
        with $\beta=\sqrt{\epsilon^*/(2-\epsilon^*)}$
        guarantees
        \begin{equation}
        \Var_k[\mathbf{v}_k] \le \frac{1}{N_{\mathrm{PG}}}\left(
            9Z_k^2 + \frac{Z_k(Z_k+2G)GT^{3/2}\sigma^2}{\lambda_*\sqrt{2N_\mathrm{BPO}}} - \enorm{\nabla J(\vtheta_k)}^2
        \right).
    \end{equation}
    Furthermore, by setting $N_\mathrm{BPO}=N_\mathrm{PG}=\frac{N}{2}$ and $\beta\in(0,1)$, provided $N>\frac{G^2T^3\sigma^4(1+\beta^2)}{2\lambda_*^2\beta^2}$ we have:
    \begin{equation}
        \Var_k[\mathbf{v}_k] \le \frac{1}{N}\left(18Z_k^2- \enorm{\nabla J(\vtheta_k)}^2\right) +\frac{Z_k(Z_k+2G)GT^{3/2}\sigma^2}{2\lambda_*N^{3/2}} .
    \end{equation}
\end{restatable}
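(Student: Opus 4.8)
The plan is to bridge the in-expectation cross-entropy guarantee of Lemma~\ref{lem:mle} with the conditional variance bound of Theorem~\ref{th:alg_var}, essentially instantiating the abstract oracle accuracy $\epsilon_{\mathrm{KL}}$ of Assumption~\ref{asm:bpo} with the concrete rate $\epsilon^*$. First I would fix an iteration $k$ and condition on everything up to and including iteration $k-1$, so that $\vtheta_k$ is determined and $\mathcal{D}^{\mathrm{BPO}}_k$ is a fresh sample of $N_{\mathrm{BPO}}$ i.i.d.\ trajectories from $\pi_{\vtheta_k}$. Since for the exponential-family policies of Assumption~\ref{asm:expfam} the empirical cross-entropy problem in Line~\ref{line:bpo} admits the closed-form maximizer written in Lemma~\ref{lem:mle}, that line is solved exactly, and applying Lemma~\ref{lem:mle} with $\vtheta^\dagger=\vtheta_k$, $n=N_{\mathrm{BPO}}$, and the uniform gradient bound $G$ yields $\E[D_{\mathrm{KL}}(p_{*,\vtheta_k}\|p_{\widetilde{\vtheta}_k})\mid\vtheta_k]\le\epsilon^*$. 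The hypothesis $N_{\mathrm{BPO}}>\tfrac{G^2T^3\sigma^4}{2\lambda_*^2}$ is precisely $\epsilon^*<1$, which is what is needed for the defensive fraction $\beta=\sqrt{\epsilon^*/(2-\epsilon^*)}\in(0,1)$ to be well defined, matching the hypothesis of Theorem~\ref{th:var_kl}.

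The second ingredient is to turn the deterministic-oracle bound of Theorem~\ref{th:alg_var} into an in-expectation statement. The point is that the right-hand side of the variance-reduction inequality of Theorem~\ref{th:var_kl} (from which Theorem~\ref{th:alg_var} is derived) is \emph{affine and nondecreasing} in $D_{\mathrm{KL}}(p_{*,\vtheta_k}\|p_{\widetilde{\vtheta}_k})$; hence, conditioning on $\vtheta_k$ and taking $\E[\cdot\mid\vtheta_k]$ lets the expectation pass through onto the KL term, which I then replace by its upper bound $\epsilon^*$. Using $G_k\le G$ and re-running verbatim the algebra of the proof of Theorem~\ref{th:var_kl}/\ref{th:alg_var} with $\epsilon_{\mathrm{KL}}$ replaced by the (now deterministic) $\epsilon^*$ and $\beta=\sqrt{\epsilon^*/(2-\epsilon^*)}$ gives $\E[\Var_k[\mathbf{v}_k]\mid\vtheta_k]\le\tfrac{1}{N_{\mathrm{PG}}}(9Z_k^2+Z_k(Z_k+2G)\sqrt{\epsilon^*}-\enorm{\nabla J(\vtheta_k)}^2)$; substituting $\sqrt{\epsilon^*}=\tfrac{GT^{3/2}\sigma^2}{\lambda_*\sqrt{2N_{\mathrm{BPO}}}}$ is the first claimed bound. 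I would note that, since $\widetilde{\vtheta}_k$ is $\mathcal{F}_k$-measurable, the left-hand side is really the variance conditioned on the filtration just before $\mathcal{D}^{\mathrm{BPO}}_k$ is drawn; the notation $\Var_k$ is kept for readability, and this is harmless because both $Z_k$ and $\nabla J(\vtheta_k)$ are functions of $\vtheta_k$ alone.

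For the budget split, I would set $N_{\mathrm{BPO}}=N_{\mathrm{PG}}=N/2$, so $\epsilon^*=\tfrac{G^2T^3\sigma^4}{\lambda_*^2 N}$, and check that the hypothesis $N>\tfrac{G^2T^3\sigma^4(1+\beta^2)}{2\lambda_*^2\beta^2}$ is equivalent to $\epsilon^*<\tfrac{2\beta^2}{1+\beta^2}$, i.e.\ $\beta^2>\epsilon^*/(2-\epsilon^*)$: the chosen defensive fraction is then at least the one prescribed above, so the defensive estimator still controls the approximation error. Plugging this $\beta$ (a fixed, $N$-dependent number) into the variance-reduction bound of Theorem~\ref{th:var_kl}, using $\Var_k[\mathbf{v}_k]\le\tfrac{1}{N_{\mathrm{PG}}}\Var_{\vtau\sim\Phi}[\widehat{\nabla}_\vtheta J(\vtheta_k;\vtau)]$ for the balance-heuristic mixture, taking $\E[\cdot\mid\vtheta_k]$, bounding $\tfrac{1-\beta}{\beta}\epsilon^*\le\tfrac{1-\beta}{\beta}\cdot\tfrac{2\beta^2}{1+\beta^2}$, and collecting the $\sqrt{\epsilon^*}$-order contributions into the $N^{-3/2}$ remainder $\tfrac{Z_k(Z_k+2G)GT^{3/2}\sigma^2}{2\lambda_*N^{3/2}}$ produces the second bound, whose leading term is $\tfrac{1}{N}(18Z_k^2-\enorm{\nabla J(\vtheta_k)}^2)$.

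The conceptually delicate step is the second one: replacing the deterministic oracle of Assumption~\ref{asm:bpo} by the only-in-expectation guarantee of Lemma~\ref{lem:mle}, which is legitimate precisely because the variance bound is affine in the KL divergence; everything else is bookkeeping. The part I expect to be fiddly is the constant tracking in the third step, since the $\beta$-optimization has to be redone for a fixed, $N$-tuned defensive fraction (rather than an oracle-dependent one) so that all lower-order terms genuinely collapse into the single $O(N^{-3/2})$ term, and one must double-check that the factor lost in going from $N_{\mathrm{PG}}$ to $N/2$ is absorbed correctly into the stated constants (as well as that dropping one copy of $\enorm{\nabla J(\vtheta_k)}^2$ only weakens the bound).
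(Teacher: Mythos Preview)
Your approach coincides with the paper's (terse) proof, which simply says the first statement follows from Theorem~\ref{th:alg_var} and Lemma~\ref{lem:mle}, and for the second that any $\beta\in(0,1)$ can be written as $\sqrt{\epsilon/(2-\epsilon)}$ with the $N$-condition making this $\epsilon$ a valid KL upper bound---precisely your observation that $\beta^2>\epsilon^*/(2-\epsilon^*)$. You are in fact more careful than the paper on the one delicate point, namely that Lemma~\ref{lem:mle} only bounds the KL \emph{in expectation} while Theorem~\ref{th:alg_var} assumes a deterministic $\epsilon_{\mathrm{KL}}$; your fix via the affine dependence on $D_{\mathrm{KL}}$ in the first bound of Theorem~\ref{th:var_kl} (and the resulting reinterpretation of $\Var_k$ as a variance conditioned only on $\vtheta_k$) is correct and is something the paper's proof glosses over entirely.
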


We are finally able to provide the convergence rate of the corresponding iterative optimization.

\begin{restatable}[]{corollary}{rate}\label{cor:rate}
Let $\widetilde{V}\coloneqq 18Z_k^2- \enorm{\nabla J(\vtheta_k)}^2$ denote the residual variance left by the BPO process. Under the assumptions of Theorem~\ref{th:fullvar}, a total number of trajectories 
\begin{equation*}
    NK \le \left\lceil12(J(\vtheta^*)-J(\vtheta_0)) \left(\frac{3C_1\widetilde{V}}{\epsilon^{4}}+\frac{C_1+3C_2}{\epsilon^{10/3}}\right)\right\rceil
\end{equation*}
is sufficient for Algorithm~\ref{alg:bpo_theory} to obtain $\E[\enorm{\nabla J(\vtheta_{\text{out}})}] \le \epsilon$, where $\vtheta_{\text{out}}$ is chosen uniformly at random from the iterates $\vtheta_1,\dots,\vtheta_K$ of the algorithm, where $C_1=\frac{R_{\max}\sigma^2}{(1-\gamma)^2}$ and $C_2=\frac{R_{\max}^4\sigma^5\norm[\infty][]{\bm{\varphi}}(\sqrt{T}\sigma+2T\norm[\infty][]{\bm{\varphi}})T^3}{2\lambda_*(1-\gamma)^5}$.
\end{restatable}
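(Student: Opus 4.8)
The plan is to combine the variance bound of Theorem~\ref{th:fullvar} with a standard non-convex SGD analysis of biased/unbiased ascent directions. First I would recall that the policy gradient estimator $\mathbf{v}_k$ is unbiased for $\nabla J(\vtheta_k)$ conditionally on $\mathcal{F}_k$ (importance weighting and the defensive mixture preserve unbiasedness), so we are in the classical setting of stochastic gradient ascent with a smooth objective. The expected return $J$ is $L$-smooth with $L = O\!\left(\frac{R_{\max}\norm[\infty][]{\bm{\varphi}}^2 T}{(1-\gamma)}\right)$ or a similar constant under Assumption~\ref{asm:expfam} (this is a known fact about exponential-family policy gradients and I would cite it); call this constant $L$. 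The descent lemma then gives, for step size $\alpha \le 1/L$,
\begin{equation*}
    \E[J(\vtheta_{k+1})\mid\mathcal{F}_k] \ge J(\vtheta_k) + \frac{\alpha}{2}\enorm{\nabla J(\vtheta_k)}^2 - \frac{\alpha^2 L}{2}\E_k\big[\enorm{\mathbf{v}_k}^2\big],
\end{equation*}
and $\E_k[\enorm{\mathbf{v}_k}^2] = \Var_k[\mathbf{v}_k] + \enorm{\nabla J(\vtheta_k)}^2$. Plugging in the two-term bound on $\Var_k[\mathbf{v}_k]$ from Theorem~\ref{th:fullvar}, namely $\Var_k[\mathbf{v}_k]\le \frac{\widetilde V}{N} + \frac{C}{N^{3/2}}$ with $\widetilde V = 18Z_k^2 - \enorm{\nabla J(\vtheta_k)}^2$ and $C = \frac{Z_k(Z_k+2G)GT^{3/2}\sigma^2}{2\lambda_*}$, and then telescoping over $k=0,\dots,K-1$ and dividing by $K$, yields
\begin{equation*}
    \frac{1}{K}\sum_{k=0}^{K-1}\E\big[\enorm{\nabla J(\vtheta_k)}^2\big] \le \frac{2(J(\vtheta^*)-J(\vtheta_0))}{\alpha K} + \alpha L\left(\frac{\widetilde V}{N} + \frac{C}{N^{3/2}}\right),
\end{equation*}
after absorbing the $\enorm{\nabla J(\vtheta_k)}^2$ term that $\widetilde V$ contributes back into the left-hand side (this is why the residual variance appears rather than $18Z_k^2$ alone — the $-\enorm{\nabla J}^2$ is exactly what makes the second-moment bound tight). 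Since $\vtheta_{\text{out}}$ is uniform over the iterates, the left-hand side equals $\E[\enorm{\nabla J(\vtheta_{\text{out}})}^2]\ge \E[\enorm{\nabla J(\vtheta_{\text{out}})}]^2$ by Jensen, so it suffices to drive the right-hand side below $\epsilon^2$.

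The remaining step is the budget calculation: optimize the free parameters $\alpha$, $N$, $K$ subject to $NK$ being the total sample count. I would set $\alpha = \Theta(1/L)$ (or tune it), and then balance the optimization-error term $\frac{1}{\alpha K}$ against the two variance terms. The $\frac{\widetilde V}{N}$ term forces $N = \Omega(\widetilde V/\epsilon^2)$ and $K = \Omega(1/(\alpha\epsilon^2))$, giving the dominant $NK = \Omega(\widetilde V/\epsilon^4)$ contribution — this is where the $\frac{3C_1\widetilde V}{\epsilon^4}$ term with $C_1 = \frac{R_{\max}\sigma^2}{(1-\gamma)^2}$ comes from, with $C_1$ collecting the smoothness/step-size constants. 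The lower-order $\frac{C}{N^{3/2}}$ term requires $N^{3/2} = \Omega(C/\epsilon^2)$, i.e. $N = \Omega((C/\epsilon^2)^{2/3})$, which combined with $K=\Omega(1/\epsilon^2)$ yields the $\frac{C_1+3C_2}{\epsilon^{10/3}}$ term, with $C_2$ absorbing $C$ (note $C = O(G^4 T^{3/2}\sigma^2/\lambda_*)$ and the $Z_k, G$ factors are bounded by $O(R_{\max}\sqrt{T}\norm[\infty][]{\bm\varphi}/(1-\gamma))$ times powers, matching the stated $C_2$). Taking the maximum of the two $N$ requirements and the corresponding $K$, then multiplying, gives $NK \le \lceil 12(J(\vtheta^*)-J(\vtheta_0))(\tfrac{3C_1\widetilde V}{\epsilon^4} + \tfrac{C_1+3C_2}{\epsilon^{10/3}})\rceil$; the factor $12$ and the precise grouping come from being slightly loose in the balancing to get a clean closed form.

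The main obstacle I anticipate is bookkeeping rather than conceptual: tracking how the problem-dependent constants ($Z_k$, $G$, $T$, $\sigma$, $\lambda_*$, $R_{\max}$, $1-\gamma$, $\norm[\infty][]{\bm\varphi}$) propagate through the smoothness constant $L$ and through $C$, and making sure the $N > \frac{G^2T^3\sigma^4(1+\beta^2)}{2\lambda_*^2\beta^2}$ side condition from Theorem~\ref{th:fullvar} is implied by the chosen $N$ (it should be, since that threshold is $O(\epsilon^0)$ in $\epsilon$ and hence dominated once $\epsilon$ is small, but for general $\epsilon$ one must either add it as a hypothesis or verify it). A secondary subtlety is that $Z_k$, $V_k$, $G_k$ are themselves random and $k$-dependent; one uses the uniform bounds $Z_k \le G \le O(R_{\max}\sqrt T/(1-\gamma))$ and $V_k \le $ the same squared (from $\esssup\enorm{\mathbf g_\vtheta}\le G$ via REINFORCE/GPOMDP magnitude bounds under Assumption~\ref{asm:expfam}) so that $\widetilde V$ and $C$ can be replaced by $\vtheta$-independent worst-case values before the telescoping, which is what lets the final bound be stated with universal constants.
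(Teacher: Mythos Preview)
Your plan is essentially the paper's proof: a smoothness-based descent lemma, unbiasedness of $\mathbf{v}_k$, the variance bound from Theorem~\ref{th:fullvar}, telescoping, and then a budget calculation. Two points where the paper's execution differs from what you sketch are worth noting. First, the smoothness constant the paper uses is exactly $C_1=\frac{R_{\max}\sigma^2}{(1-\gamma)^2}$, obtained from the subgaussian assumption via Lemma~\ref{lem:quadratic_bound} (so $\sigma^2$ plays the role of your $\norm[\infty][]{\bm\varphi}^2 T$-type bound); getting the stated $C_1$ requires this route. Second, and more substantively, the budget calculation is organized the other way around: the paper fixes $N=\epsilon^{-4/3}$ independently of all problem constants and then sets $\alpha=\min\{\tfrac{1}{2C_1},\tfrac{\epsilon^{2/3}}{6C_1\widetilde V},\tfrac{1}{6C_2}\}$, doing a three-case analysis on which term of the min is active. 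Your proposal (fix $\alpha\asymp 1/L$, tune $N$) yields the same $\epsilon^{-4}$ and $\epsilon^{-10/3}$ rates but produces a second term scaling like $C_1 C_2^{2/3}/\epsilon^{10/3}$ rather than the stated $(C_1+3C_2)/\epsilon^{10/3}$; to reproduce the corollary's constants verbatim you would need to switch to the paper's $N$-fixed, $\alpha$-tuned scheme. Finally, your remark about ``absorbing the $\enorm{\nabla J}^2$ term back into the LHS'' is not needed: the paper simply keeps $\widetilde V=18Z_k^2-\enorm{\nabla J(\vtheta_k)}^2$ as a single symbol in the variance slot, and the $\enorm{\nabla J}^2$ on the LHS comes from $\alpha(1-\alpha C_1)\enorm{\nabla J}^2$ in the descent inequality, not from unpacking $\widetilde V$.
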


\begin{remark}
    Although, in the worst case, the sample complexity is $O(\epsilon^{-4})$ like on-policy REINFORCE~\citep{yuan2022general}, when the residual variance $\widetilde{V}$ is negligible, namely, $\widetilde{V}=O(\epsilon^{2/3})$, Algorithm~\ref{alg:bpo_theory} can achieve an improved sample complexity of $O(\epsilon^{-10/3})$, the same as SVRPG~\citep{papini2018stochastic}. Examples of this can be constructed as in Remark~\ref{rem:variance_reduction}.
    Even though the optimal sample complexity for first-order policy optimization is $O(\epsilon^{-3})$ \citep{gu2020sample} and our $\epsilon^{2/3}$ improvement does not hold in full generality, we are not aware of any other case of provable acceleration of policy gradient algorithms following from behavior-policy optimization.
\end{remark}

\section{Related Works}

\textbf{Baselines}~~A common technique from statistical simulation to reduce variance in policy gradient estimation is using the \emph{baselines}. A baseline $\mathbf{b}$ is a non-random quantity that is subtracted from the return $R(\vtau)$ based on the observation that $\E_{\vtau \sim p_{\vtheta}}[\nabla \log p_{\vtheta}(\vtau) R(\vtau)] = \E_{\vtau \sim p_{\vtheta}}[\nabla \log p_{\vtheta}(\vtau)  (R(\vtau)- \mathbf{b})]$. Optimal baselines for the REINFORCE and G(PO)MDP estimators have been derived by~\cite{PetersS06}. Other approaches exploit a baseline that is obtained from a moving average of the most recent returns~\citep{weaver2001optimal, ZhaoHNS11}. This approach is similar to using a critic to estimate the value function~\citep{mei2022role}. The effectiveness of a baseline is highly problem-dependent and, in the end, does not change the convergence rate of the policy gradient algorithm, which remains of order $O(\epsilon^{-4})$, being $\epsilon$ the expected norm of the policy gradient reached.

\textbf{Variance-Reduced Policy Gradient Algorithms}~~
\emph{Variance reduction} techniques have been first introduced for supervised learning, having SVRG~\citep{johnson2013accelerating} as progenitor. The idea consists of re-using snapshots of gradients computed in the past to exploit the correlation in order to reduce the variance. Still, in the supervised learning community, several variations and improvements have been presented, which include SARAH~\citep{nguyen2017sarah}, STORM~\citep{cutkosky2019momentum} and PAGE~\citep{li2021page}. Each of these has been adapted to the policy gradient setting, giving rise to SVRPG~\citep{papini2018stochastic}, SRVR-PG~\citep{xu2020sample}, STORMPG~\citep{yuan2020stochastic}, and PAGEPG~\citep{gargiani2022page}, respectively. These approaches have succeeded in strictly improving the convergence rate over standard PGs. Indeed, SVRPG archives a convergence rate of order $O(\epsilon^{-10/3})$, as shown by~\cite{xu2020improved}, while SRVR-PG, STORMPG, and PAGEPG outperform it with a convergence rate of order $O(\epsilon^{-3})$, which is currently conjectured to be optimal.

\textbf{Active Importance Sampling}~~In~\cite{HannaTSN17}, the problem of \emph{behavioral policy search} is addressed with the goal of finding the most effective (i.e., minimum variance) behavioral policy to estimate the expected return of a given target policy. The approach is based on a gradient method that optimizes the policy parameters in order to find the minimum-variance behavioral policy. Although the approach demonstrated advantages from the policy evaluation perspective, it struggles to extend to policy optimization. In~\cite{hanna2019data}, the extension to the optimization perspective has been provided with a \emph{parallel policy search} approach that simultaneously optimizes over the parameters of the behavioral and target policies. Unfortunately, the algorithm enjoys no theoretical guarantees and shows limited empirical advantages. Recently, in~\cite{MetelliMR23}, the authors have deepened the connections between minimum-variance behavioral policy and the policy optimization have been studied. Specifically, under certain assumptions on the policy space, it is possible to show that the minimum variance behavioral policy attains a performance improvement. However, these works lack a comprehensive theoretical analysis capable of quantifying analytically the actual advantage of  \emph{active IS}, possibly in terms of convergence rate.

\section{Numerical Simulations}\label{sec:numerical}
In this section, we first provide a practical version of Algorithm~\ref{alg:bpo_theory} and then provide the experimental results on classical control tasks.

\subsection{Practical Algorithm} \label{sec:practical_algo}
Here, we present some practical aspects related to the implementation of Algorithm~\ref{alg:bpo_theory}, based on the above-introduced idea of IS estimators. In particular, in Algorithm~\ref{alg:bpo_theory}, we face two estimation problems: the estimation of KL divergence in Line~\ref{line:bpo} and the off-policy gradient estimation in Line~\ref{line:off_grad}. Both can benefit from effectively reusing already collected trajectories during the algorithm execution so as to reduce the overall number of samples generated per iteration.

\textbf{Offline KL divergence, Line~\ref{line:bpo}}~~In place of collecting, at every iteration $k$, new $N_{\mathrm{BPO}}$ trajectories with the current target policy $\pi_{\vtheta_k}$ to build the dataset $\mathcal{D}^{\mathrm{BPO}}_k$, we reuse the samples for the off-policy gradient estimation at the previous iteration $k-1$, namely $\mathcal{D}^{\mathrm{PG}}_{k-1}$. We call this KL estimation \emph{offline}, as it employs trajectories from the previous target and behavioral policies $\pi_{\vtheta_{k-1}}$ and $\pi_{\tilde{\vtheta}_{k-1}}$. Such offline samples need to be re-weighted proportionally to the probability of being generated by the current target policy $\pi_{\vtheta_k}$, for which we resort to the (multiple) off-policy estimator in Equation \ref{eq:opt_behav_policy}.

\textbf{Biased off-policy gradient, Line~\ref{line:off_grad}}~~The off-policy gradient estimation in Algorithm~\ref{alg:bpo_theory} is computed with the only behavioral policy $\pi_{\tilde{\vtheta}_k}$ and, when the defensive strategy is used $\beta>0$, with the current target policy $\pi_{\vtheta_k}$. To increase the number of trajectories available for the gradient estimation, we can reuse the already collected trajectories for the (offline) KL divergence estimation, namely $\mathcal{D}^{\mathrm{BPO}}_k$.
This approach is a multiple off-policy gradient estimator. If the offline KL strategy is employed, this means using the target policy $\pi_{\vtheta_{k-1}}$ at the previous iteration as an additional behavioral policy. Otherwise, $\mathcal{D}^{\mathrm{BPO}}_k$ contains \emph{biased} defensive samples from the current target policy $\pi_{\vtheta_k}$, as they were already used to compute the current behavioral policy $\pi_{\tilde{\vtheta}_k}$.

\subsection{Experimental Results} \label{sec:experimental_results}
All experiments are conducted with Gaussian policies with fixed diagonal variance, and the mean is linearly parametrized in the state so that $\pi_\vtheta = \mathcal{N}(\vtheta^\top s, \sigma \mathbf{I})$.
We first provide a set of numerical results on the Linear Quadratic Regulator (LQ) environment, quantifying the variance reduction of the single target policy gradient estimate; we then show the impact of such variance reduction on the learning iterations for solving the full control task in the Cartpole benchmark.
We employed the G(PO)MDP gradient estimator and its optimal baselines as derived in~\cite{PetersS06}.

\textbf{Variance Reduction}~~In this set of experiments, we want to analyze the impact of the optimal behavioral policy in estimating the target policy gradient. In particular, we compare the gradient variance (as defined in Equation (\ref{eq:opt})) in the on-policy and the proposed off-policy setting.
The optimal behavioral policy parameters $\widehat{\vtheta}^b_{\dagger}$ were computed by solving (\ref{eq:opt_behav_policy}), where the cross-entropy term was estimated by sampling $N_{\mathrm{BPO}}$ trajectories from the target policy $\pi_{\vtheta}$. Afterwards, $N_{\mathrm{PG}}$ trajectories were sampled from the behavioral $\pi_{\widehat{\vtheta}^b_{\dagger}}$ to build the data-set $\mathcal{D}_{\text{off}}$ and compute the off-policy gradient as in equation~(\ref{eq:off_grad}). The on-policy gradient estimations were instead obtained with a batch of $N_{\mathrm{BPO}} + N_{\mathrm{PG}}$ trajectories forming the data-set $\mathcal{D}_{\text{on}}$.

We run exhaustive experiments by varying the LQ horizon and the state dimensions. The complete results are reported in Appendix~\ref{sec:appendix_numerical_results}. Here, we fix the horizon to 2 and consider mono-dimensional LQ problems varying parameters of the target policy, i.e., various $\vtheta \in \{ -1.0, -0.5, 0.0, 0.5, 1.0\}$ and log standard deviations $\log \sigma \in \{-1.0, -0.5, 0.0, 0.5, 1.0\}$. Finally, we varied also the hyper-parameters of our off-policy method, i.e. the defensive coefficient $\beta \in \{0, 0.4, 0.8\}$, the biased off-policy practical gradient calculation (the offline estimation of the KL divergence here is not possible), and the batch sizes $N_{\mathrm{BPO}}$ (10, 30 and 50) and $N_{\mathrm{PG}} \in \{90, 70, 50\}$.
Tables \ref{tab:var_lq_mu} and \ref{tab:var_lq_std} report, for each environment and policy configuration, the first 20 results  ordered by the average variance gap between the on-policy and off-policy methods (over 100 repetitions), i.e.:
\begin{equation}
     {\Delta \! \Var} = \frac{1}{100} \sum_{i=1}^{100} \left(
     \Var \left[\widehat{\nabla} J(\vtheta; \mathcal{D}_{\text{on}}^{(i)}) \right]
     -  \Var \left[\widehat{\nabla} J(\vtheta; \mathcal{D}_{\text{off}}^{(i)}) \right] \right).
\end{equation}
Across all the results, we can notice a few prevalent patterns. Firstly, as may be expected, the variance reduction is numerically more significant for "extreme" values of the policy parameters ($\vtheta$ and $\log \sigma$ close to 1), as the gradient estimation problem becomes more and more difficult and prone to high variance, thus leading to significant margin of improvement (see also the complete results in Appendix~\ref{sec:appendix_numerical_results}).
Secondly, the biased off-policy gradient calculation is predominant in most of the highest variance reduction results, as it allows the use of the same number of samples of the on-policy counterpart.
Lastly, the other off-policy hyper-parameters do not seem to impact these variance reduction results clearly, alternating different combinations in the best experiments reported in all the tables.

\begin{table}
\caption{LQ environment, with horizon = 2 and state dimension = 1. Variance reduction in off-policy gradient, expressed as $\Delta \! \Var$ and its 95\% Gaussian confidence interval $(\Delta \! \Var^-,\Delta \! \Var^+)$, with different hyper-parameters.}
\sisetup{ round-mode = places, round-precision = 2 }
\small
\setlength{\tabcolsep}{3pt}
\begin{subtable}{.5\textwidth}
\centering
\caption{Target policy with $\log \sigma = 0$ and varying $\vtheta$.}\label{tab:var_lq_mu}
\resizebox{.97\textwidth}{!}{\begin{tabular}{S S S c >$c<$ >$c<$ >$c<$ S[round-precision=1]}
\toprule
\text{$\Delta \! \Var$} & \text{$\Delta \! \Var^-$} & \text{$\Delta \! \Var^+$} & biased & \beta & N_{\mathrm{BPO}} & N_{\mathrm{PG}} & $\vtheta$\\
\midrule
2.048126 & 1.127738  & 2.968514 & True  & 0.8 & 50 & 50 & 1.0 \\
1.643050 & -0.103186 & 3.389286 & True  & 0.0 & 10 & 90 & 1.0 \\
1.503771 & 0.775425  & 2.232117 & True  & 0.4 & 50 & 50 & 1.0 \\
1.388987 & 0.323475  & 2.454499 & False & 0.0 & 10 & 90 & 1.0 \\
1.260688 & 0.628243  & 1.893133 & True  & 0.0 & 50 & 50 & 1.0 \\
1.148023 & -0.616740 & 2.912785 & True  & 0.8 & 10 & 90 & 1.0 \\
0.703738 & 0.253894  & 1.153583 & True  & 0.0 & 30 & 70 & -1.0 \\
0.562928 & -0.714201 & 1.840058 & False & 0.0 & 50 & 50 & 1.0 \\
0.561355 & 0.302557  & 0.820153 & True  & 0.8 & 50 & 50 & -1.0 \\
0.508645 & 0.032941  & 0.984350 & True  & 0.0 & 10 & 90 & -1.0 \\
0.469772 & 0.258537  & 0.681006 & True  & 0.4 & 50 & 50 & -1.0 \\
0.408988 & 0.144608  & 0.673367 & True  & 0.4 & 50 & 50 & 0.5 \\
0.398966 & 0.183075  & 0.614856 & True  & 0.0 & 50 & 50 & -1.0 \\
0.392953 & -0.018980 & 0.804886 & False & 0.0 & 10 & 90 & 0.5 \\
0.324804 & 0.159038  & 0.490571 & True  & 0.0 & 30 & 70 & 0.5 \\
0.321055 & -0.169663 & 0.811773 & False & 0.4 & 10 & 90 & -1.0 \\
0.311033 & -0.068349 & 0.690415 & False & 0.0 & 10 & 90 & -1.0 \\
0.306358 & -0.159384 & 0.772100 & False & 0.4 & 50 & 50 & -1.0 \\
0.296821 & 0.071193  & 0.522449 & True  & 0.8 & 50 & 50 & 0.5 \\
0.290852 & -0.136284 & 0.717988 & False & 0.8 & 10 & 90 & -1.0 \\
0.270046 & -0.144759 & 0.684851 & True  & 0.4 & 10 & 90 & -1.0 \\
\bottomrule
\end{tabular}}
\end{subtable}%
\sisetup{ round-mode = places, round-precision = 2 }
\small
\setlength{\tabcolsep}{3pt}
\begin{subtable}{.5\textwidth}
\centering
\caption{Target policy with $\vtheta = 0$ and varying $\log \sigma$.}\label{tab:var_lq_std}
\resizebox{.97\textwidth}{!}{\begin{tabular}{S S S c >$c<$ >$c<$ >$c<$ S[round-precision=1]}
\toprule
\text{$\Delta \! \Var$} & \text{$\Delta \! \Var^-$} & \text{$\Delta \! \Var^+$} & biased & \beta & N_{\mathrm{BPO}} & N_{\mathrm{PG}}  & $\log \sigma$\\
\midrule
4.041229 & 2.016358  & 6.066101 & True  & 0.8 & 50 & 50 & 1.0 \\
3.773828 & 2.399339  & 5.148317 & True  & 0.4 & 50 & 50 & 1.0 \\
3.247087 & 1.631413  & 4.862761 & True  & 0.0 & 30 & 70 & 1.0 \\
3.176471 & 1.951825  & 4.401116 & True  & 0.0 & 50 & 50 & 1.0 \\
2.700678 & 0.718395  & 4.682962 & True  & 0.8 & 30 & 70 & 1.0 \\
2.361232 & -0.389329 & 5.111792 & True  & 0.4 & 30 & 70 & 1.0 \\
2.055510 & 0.523027  & 3.587994 & True  & 0.0 & 10 & 90 & 1.0 \\
1.535046 & -0.378748 & 3.448839 & False & 0.0 & 10 & 90 & 1.0 \\
1.186207 & -0.690749 & 3.063163 & False & 0.0 & 30 & 70 & 1.0 \\
0.604685 & -1.277262 & 2.486632 & False & 0.8 & 30 & 70 & 1.0 \\
0.590495 & 0.244142  & 0.936848 & True  & 0.8 & 50 & 50 & 0.5 \\
0.581094 & -1.889402 & 3.051590 & False & 0.0 & 50 & 50 & 1.0 \\
0.556353 & 0.215147  & 0.897560 & True  & 0.0 & 50 & 50 & 0.5 \\
0.483820 & 0.186378  & 0.781262 & True  & 0.4 & 50 & 50 & 0.5 \\
0.419434 & 0.145579  & 0.693288 & True  & 0.8 & 30 & 70 & 0.5 \\
0.392720 & -1.539439 & 2.324879 & False & 0.4 & 30 & 70 & 1.0 \\
0.240989 & -0.039873 & 0.521851 & True  & 0.8 & 10 & 90 & 0.5 \\
0.227993 & -0.026717 & 0.482703 & True  & 0.4 & 30 & 70 & 0.5 \\
0.161052 & -0.278498 & 0.600601 & True  & 0.0 & 10 & 90 & 0.5 \\
0.157868 & -0.245560 & 0.561297 & False & 0.0 & 50 & 50 & 0.5 \\
0.151897 & -0.199452 & 0.503245 & False & 0.0 & 30 & 70 & 0.5 \\
\bottomrule
\end{tabular}}
\end{subtable}
\end{table}
%
%

\textbf{Learning Speed-up}~~In this second set of experiments, we want to measure the impact of the variance reduction provided by our off-policy method in the learning process for solving the classic Cartpole balancing problem and compare our results with the state-of-art variance reduction algorithm STORMPG.
For our off-policy algorithm, we chose $\beta=0$, and employed both the practical aspects of the offline KL divergence estimation (hence we do not use $N_{\mathrm{BPO}}$) and of biased off-policy gradient estimation (see Section~\ref{sec:practical_algo}). All the experiments were run with a fixed budget of $N_{\mathrm{PG}}$ samples for each iteration, which also correspond to the mini batch-size employed by the STORMPG (the initial batch-size was set to 10 times $N_{\mathrm{PG}}$).
Figure~\ref{fig:learning_cartpole} shows that our off-policy method outperforms the STOMRPG in all the different configurations, enjoying both a more stable behavior at convergence and a lower variance during the learning iterations.

\begin{figure}[h!t]
\centering
\begin{subfigure}{.32\textwidth}
    \centering
    \includegraphics[width=\textwidth]{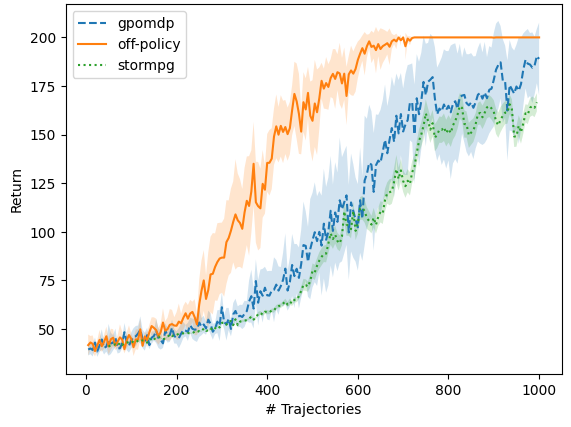}
    \caption{}
\end{subfigure}
\begin{subfigure}{.32\textwidth}
    \centering
    \includegraphics[width=\textwidth]{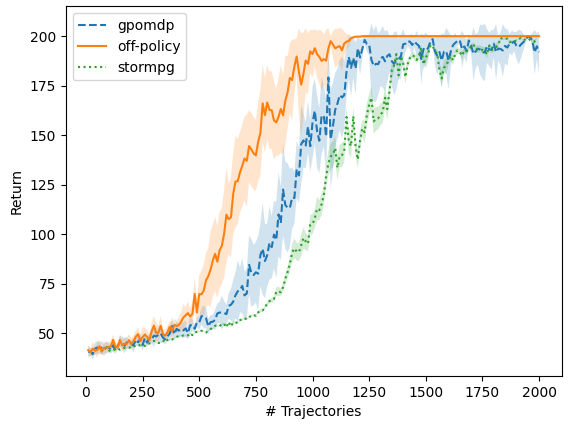}
    \caption{}
\end{subfigure}
\begin{subfigure}{.32\textwidth}
    \centering
    \includegraphics[width=\textwidth]{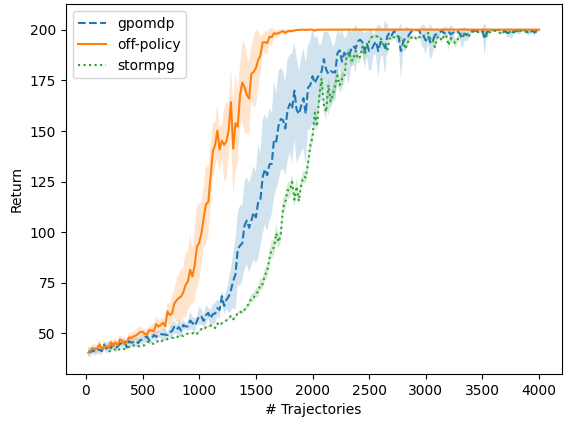}
    \caption{}
\end{subfigure}
\begin{subfigure}{.32\textwidth}
    \centering
    \includegraphics[width=\textwidth]{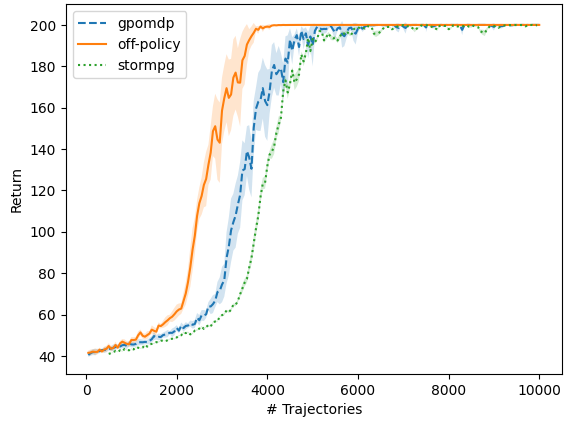}
    \caption{}
\end{subfigure}
\begin{subfigure}{.32\textwidth}
    \centering
    \includegraphics[width=\textwidth]{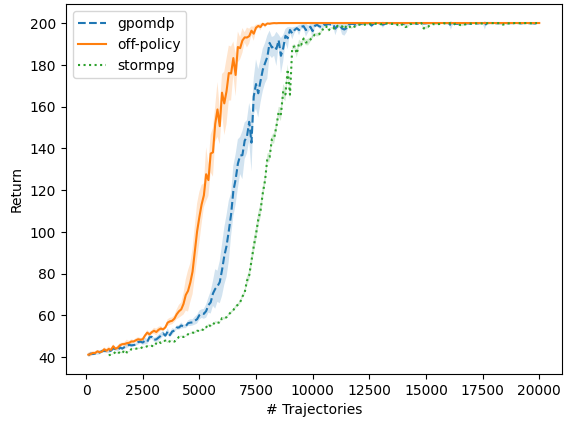}
    \caption{}
\end{subfigure}
\caption{Cartpole. Average return and its 95\% Gaussian CI (30 repetitions) over the learning iterations. Different policy gradient batch-sizes were used: (a) $N_{\mathrm{PG}} = 5$, (b) $N_{\mathrm{PG}} = 10$, (c) $N_{\mathrm{PG}} = 20$, (d) $N_{\mathrm{PG}} = 50$, (e) $N_{\mathrm{PG}} = 100$.}
\label{fig:learning_cartpole}
\end{figure}

\section{Discussion and Conclusions}
In this paper, we have presented a novel approach to control the variance of the PG estimator. Leveraging the idea of looking for the best behavioral policy that minimizes the variance of the IS estimator, we have introduced a novel algorithm that exploits a two-phase procedure, alternating between the cross-entropy estimation of such a policy and the actual off-policy performance improvement. We have shown that, thanks to the defensive estimate, we are able to achieve a convergence rate of order $O(\epsilon^{-4})$ to a stationary point. Compared to the standard REINFORCE convergence rate, our algorithm enjoys a smaller residual variance. Then, we provided a practical version of such an algorithm, which uses all the samples collected so far at the price of an estimation bias. This algorithm was evaluated on benchmark continuous control tasks compared to standard baselines, showing a significant reduction of the estimation variance and a faster learning curve. Future works include the extension of the provided algorithm in combination with variance reduction techniques, such as SVRPG, and the conception of a more practical adaptation that suitably combines with deep architectures.

\bibliography{main}

\begin{thebibliography}{32}
\providecommand{\natexlab}[1]{#1}
\providecommand{\url}[1]{\texttt{#1}}
\expandafter\ifx\csname urlstyle\endcsname\relax
  \providecommand{\doi}[1]{doi: #1}\else
  \providecommand{\doi}{doi: \begingroup \urlstyle{rm}\Url}\fi

\bibitem[Baxter \& Bartlett(2001)Baxter and Bartlett]{baxter2001infinite}
Jonathan Baxter and Peter~L. Bartlett.
\newblock Infinite-horizon policy-gradient estimation.
\newblock \emph{J. Artif. Intell. Res.}, 15:\penalty0 319--350, 2001.

\bibitem[Bubeck \& Sellke(2020)Bubeck and Sellke]{bubeck2020first}
S{\'{e}}bastien Bubeck and Mark Sellke.
\newblock First-order bayesian regret analysis of thompson sampling.
\newblock In \emph{{ALT}}, volume 117 of \emph{Proceedings of Machine Learning
  Research}, pp.\  196--233. {PMLR}, 2020.

\bibitem[Cutkosky \& Orabona(2019)Cutkosky and Orabona]{cutkosky2019momentum}
Ashok Cutkosky and Francesco Orabona.
\newblock Momentum-based variance reduction in non-convex sgd.
\newblock \emph{Advances in neural information processing systems}, 32, 2019.

\bibitem[Foster \& Krishnamurthy(2021)Foster and
  Krishnamurthy]{foster2021efficient}
Dylan~J. Foster and Akshay Krishnamurthy.
\newblock Efficient first-order contextual bandits: Prediction, allocation, and
  triangular discrimination.
\newblock In \emph{NeurIPS}, pp.\  18907--18919, 2021.

\bibitem[Gargiani et~al.(2022)Gargiani, Zanelli, Martinelli, Summers, and
  Lygeros]{gargiani2022page}
Matilde Gargiani, Andrea Zanelli, Andrea Martinelli, Tyler Summers, and John
  Lygeros.
\newblock Page-pg: A simple and loopless variance-reduced policy gradient
  method with probabilistic gradient estimation.
\newblock In \emph{International Conference on Machine Learning}, pp.\
  7223--7240. PMLR, 2022.

\bibitem[Hanna et~al.(2017)Hanna, Thomas, Stone, and Niekum]{HannaTSN17}
Josiah~P. Hanna, Philip~S. Thomas, Peter Stone, and Scott Niekum.
\newblock Data-efficient policy evaluation through behavior policy search.
\newblock In \emph{Proceedings of the 34th International Conference on Machine
  Learning, {ICML} 2017, Sydney, NSW, Australia, 6-11 August 2017}, volume~70
  of \emph{Proceedings of Machine Learning Research}, pp.\  1394--1403. {PMLR},
  2017.

\bibitem[Hanna(2019)]{hanna2019data}
Josiah~Paul Hanna.
\newblock \emph{Data efficient reinforcement learning with off-policy and
  simulated data}.
\newblock The University of Texas at Austin, 2019.

\bibitem[Johnson \& Zhang(2013)Johnson and Zhang]{johnson2013accelerating}
Rie Johnson and Tong Zhang.
\newblock Accelerating stochastic gradient descent using predictive variance
  reduction.
\newblock \emph{Advances in neural information processing systems}, 26, 2013.

\bibitem[Kahn(1950)]{kahn1950random}
Herman Kahn.
\newblock Random sampling (monte carlo) techniques in neutron attenuation
  problems. i.
\newblock \emph{Nucleonics (US) Ceased publication}, 6\penalty0 (See also NSA
  3-990), 1950.

\bibitem[Kearns \& Singh(2002)Kearns and Singh]{kearns2002near}
Michael~J. Kearns and Satinder Singh.
\newblock Near-optimal reinforcement learning in polynomial time.
\newblock \emph{Mach. Learn.}, 49\penalty0 (2-3):\penalty0 209--232, 2002.

\bibitem[Li et~al.(2021)Li, Bao, Zhang, and Richt{\'a}rik]{li2021page}
Zhize Li, Hongyan Bao, Xiangliang Zhang, and Peter Richt{\'a}rik.
\newblock Page: A simple and optimal probabilistic gradient estimator for
  nonconvex optimization.
\newblock In \emph{International conference on machine learning}, pp.\
  6286--6295. PMLR, 2021.

\bibitem[Mei et~al.(2022)Mei, Chung, Thomas, Dai, Szepesvari, and
  Schuurmans]{mei2022role}
Jincheng Mei, Wesley Chung, Valentin Thomas, Bo~Dai, Csaba Szepesvari, and Dale
  Schuurmans.
\newblock The role of baselines in policy gradient optimization.
\newblock \emph{Advances in Neural Information Processing Systems},
  35:\penalty0 17818--17830, 2022.

\bibitem[Metelli et~al.(2018)Metelli, Papini, Faccio, and
  Restelli]{MetelliPFR18}
Alberto~Maria Metelli, Matteo Papini, Francesco Faccio, and Marcello Restelli.
\newblock Policy optimization via importance sampling.
\newblock In \emph{Advances in Neural Information Processing Systems 31: Annual
  Conference on Neural Information Processing Systems 2018, NeurIPS 2018,
  December 3-8, 2018, Montr{\'{e}}al, Canada}, pp.\  5447--5459, 2018.

\bibitem[Metelli et~al.(2020)Metelli, Papini, Montali, and
  Restelli]{MetelliPMR20}
Alberto~Maria Metelli, Matteo Papini, Nico Montali, and Marcello Restelli.
\newblock Importance sampling techniques for policy optimization.
\newblock \emph{J. Mach. Learn. Res.}, 21:\penalty0 141:1--141:75, 2020.

\bibitem[Metelli et~al.(2023)Metelli, Meta, and Restelli]{MetelliMR23}
Alberto~Maria Metelli, Samuele Meta, and Marcello Restelli.
\newblock On the relation between policy improvement and off-policy
  minimum-variance policy evaluation.
\newblock In \emph{Uncertainty in Artificial Intelligence, {UAI} 2023, July 31
  - 4 August 2023, Pittsburgh, PA, {USA}}, volume 216 of \emph{Proceedings of
  Machine Learning Research}, pp.\  1423--1433. {PMLR}, 2023.

\bibitem[Nguyen et~al.(2017)Nguyen, Liu, Scheinberg, and
  Tak{\'{a}}c]{nguyen2017sarah}
Lam~M. Nguyen, Jie Liu, Katya Scheinberg, and Martin Tak{\'{a}}c.
\newblock {SARAH:} {A} novel method for machine learning problems using
  stochastic recursive gradient.
\newblock In \emph{{ICML}}, volume~70 of \emph{Proceedings of Machine Learning
  Research}, pp.\  2613--2621. {PMLR}, 2017.

\bibitem[Owen(2013)]{mcbook}
Art~B. Owen.
\newblock \emph{Monte Carlo theory, methods and examples}.
\newblock 2013.

\bibitem[Papini et~al.(2018)Papini, Binaghi, Canonaco, Pirotta, and
  Restelli]{papini2018stochastic}
Matteo Papini, Damiano Binaghi, Giuseppe Canonaco, Matteo Pirotta, and Marcello
  Restelli.
\newblock Stochastic variance-reduced policy gradient.
\newblock In \emph{{ICML}}, volume~80 of \emph{Proceedings of Machine Learning
  Research}, pp.\  4023--4032. {PMLR}, 2018.

\bibitem[Papini et~al.(2022)Papini, Pirotta, and Restelli]{PapiniPR22}
Matteo Papini, Matteo Pirotta, and Marcello Restelli.
\newblock Smoothing policies and safe policy gradients.
\newblock \emph{Mach. Learn.}, 111\penalty0 (11):\penalty0 4081--4137, 2022.
\newblock URL \url{https://doi.org/10.1007/s10994-022-06232-6}.

\bibitem[Peters \& Schaal(2006)Peters and Schaal]{PetersS06}
Jan Peters and Stefan Schaal.
\newblock Policy gradient methods for robotics.
\newblock In \emph{2006 {IEEE/RSJ} International Conference on Intelligent
  Robots and Systems, {IROS} 2006, October 9-15, 2006, Beijing, China}, pp.\
  2219--2225. {IEEE}, 2006.

\bibitem[Puterman(2014)]{puterman2014markov}
Martin~L Puterman.
\newblock \emph{Markov decision processes: discrete stochastic dynamic
  programming}.
\newblock John Wiley \& Sons, 2014.

\bibitem[Sutton \& Barto(2018)Sutton and Barto]{sutton2018reinforcement}
Richard~S Sutton and Andrew~G Barto.
\newblock \emph{Reinforcement learning: An introduction}.
\newblock MIT press, 2018.

\bibitem[Thomas et~al.(2015)Thomas, Theocharous, and Ghavamzadeh]{ThomasTG15}
Philip~S. Thomas, Georgios Theocharous, and Mohammad Ghavamzadeh.
\newblock High-confidence off-policy evaluation.
\newblock In \emph{Proceedings of the Twenty-Ninth {AAAI} Conference on
  Artificial Intelligence, January 25-30, 2015, Austin, Texas, {USA}}, pp.\
  3000--3006. {AAAI} Press, 2015.

\bibitem[Veach \& Guibas(1995)Veach and Guibas]{VeachG95}
Eric Veach and Leonidas~J. Guibas.
\newblock Optimally combining sampling techniques for monte carlo rendering.
\newblock In \emph{Proceedings of the 22nd Annual Conference on Computer
  Graphics and Interactive Techniques, {SIGGRAPH} 1995, Los Angeles, CA, USA,
  August 6-11, 1995}, pp.\  419--428. {ACM}, 1995.

\bibitem[Weaver \& Tao(2001)Weaver and Tao]{weaver2001optimal}
Lex Weaver and Nigel Tao.
\newblock The optimal reward baseline for gradient-based reinforcement
  learning.
\newblock In \emph{Proceedings of the 17th Conference in Uncertainty in
  Artificial Intelligence}, pp.\  538--545, 2001.

\bibitem[Williams(1992)]{williams1992simple}
Ronald~J. Williams.
\newblock Simple statistical gradient-following algorithms for connectionist
  reinforcement learning.
\newblock \emph{Mach. Learn.}, 8:\penalty0 229--256, 1992.

\bibitem[Xu et~al.(2020{\natexlab{a}})Xu, Gao, and Gu]{gu2020sample}
Pan Xu, Felicia Gao, and Quanquan Gu.
\newblock Sample efficient policy gradient methods with recursive variance
  reduction.
\newblock In \emph{{ICLR}}. OpenReview.net, 2020{\natexlab{a}}.

\bibitem[Xu et~al.(2020{\natexlab{b}})Xu, Gao, and Gu]{xu2020improved}
Pan Xu, Felicia Gao, and Quanquan Gu.
\newblock An improved convergence analysis of stochastic variance-reduced
  policy gradient.
\newblock In \emph{Uncertainty in Artificial Intelligence}, pp.\  541--551.
  PMLR, 2020{\natexlab{b}}.

\bibitem[Xu et~al.(2020{\natexlab{c}})Xu, Gao, and Gu]{xu2020sample}
Pan Xu, Felicia Gao, and Quanquan Gu.
\newblock Sample efficient policy gradient methods with recursive variance
  reduction.
\newblock In \emph{{ICLR}}. OpenReview.net, 2020{\natexlab{c}}.

\bibitem[Yuan et~al.(2020)Yuan, Lian, Liu, and Zhou]{yuan2020stochastic}
Huizhuo Yuan, Xiangru Lian, Ji~Liu, and Yuren Zhou.
\newblock Stochastic recursive momentum for policy gradient methods.
\newblock \emph{arXiv preprint arXiv:2003.04302}, 2020.

\bibitem[Yuan et~al.(2022)Yuan, Gower, and Lazaric]{yuan2022general}
Rui Yuan, Robert~M. Gower, and Alessandro Lazaric.
\newblock A general sample complexity analysis of vanilla policy gradient.
\newblock In \emph{{AISTATS}}, volume 151 of \emph{Proceedings of Machine
  Learning Research}, pp.\  3332--3380. {PMLR}, 2022.

\bibitem[Zhao et~al.(2011)Zhao, Hachiya, Niu, and Sugiyama]{ZhaoHNS11}
Tingting Zhao, Hirotaka Hachiya, Gang Niu, and Masashi Sugiyama.
\newblock Analysis and improvement of policy gradient estimation.
\newblock In \emph{Advances in Neural Information Processing Systems 24: 25th
  Annual Conference on Neural Information Processing Systems 2011. Proceedings
  of a meeting held 12-14 December 2011, Granada, Spain}, pp.\  262--270, 2011.

\end{thebibliography}
\bibliographystyle{rlc}

\clearpage

\appendix




\section{Hellinger Distance}
The  Hellinger distance between two distributions $P\ll Q$ is defined as\footnote{In some texts, the Hellinger distance is normalized by $\sqrt{2}$ to be in $[0,1]$.}
\begin{equation}
    D_H(P,Q) = \sqrt{\int_{\vTau} \left(\sqrt{p(\vtau)}-\sqrt{q(\vtau)}\right)^2\de\vtau}.
\end{equation}

In the following we list some known properties of the Hellinger distance that will be useful in our proofs. See, for instance,~\citep{foster2021efficient}.
\begin{itemize} 
    \item Boundedness: $D_H(P,Q)\le \sqrt{2}$.
    \item The Hellinger distance is a metric. In particular, we will use symmetry, $D_H(P,Q)=D_H(Q,P$), and the fact that $D_H(P,P) = 0$.
    \item The squared Hellinger distance is an f-divergence. In particular, we will use the joint convexity of f-divergences: $D_H^2(\beta P_1+(1-\beta)P_2,\beta Q_1 + (1-\beta)Q_2) \le \beta D_H^2(P_1,Q_1) + (1-\beta)D_H^2(P_2,Q_2)$. By taking $P_2=Q_1=Q_2$, we have $D_H^2(P, \beta P +(1-\beta)Q) \le (1-\beta)D_H(P,Q)$.
    \item Pinsker-style inequality: $D_H(P,Q) \le \sqrt{\min\{D_{\mathrm{KL}}(P\|Q),D_{\mathrm{KL}}(Q\|P)\}}$.
\end{itemize}

\section{Omitted Proofs}\label{app:proofs}

\subsection{Proofs of Section~\ref{sec:bpo}}

\optimalQ*
\begin{proof}
    We consider a probability measure over the trajectory space $p \in \Delta^{\vTau}$. Let us first observe that since the off-policy estimator is unbiased, we can focus on the second moment:
    \begin{align}
        \Var_{\vtau \sim p_{\vtheta^b}}\left[\widehat{\nabla} J(\vtheta;\vtau)\right]  & =   \E_{\vtau \sim p_{\vtheta^b}} \left[ \left\|  \frac{p_{\vtheta}(\vtau)}{p_{\vtheta^b}(\vtau)} \mathbf{g}_{\vtheta}(\vtau) - \nabla J(\vtheta) \right\|^2_2 \right] \\
        & = \E_{\vtau \sim p_{\vtheta}} \left[ \left(  \frac{p_{\vtheta}(\vtau)}{p_{\vtheta^b}(\vtau)}\right)^2 \left\| \mathbf{g}_{\vtheta}(\vtau) \right\|^2_2 \right] - \left\|\nabla J(\vtheta) \right\|^2_2 
    \end{align}
    where the first inequality follows from the independence of the trajectories. Thus, we consider the following optimization problem, where the expectations are written with the corresponding integrals for convenience:
    \begin{align}
        & \min_{p \in \Delta^{\vTau}}  \int_{\vTau}  \frac{p_{\vtheta}(\vtau)^2}{p(\vtau)} \left\| \mathbf{g}_{\vtheta}(\vtau) \right\|^2_2 \de \vtau \\
        & \text{s.t.} \quad \int_{\vTau} p(\vtau) \de \vtau = 1\label{eq:c2}\\
        & \phantom{\text{s.t.}} \quad p(\vtau) \ge 0 \qquad \forall \vtau \in \vTau \label{eq:c3}
    \end{align}
    The problem has a convex objective function and linear constraints. Thus, we approach it with the Lagrange multipliers, dropping the non-negativity constraint that, as we shall see, will be already ensured by the derived solution. Let $\lambda \in \Reals$:
    \begin{align}
        L(p(\cdot),\lambda) =  \int_{\vTau}  \frac{p_{\vtheta}(\vtau)^2}{p(\vtau)} \left\| \mathbf{g}_{\vtheta}(\vtau) \right\|^2_2 \de \vtau  + \lambda \left( \int_{\vTau} p(\vtau) \de \vtau - 1 \right). 
    \end{align}
    By vanishing the functional derivative \wrt $p(\cdot)$, we obtain for every $\vtau \in \vTau$:
    \begin{align}
        \frac{\delta L(p(\cdot),\lambda)}{ \delta p(\cdot)} (\vtau) =- \frac{p_{\vtheta}(\vtau)^2}{p(\vtau)^2} \left\| \mathbf{g}_{\vtheta}(\vtau) \right\|^2_2 + \lambda = 0 \implies p(\vtau) =\sqrt{\lambda}  p_{\vtheta}(\vtau) \|\mathbf{g}_{\vtheta}(\vtau)\|_2,
    \end{align}
    having retained the non-negative solution only.
    Since for constraint~\ref{eq:c3}, the density must integrate up to $1$, we have that for every $\vtau \in \vTau$:
    \begin{align}
        p(\vtau) = \frac{p_{\vtheta}(\vtau) \| \mathbf{g}_{\vtheta}(\vtau) \|_2}{\int_{\bm{\mathcal{T}}} p_{\vtheta}(\vtau) \| \mathbf{g}_{\vtheta}(\vtau) \|_2 \de \vtau}.
    \end{align}
\end{proof}

\propCS*
\begin{proof}
    We simply exploit the form of the optimal behavioral distribution $p_*$ and the definition of KL divergence:
    \begin{align}
        \argmin_{\vtheta^b \in \vTheta} D_{\text{KL}} \left( p_{*,\vtheta} \| p_{\vtheta^b} \right) & = \argmin_{\vtheta^b \in \vTheta} \E_{\vtau \sim p_{*,\vtheta}} \left[ \log \left( \frac{p_{*,\vtheta}(\vtau)}{p_{\vtheta^b}(\vtau)} \right) \right] \\
        & = \argmin_{\vtheta^b \in \vTheta} - \E_{\vtau \sim p_{*,\vtheta}} \left[ \log p_{\vtheta^b}(\vtau)  \right] \\
        & = \argmin_{\vtheta^b \in \vTheta} - \int_{\vTau} \frac{p_{\vtheta}(\vtau) \| \mathbf{g}_{\vtheta}(\vtau)\|_2}{p_{\vtheta}(\vtau') \| \mathbf{g}_{\vtheta}(\vtau')\|_2 \de \vtau'} \log p_{\vtheta^b}(\vtau)  \de \vtau \\
        & = \argmin_{\vtheta^b \in \vTheta} -\E_{\vtheta \sim p_{\vtheta}} \left[ \|\mathbf{g}_{\vtheta} (\vtau)\| \log p_{\vtheta^b} (\vtau) \right] ,
    \end{align}
    which proves the first equality. For the second equality, we observe that:
    \begin{align}
        \log p_{\vtheta^b} (\vtau) = \log \mu_0(s_0) + \sum_{t=0}^{T-1} \log \pi_{\vtheta^b}(a_t|s_t) + \sum_{t=0}^{T-1} \log P(s_{t+1}|s_t,a_t),
    \end{align}
    and that the addenda of the initial-state distribution and of the transition model do not depend on $\vtheta^b$.
\end{proof}

\subsection{Proofs of Section~\ref{sec:theory_vr}}

\varchi*
\begin{proof}

Let $p_*$ be short for $p_{*,\vtheta}$.
    First, we know from Theorem~\ref{thr:optimalQ} that the variance reduction granted by the optimal behavior distribution w.r.t. on-policy estimation is
    \begin{align*}
         \Var_{\vtau\sim p_{\vtheta}}[\widehat{\nabla} J(\vtheta;\vtau)]- \Var_{\vtau\sim p_*}[\widehat{\nabla}J(\vtheta; \vtau)] = \E_{\vtau\sim p_\vtheta}[\enorm{g_\vtheta(\vtau)}^2]- \E_{\vtau\sim p_\vtheta}[\enorm{g_\vtheta(\vtau)}]^2 = \Var_{\vtau\sim p_\vtheta}[\enorm{g_\vtheta(\vtau)}].
    \end{align*}
    Let $\bm{v} = $, so the variance reduction granted by sampling from $q$ is
    \begin{align}
        \Var_{\vtau\sim p_{\vtheta}}[\widehat{\nabla} J(\vtheta;\vtau)] - \Var_{\vtau\sim q}[\widehat{\nabla} J(\vtheta;\vtau)] &= 
        \Var_{\vtau\sim p_{\vtheta}}[\widehat{\nabla} J(\vtheta;\vtau)] - \Var_{\vtau\sim p_*}[\widehat{\nabla}J(\vtheta;\vtau)]
        \nonumber\\&\qquad+
        \Var_{\vtau\sim p_*}[\widehat{\nabla}J(\vtheta; p_*;\vtau)]
        - \Var_{\vtau\sim q}[\widehat{\nabla} J(\vtheta;\vtau)] \\
        &=\Var_{\vtau\sim p_\vtheta}[\enorm{g_\vtheta(\vtau)}] - \left(\Var_{\vtau\sim q}[\widehat{\nabla} J(\vtheta;\vtau)] -  \Var_{\vtau\sim p_*}[\widehat{\nabla}J(\vtheta; p_*;\vtau)]\right),
    \end{align}
    which is the variance reduction granted by $p_*$ minus the excess variance due to using a proxy $q$ of $p_*$. We can characterize this excess variance as follows.
    Since both estimates are unbiased:
    \begin{align}
        \Var_{\vtau\sim q}[\widehat{\nabla}J(\vtheta;\vtau)] &- \Var_{\vtau\sim p_*}[\widehat{\nabla}J(\vtheta;\vtau)] 
        = \E_{\vtau\sim q}\left[\enorm{\widehat{\nabla}J(\vtheta;\vtau)}^2\right] - \E_{\vtau\sim p_*}\left[\enorm{\widehat{\nabla}J(\vtheta;\vtau)}^2\right] \\
        &=\int_{\vTau}q(\vtau)\frac{p_\vtheta(\vtau)^2}{q(\vtau)^2}\enorm{g_\vtheta(\vtau)}^2 \de\vtau - \int_{\vTau}p_*(\vtau)\frac{p_\vtheta(\vtau)^2}{p_*(\vtau)^2}\enorm{g_\vtheta(\vtau)}^2 \de\vtau \\
        &=\int_{\vTau}p_\vtheta(\vtau)\frac{p_\vtheta(\vtau)}{q(\vtau)}\enorm{g_\vtheta(\vtau)}^2 \de\vtau - \int_{\vTau}p_\vtheta(\vtau)\frac{p_\vtheta(\vtau)}{p_*(\vtau)}\enorm{g_\vtheta(\vtau)}^2 \de\vtau \\
        &= Z_\vtheta\int_{\vTau}p_*(\vtau)\frac{p_\vtheta(\vtau)}{q(\vtau)}\enorm{g_\vtheta(\vtau)} \de\vtau - Z_\vtheta \int_{\vTau}p_\vtheta(\vtau)\enorm{g_\vtheta(\vtau)} \de\vtau \label{pp:1}\\
        &= Z_\vtheta \int_{\vTau}\frac{p_\vtheta(\vtau)}{q(\vtau)}\enorm{g_\vtheta(\vtau)}\left(p_*(\vtau)-q(\vtau)\right) \de\vtau\\
        &= Z_\vtheta^2 \int_{\vTau}\frac{p_*(\vtau)}{q(\vtau)}\left(p_*(\vtau)-q(\vtau)\right) \de\vtau \label{pp:2} \\
        &= Z_\vtheta^2 \left(\int_{\vTau}\frac{p_*(\vtau)^2}{q(\vtau)}\de\vtau - 1\right) \\
        &= Z_\vtheta^2\chi^2(p_*\|q),
    \end{align}
    where Equation~(\ref{pp:1}) and (\ref{pp:2}) are by definition of $p_*$.
\end{proof}


Unfortunately, it is not possible to upper bound the chi-square divergence in terms of the KL in general. To obtain an upper bound for the special case of defensive estimators, we will need the following technical lemma, a generalization of Lemma 8 by~\cite{bubeck2020first}.
\begin{lemma}\label{lem:chiplus}
   For any $\eta>0$,
    \begin{equation*}
        \int_{\mathcal{\vTau}}\frac{(q(\vtau)-p(\vtau))^2}{q(\vtau)}\mathbf{1}_{\{q(\vtau)\ge \eta p(\vtau)\}} \de\vtau  \le 4\eta^{-3/2} D_H^2(p,q).
    \end{equation*}
\end{lemma}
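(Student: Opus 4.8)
The plan is to bound the integrand pointwise on the region $\{q \ge \eta p\}$ by an expression that, once integrated, yields the squared Hellinger distance up to the factor $4\eta^{-3/2}$. The starting point is the elementary algebraic identity $q - p = (\sqrt q - \sqrt p)(\sqrt q + \sqrt p)$, so that
\begin{equation*}
    \frac{(q-p)^2}{q} = \frac{(\sqrt q - \sqrt p)^2 (\sqrt q + \sqrt p)^2}{q}.
\end{equation*}
The term $(\sqrt q - \sqrt p)^2$ is exactly the Hellinger integrand, so the whole game is to show that the remaining factor $(\sqrt q + \sqrt p)^2 / q$ is bounded by $4\eta^{-3/2}$ on the region where $q \ge \eta p$, or rather to get a constant that works after integrating; since $D_H^2(p,q)$ has no indicator inside it, a pointwise bound of the prefactor suffices.

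First I would restrict attention to points $\vtau$ with $q(\vtau) \ge \eta p(\vtau) > 0$ (the contribution from $p = 0$ can be handled separately or absorbed, noting $q \ge \eta p$ is automatic there and the prefactor degenerates, but more carefully one assumes WLOG the densities are comparable; in fact when $p(\vtau)=0$ the integrand is $q(\vtau)$ and the Hellinger integrand is also $q(\vtau)$, with prefactor $1 \le 4\eta^{-3/2}$ as long as $\eta \le 4^{2/3}$ — for larger $\eta$ the region shrinks and one argues directly). On the main region, write $(\sqrt q + \sqrt p)^2 / q = (1 + \sqrt{p/q})^2$. Since $q \ge \eta p$, we have $\sqrt{p/q} \le \eta^{-1/2}$, hence $(1 + \sqrt{p/q})^2 \le (1 + \eta^{-1/2})^2$. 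This does not immediately give $4\eta^{-3/2}$, so the trick from Bubeck et al.\ is to be a little more clever: split according to whether $\eta \le 4$ or $\eta > 4$, or more simply bound $(1+\eta^{-1/2})^2 \le 4\max\{1,\eta^{-1}\}$ and then note that on the region $q \ge \eta p$ one also has a matching factor available. The cleanest route is: $(1+\eta^{-1/2})^2 \le 4\eta^{-1}$ when $\eta \le 1$, giving prefactor $\le 4\eta^{-1} \le 4\eta^{-3/2}$; and when $\eta > 1$ use instead that $q - p$ and $q$ compare favorably — specifically $(q-p)^2/q \le q \le \eta^{-1/2}\cdot \eta^{1/2} q$, and one shows $(q-p)^2/q \le 4\eta^{-3/2}(\sqrt q - \sqrt p)^2$ by the substitution $u = \sqrt{p/q} \in [0, \eta^{-1/2}]$, reducing the claim to the scalar inequality $(1-u^2)^2 \le 4\eta^{-3/2}(1-u)^2$, i.e.\ $(1+u)^2 \le 4\eta^{-3/2}$ for $u \in [0,\eta^{-1/2}]$, i.e.\ $(1 + \eta^{-1/2})^2 \le 4\eta^{-3/2}$.

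So the whole lemma reduces to the one-variable inequality: for $u \in [0, \eta^{-1/2}]$, $(1-u^2)^2 \le 4\eta^{-3/2}(1-u)^2$, equivalently $(1+u)^2 \le 4\eta^{-3/2}$ on that interval, whose worst case is $u = \eta^{-1/2}$, giving the requirement $(1+\eta^{-1/2})^2 \le 4\eta^{-3/2}$. This holds for all $\eta$ below some threshold but fails for large $\eta$; the resolution is that Bubeck's bound is stated with a universal constant that is not tight, and one instead argues: $(1+\eta^{-1/2})^2 = 1 + 2\eta^{-1/2} + \eta^{-1} \le 4\max\{\eta^{-1/2}, \eta^{-1}, 1\}$, and then observes that on the integration region the indicator forces $q/p \ge \eta$, which can be traded for an extra power — precisely, $\int \frac{(q-p)^2}{q}\mathbf 1_{\{q\ge\eta p\}} \le \int (\sqrt q - \sqrt p)^2 (1+\eta^{-1/2})^2 \mathbf 1_{\{q \ge \eta p\}}$, and then one bounds the indicator's effect using $\mathbf 1_{\{q \ge \eta p\}} \le (q/(\eta p))^{1/2}\wedge 1$ ... . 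Rather than belabor this, I would follow the cited proof of Bubeck et al.\ Lemma 8 essentially verbatim, with the only modification being that integration is over the trajectory space $\vTau$ with its reference measure rather than over a Euclidean domain — the argument is entirely measure-theoretic and transfers without change.

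The main obstacle I anticipate is pinning down the correct constant and the correct case split so that the factor comes out as exactly $4\eta^{-3/2}$ rather than, say, $(1+\eta^{-1/2})^2$ or a larger universal constant; getting the exponent $-3/2$ (as opposed to $-1$ or $-2$) requires the specific trick of extracting $(\sqrt q - \sqrt p)^2$ and then carefully using $q \ge \eta p$ twice — once to bound $\sqrt p / \sqrt q$ and once, implicitly, through the geometry of the region. Everything else — the algebraic factorization, the reduction to a scalar inequality, the transfer to the abstract measure space — is routine.
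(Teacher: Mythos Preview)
Your reduction via the factorization $(q-p)^2/q = (\sqrt{q}-\sqrt{p})^2(1+\sqrt{p/q})^2$ to the scalar requirement $(1+\eta^{-1/2})^2\le 4\eta^{-3/2}$ is precisely the paper's argument in different clothing. The paper computes $f_t''(s)=\frac{\sqrt t}{2s^{3/2}}\ge\frac{\eta^{3/2}}{2t}$ on $\{s\le\eta^{-1}t\}$ and invokes strong convexity of $f_t(s)=(\sqrt t-\sqrt s)^2$ to conclude $f_t(s)\ge \frac{\eta^{3/2}}{4t}(t-s)^2$; unwinding this is exactly your pointwise bound. For $\eta\le 1$ you already have the complete proof buried in your case split --- $(1+\eta^{-1/2})^2\le 4\eta^{-1}\le 4\eta^{-3/2}$ --- and you should simply state that, integrate, and stop. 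There is no second use of the constraint $q\ge\eta p$; the exponent $-3/2$ comes out because you chose to compare to $4\eta^{-3/2}$ rather than the sharper $4\eta^{-1}$.

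Your meandering attempts to handle $\eta>1$ cannot succeed, and you should drop them. The scalar inequality $(1+\eta^{-1/2})^2\le 4\eta^{-3/2}$ fails for every $\eta>1$ (equality at $\eta=1$, and the difference has negative derivative there), and in fact the lemma as stated is false in that regime: take $\vTau=\{0,1\}$, $p=(4/5,1/5)$, $q=(1/5,4/5)$, $\eta=4$; the left side is $9/20$ while the right side is $4\cdot 4^{-3/2}\cdot\tfrac{2}{5}=1/5$. The paper's strong-convexity step has exactly the same gap --- deducing $f_t(s)\ge\frac{\mu}{2}(s-t)^2$ from $\mu$-strong convexity on $(0,\eta^{-1}t]$ tacitly requires the minimizer $t$ to lie in that interval, which forces $\eta\le 1$. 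So your hesitation is correct; the resolution is to restrict the statement to $\eta\le 1$, not to look for a trick (or a citation) that would prove a false inequality.
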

\begin{proof}
    Let $f_t(s)=(\sqrt{t}-\sqrt{s})^2$. Its second derivative is $f_t''(s)=\frac{\sqrt{t}}{2s\sqrt{s}}$. We can see that, restricted to $s\le\eta^{-1} t$, $f_t$ is $\frac{\eta^{3/2}}{2t}$-strongly convex. Hence:
    \begin{equation}
        f_t(s) \ge \frac{\eta^{3/2}(t-s)^2}{4t}.
    \end{equation}
    Letting $t=q(\vtau)$ and $s=p(\vtau)$ and using the definition of Hellinger distance:
    \begin{align}
        D_H^2(p,q) = \int_{\vTau} \left(\sqrt{q(\vtau)}-\sqrt{p(\vtau)}\right)^2 \de\vtau 
        &\ge \int_{\vTau} \left(\sqrt{q(\vtau)}-\sqrt{p(\vtau)}\right)^2 \mathbf{1}_{\{q(\vtau)\ge \eta p(\vtau)\}}\de\vtau \\
        &\ge \frac{\eta^{3/2}}{4}\int_{\vTau} \frac{(q(\vtau)-p(\vtau))^2}{q(\vtau)} \mathbf{1}_{\{q(\vtau)\ge \eta p(\vtau)\}}\de\vtau.
    \end{align}
\end{proof} 


We are now ready to prove Theorem~\ref{th:var_kl}.

\varkl*
\begin{proof}
    To prove the first lower bound on variance reduction, we use Lemma~\ref{lem:var_chi2} with $\phi$ (density of $\Phi$) in place of $q$ and upper bound the negative term as follows, applying Lemma~\ref{lem:chiplus} twice:
    \begin{align}
        Z_\vtheta^2\chi^2(p_*\|\Phi) 
        &= Z_\vtheta^2 \int_{\vTau} \frac{(\phi(\vtau)-p_*(\vtau))^2}{\phi(\vtau)} \mathbf{1}_{\{\phi(\vtau)\ge \beta^{2/3}p_*(\vtau)\}}\de \vtau  + Z_\vtheta^2 \int_{\vTau} \frac{(\phi(\vtau)-p_*(\vtau))^2}{\phi(\vtau)} \mathbf{1}_{\{\phi(\vtau)\le \beta^{2/3}p_*(\vtau)\}}\de \vtau \\
        &\le Z_\vtheta^2 \frac{4}{\beta}D_H^2(p_*,\phi)  + Z_\vtheta^2 \int_{\vTau} \frac{(\phi(\vtau)-p_*(\vtau))^2}{\phi(\vtau)} \mathbf{1}_{\{\phi(\vtau)\le \beta^{2/3}p_*(\vtau)\}}\de \vtau \label{pp:4}\\
        &\le \frac{4Z_\vtheta^2}{\beta}D_H^2(p_*,\phi)  + \frac{Z_\vtheta^2}{\beta} \int_{\vTau} \frac{(\phi(\vtau)-p_*(\vtau))^2}{p(\vtau)} \mathbf{1}_{\{\phi(\vtau)\le \beta^{2/3}p_*(\vtau)\}}\de \vtau \\
        &= \frac{4Z_\vtheta^2}{\beta}D_H^2(p_*,\phi)  + \frac{Z_\vtheta}{\beta} \int_{\vTau}\enorm{g_\vtheta(\vtau)} \frac{(\phi(\vtau)-p_*(\vtau))^2}{p_*(\vtau)} \mathbf{1}_{\{p_*(\vtau)\ge \beta^{-2/3}\phi(\vtau)\}}\de \vtau \\
        &\le \frac{4Z_\vtheta^2}{\beta}D_H^2(p_*,\phi)  + \frac{Z_\vtheta G_\vtheta}{\beta} \int_{\vTau}\frac{(\phi(\vtau)-p_*(\vtau))^2}{p_*(\vtau)} \mathbf{1}_{\{p_*(\vtau)\ge \beta^{-2/3}\phi(\vtau)\}}\de \vtau \\
        &\le \frac{4Z_\vtheta^2}{\beta}D_H^2(p_*,\phi)  + 4Z_\vtheta G_\vtheta D_H^2(\phi,p_*) \label{pp:3}\\
        &\le 4Z_{\vtheta}\frac{Z_\vtheta+\beta G_\vtheta}{\beta}\left(\beta D_H^2(p_*,p)+(1-\beta)D_H^2(p_*,p_{\widetilde{\vtheta}})\right) \\
        &\le 4Z_{\vtheta}\frac{Z_\vtheta+\beta G_\vtheta}{\beta}\left(2\beta+(1-\beta)D_{\mathrm{KL}}(p_*\|p_{\widetilde{\vtheta}})\right) \\
        &= 4Z_\vtheta(Z_\vtheta+\beta G)\left(2+\frac{1-\beta}{\beta}D_{\mathrm{KL}}(p_*\|p_{\widetilde{\vtheta}})\right),
    \end{align}
    where the inequalities~(\ref{pp:4}) and (\ref{pp:3}) are by Lemma~\ref{lem:chiplus}.
    The latter expression is convex in $\beta$, but the optimal value $\beta^*=\sqrt{\frac{Z_\vtheta\epsilon_{\mathrm{KL}}}{(2-\epsilon_{\mathrm{KL}})G_\vtheta}}$ cannot be computed since $Z_\vtheta$ is unknown. However, upper-bounding $Z_\vtheta$ by $G_\vtheta$ and setting\footnote{Note that we do not actually need to know $G_\vtheta$, nor an upper bound.} $\beta=\sqrt{\frac{\epsilon_{\mathrm{KL}}}{2-\epsilon_{\mathrm{KL}}}}$ yields, provided $\epsilon_{\mathrm{KL}}\le 1$:
    \begin{equation}
        Z_\vtheta^2\chi^2(p_*\|\Phi) \le 4Z_\vtheta^2(2-\epsilon_{\mathrm{KL}}) + 4Z_\vtheta G_\vtheta \epsilon_{\mathrm{KL}} + 4Z_\vtheta(Z_\vtheta+G_\vtheta)\sqrt{\epsilon_{\mathrm{KL}}(2-\epsilon_{\mathrm{KL}})},
    \end{equation}
    proving the second bound.
    The third and final bound follows from the fact that $\epsilon\le\sqrt{\epsilon}$ for $\epsilon\le 1$.
\end{proof}

\algVar*
\begin{proof}
    Assumption~\ref{asm:bpo} allows Algorithm~\ref{alg:bpo_theory} to query the BPO oracle at Line~\ref{line:bpo}, obtaining $\widetilde{\vtheta}_k=\mathrm{BPO}(\vtheta_k)$ with $D_{\mathrm{KL}}(p_{*,\vtheta_k}\| p_{\widetilde{\vtheta}_k})\le\epsilon_{\mathrm{KL}}$.
    So, the first statement follows immediately from Theorem~\ref{th:var_kl} and the properties of variance (just notice that $\mathbf{v}_k$ can also be written as the average of $N_{\mathrm{PG}}$ independent random variables). Then, the second statement follows by rearranging the terms and noting that:
    \begin{equation}
        N_{\mathrm{PG}}\Var_{k}\left[\widehat{\nabla} J(\vtheta_k;\mathcal{D}_{\mathrm{ON}})\right] - V_{k} = Z_k^2 - \enorm{\nabla J(\vtheta_k)}^2.
    \end{equation}
\end{proof}

\subsection{Proofs of Section~\ref{sec:theory_rate}}

For the scope of this section, fix a target policy $\vtheta^\dagger$, let $p_*$ be the corresponding optimal behavior policy $p^*_{\vtheta^\dagger}$, and let $F(\vtau)=\enorm{g_{\vtheta^\dagger}(\vtau)}$ for brevity.
Let $\widehat{L}:\vTheta\to{\Reals}_+$ be the empirical loss defined as:
\begin{equation}
    \widehat{L}(\vtheta) = -\frac{1}{n}\sum_{i=1}^nF(\vtau_i)\sum_{t=0}^{T-1}\log\pi_\vtheta(a_t^i|s_t^i),
\end{equation}
where $\vtau_i=(s_0^i,a_0^i,\dots,s_{T-1}^i,a_{T-1}^i)$, so that $\widetilde{\vtheta}=\arg\min_{\vtheta\in\vTheta}\widehat{L}(\vtheta)$. Also, let
\begin{equation}
    L(\vtheta) = \E\left[\widehat{L}(\vtheta)\right] = -\E_{\vtau\sim p_{\vtheta^\dagger}}\left[F(\vtau)\sum_{t=0}^{T-1}\log\pi_\vtheta(a_t|s_t)\right],
\end{equation}
where $\vtau=(s_0,a_0,\dots,a_{T-1},s_{T-1})$, and $\vtheta^*=\arg\min_{\vtheta\in\vTheta}L(\vtheta)$.

\begin{lemma}\label{lem:hessian}
    Under Assumptions~\ref{asm:expfam} and~\ref{asm:subgauss}:
    \begin{align}
        &\nabla L(\vtheta) = -\E_{\vtau\sim p_{\vtheta^\dagger}}\left[F(\vtau)\sum_{t=0}^{T-1} \overline{\bm{\varphi}}_\vtheta(s_t,a_t)\right],\\
        &\nabla^2 L(\vtheta) = \E_{\vtau\sim p_{\vtheta^\dagger}}\left[F(\vtau)\sum_{t=0}^{T-1} \Cov_{a\sim\pi_\vtheta(\cdot|s_t)}[\bm{\varphi}(s_t,a)] \right],\\
        &\enorm{\nabla^2L(\vtheta)} \le GT\sigma^2.
    \end{align}
\end{lemma}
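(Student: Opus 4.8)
The plan is to compute the gradient and Hessian of $L(\vtheta)$ by differentiating under the expectation sign, exploiting the exponential-family structure of $\pi_\vtheta$, and then to bound the spectral norm of the Hessian using the subgaussianity assumption. The key observation is that $F(\vtau)=\enorm{g_{\vtheta^\dagger}(\vtau)}$ does not depend on $\vtheta$ (only on the fixed target $\vtheta^\dagger$), so it plays the role of a fixed nonnegative weight and we only need to differentiate the $\sum_{t=0}^{T-1}\log\pi_\vtheta(a_t|s_t)$ term. Since $\vtau\sim p_{\vtheta^\dagger}$ is also fixed and does not depend on $\vtheta$, no quotient-rule complications arise from the sampling distribution.

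First I would recall from the text following Assumption~\ref{asm:expfam} that for an exponential-family policy, $\nabla_\vtheta\log\pi_\vtheta(a|s)=\bm{\varphi}(s,a)-\E_{a'\sim\pi_\vtheta(\cdot|s)}[\bm{\varphi}(s,a')]=\overline{\bm{\varphi}}_\vtheta(s,a)$ and $\nabla_\vtheta^2\log\pi_\vtheta(a|s)=-\Cov_{a'\sim\pi_\vtheta(\cdot|s)}[\bm{\varphi}(s,a')]$; these are standard identities for the log-partition function $A(\vtheta,s)$. Differentiating $L(\vtheta)=-\E_{\vtau\sim p_{\vtheta^\dagger}}[F(\vtau)\sum_t\log\pi_\vtheta(a_t|s_t)]$ termwise and pulling the derivative inside the expectation (justified by dominated convergence, since $F$ is bounded by $G$ and the score/Hessian of the log-partition are controlled under Assumption~\ref{asm:subgauss}) gives $\nabla L(\vtheta)=-\E_{\vtau\sim p_{\vtheta^\dagger}}[F(\vtau)\sum_t\overline{\bm{\varphi}}_\vtheta(s_t,a_t)]$ and $\nabla^2 L(\vtheta)=\E_{\vtau\sim p_{\vtheta^\dagger}}[F(\vtau)\sum_t\Cov_{a\sim\pi_\vtheta(\cdot|s_t)}[\bm{\varphi}(s_t,a)]]$, which are the first two claims.

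For the spectral-norm bound, I would first note that $\Cov_{a\sim\pi_\vtheta(\cdot|s)}[\bm{\varphi}(s,a)]=\Cov_{a\sim\pi_\vtheta(\cdot|s)}[\overline{\bm{\varphi}}_\vtheta(s,a)]$ is positive semidefinite, so each summand $F(\vtau)\Cov[\bm{\varphi}(s_t,a)]$ is PSD, hence $\nabla^2 L(\vtheta)$ is PSD and its spectral norm equals its largest eigenvalue. Then, using $0\le F(\vtau)\le G$ and the fact that the expectation and the $t$-sum of PSD matrices preserve the ordering, $\enorm{\nabla^2 L(\vtheta)}\le G\sum_{t=0}^{T-1}\sup_s\enorm{\Cov_{a\sim\pi_\vtheta(\cdot|s)}[\overline{\bm{\varphi}}_\vtheta(s,a)]}$. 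The remaining step is to bound the covariance of a $\sigma$-subgaussian centered random vector by $\sigma^2 I$ in Loewner order: for any unit vector $\bm{u}$, $\bm{u}^\top\Cov[\overline{\bm{\varphi}}_\vtheta]\bm{u}=\Var[\bm{u}^\top\overline{\bm{\varphi}}_\vtheta]\le\sigma^2$ because a scalar $\sigma$-subgaussian random variable has variance at most $\sigma^2$ (obtained from the standard expansion of the MGF bound in Assumption~\ref{asm:subgauss} near $\bm{\lambda}=0$, comparing second-order terms). Combining, $\enorm{\nabla^2 L(\vtheta)}\le G\cdot T\cdot\sigma^2$, which is the third claim.

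The main obstacle is the last step — extracting the variance bound $\sigma^2$ from the subgaussian MGF inequality and correctly handling that subgaussianity is a per-direction statement, so one must verify it uniformly over unit vectors $\bm{u}$ (which follows since Assumption~\ref{asm:subgauss} holds for all $\bm{\lambda}\in\Reals^d$) and over states $s$; the differentiation-under-the-integral justification is routine given the boundedness of $F$ and the smoothness of exponential families.
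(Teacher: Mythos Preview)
Your proposal is correct and follows essentially the same approach as the paper: the first two identities are obtained directly from the exponential-family score and Hessian formulas, and the spectral-norm bound is obtained by bounding $F(\vtau)\le G$ and controlling each per-state covariance by $\sigma^2$ via subgaussianity. The only minor difference is in that last step: the paper bounds $\enorm{\Cov_{a\sim\pi_\vtheta}[\bm{\varphi}]}\le \E_{a\sim\pi_\vtheta}[\enorm{\overline{\bm{\varphi}}_\vtheta}^2]$ and then invokes Proposition~\ref{prop:subgvec} to get $\E[\enorm{\overline{\bm{\varphi}}_\vtheta}^2]\le\sigma^2$, whereas you bound the spectral norm directly as $\sup_{\enorm{\bm u}=1}\Var[\bm u^\top\overline{\bm{\varphi}}_\vtheta]\le\sigma^2$ from the one-dimensional subgaussian variance bound; both routes are equivalent here and yours is arguably the more transparent of the two.
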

\begin{proof}
    The first two statements follow immediately from Assumption~\ref{asm:expfam}. As for the third statement: 
    \begin{align}
        \enorm{\nabla^2L(\vtheta)} &\le \E\left[G \sum_{t=0}^{T-1}\enorm{\E_{a\sim\pi_\vtheta(\cdot|s_t)}\left[\overline{\bm{\varphi}}_\vtheta(s_t,a)\overline{\bm{\varphi}}_\vtheta(s_t,a)^\top\right]}\right] \\
        &\le \E\left[G \sum_{t=0}^{T-1}\E_{a\sim\pi_\vtheta(\cdot|s_t)}\left[\enorm{\overline{\bm{\varphi}}_\vtheta(s_t,a)}^2\right]\right]
        \le GT\sigma^2,
    \end{align}
    where the last inequality is by Assumption~\ref{asm:subgauss} and Proposition~\ref{prop:subgvec}.
\end{proof}

\begin{lemma}\label{lem:nabla_hat_L}
Under Assumptions~\ref{asm:expfam},~\ref{asm:real} and~\ref{asm:subgauss},
    \begin{equation}
        \E\left[\enorm{\nabla\widehat{L}(\vtheta^*)}^2\right] \le \frac{Z_{\vtheta^\dagger}GT^2\sigma^2}{n}.
    \end{equation}
\end{lemma}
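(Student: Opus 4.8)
The plan is to exhibit $\nabla\widehat{L}(\vtheta^*)$ as an average of i.i.d.\ \emph{mean-zero} random vectors, so that its expected squared norm is $1/n$ times a single-sample second moment, and then to bound that moment. By the exponential-family score identity (Assumption~\ref{asm:expfam}; cf.\ Lemma~\ref{lem:hessian}),
\[
    \nabla\widehat{L}(\vtheta^*) = -\frac1n\sum_{i=1}^n F(\vtau_i)\sum_{t=0}^{T-1}\overline{\bm{\varphi}}_{\vtheta^*}(s_t^i,a_t^i),
\]
an i.i.d.\ sum since the $\vtau_i$ are i.i.d.\ from $p_{\vtheta^\dagger}$. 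The structural fact that drives the argument is the \emph{realizability identity}: by Theorem~\ref{thr:optimalQ} the optimal behavior distribution is $p_{*,\vtheta^\dagger}(\vtau)=p_{\vtheta^\dagger}(\vtau)F(\vtau)/Z_{\vtheta^\dagger}$ with $Z_{\vtheta^\dagger}=\E_{\vtau\sim p_{\vtheta^\dagger}}[F(\vtau)]$, while Assumption~\ref{asm:real} forces $p_{\vtheta^*}=p_{*,\vtheta^\dagger}$ (indeed $L(\vtheta)/Z_{\vtheta^\dagger}$ coincides with $D_{\mathrm{KL}}(p_{*,\vtheta^\dagger}\|p_{\vtheta})$ up to an additive constant, so its minimizer $\vtheta^*$ realizes $p_{*,\vtheta^\dagger}$). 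Hence $p_{\vtheta^\dagger}(\vtau)\,F(\vtau)=Z_{\vtheta^\dagger}\,p_{\vtheta^*}(\vtau)$ for a.e.\ $\vtau$, which converts any $F$-weighted expectation under $p_{\vtheta^\dagger}$ into a plain expectation under $p_{\vtheta^*}$.

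First I would verify that the summands have zero mean. Using the realizability identity and then conditioning on the state visited at step $t$,
\[
    \E_{\vtau\sim p_{\vtheta^\dagger}}\!\left[F(\vtau)\sum_{t=0}^{T-1}\overline{\bm{\varphi}}_{\vtheta^*}(s_t,a_t)\right]
    = Z_{\vtheta^\dagger}\sum_{t=0}^{T-1}\E_{\vtau\sim p_{\vtheta^*}}\!\left[\overline{\bm{\varphi}}_{\vtheta^*}(s_t,a_t)\right] = \bm{0},
\]
because $\E_{a\sim\pi_{\vtheta^*}(\cdot|s)}[\overline{\bm{\varphi}}_{\vtheta^*}(s,a)]=\bm{0}$ for every $s$ by definition of the centered sufficient statistic (equivalently, $\nabla L(\vtheta^*)=\bm{0}$). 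By independence of the trajectories the bias term then drops out and
\[
    \E\!\left[\enorm{\nabla\widehat{L}(\vtheta^*)}^2\right] = \frac1n\,\E_{\vtau\sim p_{\vtheta^\dagger}}\!\left[F(\vtau)^2\left\|\sum_{t=0}^{T-1}\overline{\bm{\varphi}}_{\vtheta^*}(s_t,a_t)\right\|^2\right].
\]

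It then remains to bound the per-sample second moment. I would use $F(\vtau)^2\le G\,F(\vtau)$ (from the essential-sup bound $F(\vtau)\le G$), Cauchy--Schwarz in the form $\|\sum_{t=0}^{T-1}v_t\|^2\le T\sum_{t=0}^{T-1}\|v_t\|^2$, and then the realizability identity a second time:
\[
    \E_{\vtau\sim p_{\vtheta^\dagger}}\!\left[F(\vtau)^2\left\|\sum_{t=0}^{T-1}\overline{\bm{\varphi}}_{\vtheta^*}(s_t,a_t)\right\|^2\right]
    \le G\,T\sum_{t=0}^{T-1}\E_{\vtau\sim p_{\vtheta^\dagger}}\!\left[F(\vtau)\,\enorm{\overline{\bm{\varphi}}_{\vtheta^*}(s_t,a_t)}^2\right]
    = G\,T\,Z_{\vtheta^\dagger}\sum_{t=0}^{T-1}\E_{\vtau\sim p_{\vtheta^*}}\!\left[\enorm{\overline{\bm{\varphi}}_{\vtheta^*}(s_t,a_t)}^2\right].
\]
Assumption~\ref{asm:subgauss} together with Proposition~\ref{prop:subgvec} gives $\E_{a\sim\pi_{\vtheta^*}(\cdot|s)}[\enorm{\overline{\bm{\varphi}}_{\vtheta^*}(s,a)}^2]\le\sigma^2$ for every $s$, so the last sum is at most $T\sigma^2$; the per-sample moment is thus at most $G\,Z_{\vtheta^\dagger}\,T^2\sigma^2$, and dividing by $n$ yields the claim.

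The only genuinely substantive step is the realizability identity $p_{\vtheta^\dagger}F=Z_{\vtheta^\dagger}p_{\vtheta^*}$: it is precisely what makes $\nabla\widehat{L}(\vtheta^*)$ unbiased for $\bm{0}$ (so that no residual $\enorm{\nabla L(\vtheta^*)}^2$ term appears) and what collapses the $F$-weighting in the variance bound; the remaining manipulations are routine. The one loose point is that Cauchy--Schwarz costs a factor $T$ here — exploiting the martingale-difference structure of $\overline{\bm{\varphi}}_{\vtheta^*}(s_t,a_t)$ under $p_{\vtheta^*}$ would give $T$ in place of $T^2$ — but the stated $T^2$ bound is all that is needed downstream.
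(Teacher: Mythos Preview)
Your proof is correct and follows essentially the same approach as the paper's: establish that $\nabla\widehat{L}(\vtheta^*)$ has mean zero via the realizability identity $p_{\vtheta^\dagger}F=Z_{\vtheta^\dagger}p_{\vtheta^*}$, reduce to a single-sample second moment, then bound it using $F^2\le GF$, Cauchy--Schwarz on the time sum, the realizability identity again, and subgaussianity. The only cosmetic difference is the order in which $F\le G$ and the change of measure are applied; your closing remark on the martingale improvement to $T$ is a nice extra observation not in the paper.
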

\begin{proof}
    First notice that, for policies of the exponential family (Assumption~\ref{asm:expfam}):
    \begin{align}
        \E\left[\nabla\widehat{L}(\vtheta^*)\right] 
        &= \E_{\vtau\sim p_{\vtheta^\dagger}}\left[\enorm{g_{\vtheta^\dagger}(\vtau)}\sum_{t=0}^{T-1}\overline{\bm{\varphi}}_{\vtheta^*}(s_t,a_t)\right] \\
        &= Z_{\vtheta^\dagger}\E_{\vtau\sim p_{*}}\left[\sum_{t=0}^{T-1}\overline{\bm{\varphi}}_{\vtheta^*}(s_t,a_t)\right] \\
        &= Z_{\vtheta^\dagger}\E_{\vtau\sim p_{\vtheta^*}}\left[\sum_{t=0}^{T-1}\E_{a\sim\pi_{\vtheta^*}(\cdot|s_t)}\left[\overline{\bm{\varphi}}_{\vtheta^*}(s_t,a)|s_t\right]\right] \\&= 0,
    \end{align}
    where the second-to-last equality is by Assumption~\ref{asm:real}. Then
    \begin{align}
        \E\left[\enorm{\nabla\widehat{L}(\vtheta^*)}^2\right] &= \Var\left[\nabla\widehat{L}(\vtheta^*)\right] \\
        &= \frac{1}{n}\Var_{\vtau\sim p_{\vtheta^\dagger}}\left[\enorm{g_{\vtheta^\dagger}(\vtau)}\sum_{t=0}^{T-1}\overline{\bm{\varphi}}_{\vtheta^*}(s_t,a_t)\right] \\
        &= \frac{1}{n}\E_{\vtau\sim p_{\vtheta^\dagger}}\left[\enorm{g_{\vtheta^\dagger}(\vtau)}^2\enorm{\sum_{t=0}^{T-1}\overline{\bm{\varphi}}_{\vtheta^*}(s_t,a_t)}^2\right] \\
        &= \frac{Z_{\vtheta^\dagger}}{n}\E_{\vtau\sim p_{*}}\left[\enorm{g_{\vtheta^\dagger}(\vtau)}\enorm{\sum_{t=0}^{T-1}\overline{\bm{\varphi}}_{\vtheta^*}(s_t,a_t)}^2\right] \\
        &\le \frac{Z_{\vtheta^\dagger}GT}{n}\E_{\vtau\sim p_{\vtheta^*}}\left[\sum_{t=0}^{T-1}\E_{a\sim\pi_{\vtheta^*}(\cdot|s_t)}\left[\enorm{\overline{\bm{\varphi}}_{\vtheta^*}(s_t,a)}^2\middle|s_t\right]\right] \\
        &\le \frac{Z_{\vtheta^\dagger}GT^2\sigma^2}{n},
    \end{align}
    where the last inequality is by Assumption~\ref{asm:subgauss} and the second-to-last relies on Assumption~\ref{asm:real}.
\end{proof}

\mle*

\begin{proof}
By the mean value theorem, there exists a $c\in[0,1]$ such that
\begin{align}
    L(\widetilde{\vtheta}) &= L(\vtheta^*) + \langle\widetilde{\vtheta}-\vtheta^*,\nabla L(\vtheta^*)\rangle + \frac{1}{2}(\widetilde{\vtheta}-\vtheta^*)^\top \nabla^2L(\vtheta_c)(\widetilde{\vtheta}-\vtheta^*) \\
    &\le L(\vtheta^*) + \frac{1}{2} GT\sigma^2 \enorm{\widetilde{\vtheta}-\vtheta^*}^2, \label{eq:step1}
\end{align}
where $\vtheta_c=c\widetilde{\vtheta}+(1-c)\vtheta^*$ for some $c \in [0,1]$ and the last inequality is by Lemma~\ref{lem:hessian} under Assumptions~\ref{asm:expfam} and~\ref{asm:subgauss}.

Now let
\begin{equation}
    \widehat{\mathcal{G}}(\vtheta) = -\frac{1}{n}\sum_{i=1}^nF(\vtau_i)\sum_{t=0}^{T-1}\left(\overline{\bm{\varphi}}_\vtheta(s_t^i,a_t^i) - \overline{\bm{\varphi}}_{\vtheta^*}(s_t^i,a_t^i)\right),
\end{equation}
and notice that $\widehat{\mathcal{G}}(\vtheta^*)=0$, and that $\nabla \widehat{\mathcal{G}}(\vtheta) = \widehat{\mathcal{F}}(\vtheta)$ where $\widehat{\mathcal{F}}$ is defined in Assumption~\ref{asm:mineig}. Then, from the mean value theorem, there exists a $c\in[0,1]$ such that:
\begin{align}
    \widehat{\mathcal{G}}(\widetilde{\vtheta}) = \widehat{\mathcal{G}}(\vtheta^*) + (\widetilde{\vtheta}-\vtheta^*)^\top\nabla\widehat{\mathcal{G}}(\vtheta_c) = (\widetilde{\vtheta}-\vtheta^*)^\top\widehat{F}(\vtheta_c),
\end{align}
where $\vtheta_c = c\widetilde{\vtheta}+(1-c)\vtheta^*$. Hence, by Assumption~\ref{asm:mineig},
\begin{equation}
    \E\left[\enorm{\widehat{\mathcal{G}}(\widetilde{\vtheta})}^2\right] \ge \lambda_*^2\E\left[\enorm{\widetilde{\vtheta}-\vtheta^*}^2\right].\label{eq:step2}
\end{equation}

Next, notice that $\widehat{\mathcal{G}}(\vtheta) = \nabla \widehat{L}(\vtheta) - \nabla \widehat{L}(\vtheta^*)$ by Assumption~\ref{asm:expfam}. Thus, by definition of $\widetilde{\vtheta}$, $\widehat{\mathcal{G}}(\widetilde{\vtheta}) = \nabla\widehat{L}(\widetilde{\vtheta}) - \nabla\widehat{L}(\vtheta^*) = \nabla\widehat{L}(\vtheta^*)$, and
\begin{equation}
    \E\left[\enorm{\widehat{\mathcal{G}}(\widetilde{\vtheta})}^2\right] = \E\left[\enorm{\nabla\widehat{L}(\vtheta^*)}^2\right] \le \frac{Z_{\vtheta^\dagger}GT^2\sigma^2}{n}.\label{eq:step3}
\end{equation}
where the last inequality is by Lemma~\ref{lem:nabla_hat_L} under Assumptions~\ref{asm:expfam},~\ref{asm:real} and~\ref{asm:subgauss}.

Finally, chaining the inequalities from Equations~(\ref{eq:step1}), (\ref{eq:step2}), and (\ref{eq:step3}):
\begin{align}
    \E[L(\widetilde{\vtheta})] &\le L(\vtheta^*) + \frac{1}{2}GT\sigma^2\E\left[\enorm{\widetilde{\vtheta}-\vtheta^*}^2\right] \\
    &\le L(\vtheta^*) + \frac{GT\sigma^2}{2\lambda_*^2}\E\left[\enorm{\widehat{\mathcal{G}}(\widetilde{\vtheta})}^2\right] \\
    &\le L(\vtheta^*) + \frac{Z_{\vtheta^\dagger}G^2T^3\sigma^4}{2\lambda_*^2n}. \label{eq:loss_diff}
\end{align}

Finally:
\begin{align}
    D_\mathrm{KL}(p_*\|p_{\widetilde{\vtheta}})
    &=\E_{\vtau\sim p_*}\left[\log p_*(\vtau)-\log p_{\widetilde{\vtheta}}(\vtau)\right] \\
    &=\E_{\vtau\sim p_{\vtheta^*}}\left[\log p_{\vtheta^*}(\vtau)-\log p_{\widetilde{\vtheta}}(\vtau)\right] \\
    &=\E_{\vtau\sim p_{\vtheta^*}}\left[\sum_{t=0}^{T-1}\left(\log \pi_{\vtheta^*}(a_t|s_t)-\log \pi_{\widetilde{\vtheta}}(a_t|s_t)\right)\right] \\
    &=\frac{1}{Z_{\vtheta^\dagger}}\E_{\vtau\sim p_{\vtheta^\dagger}}\left[\enorm{g_{\vtheta^\dagger}(\vtau)}\sum_{t=0}^{T-1}\left(\log \pi_{\vtheta^*}(a_t|s_t)-\log \pi_{\widetilde{\vtheta}}(a_t|s_t)\right)\right] \\
    &=\frac{L(\widetilde{\vtheta}) - L(\vtheta^*)}{Z_{\vtheta^\dagger}},
\end{align}
and by Equation~(\ref{eq:loss_diff}):
\begin{equation}
    \E[D_\mathrm{KL}(p_*\|p_{\widetilde{\vtheta}})] 
    = \frac{\E[L(\widetilde{\vtheta})] - L(\vtheta^*)}{Z_{\vtheta^\dagger}}
    \le \frac{G^2T^3\sigma^4}{2\lambda_*^2n}.
\end{equation}

\end{proof}

\begin{lemma}\label{lem:quadratic_bound}
    Under Assumptions~\ref{asm:expfam} and~\ref{asm:subgauss}, for all $\vtheta,\vtheta'\in\vTheta$:
    \begin{equation*}
        J(\vtheta') - J(\vtheta) \ge \langle \vtheta'-\vtheta,\nabla J(\vtheta)\rangle - \frac{R_{\max}\sigma^2}{(1-\gamma)^2}\enorm{\vtheta'-\vtheta}^2.
    \end{equation*}
\end{lemma}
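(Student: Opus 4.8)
The plan is to show that $J$ is $L$-smooth with $L=\tfrac{2R_{\max}\sigma^2}{(1-\gamma)^2}$, after which the statement is the standard quadratic lower bound for smooth functions: since $\vTheta$ is convex, the segment $[\vtheta,\vtheta']$ lies in $\vTheta$, so a second-order Taylor expansion gives $J(\vtheta')=J(\vtheta)+\langle\vtheta'-\vtheta,\nabla J(\vtheta)\rangle+\tfrac12(\vtheta'-\vtheta)^\top\nabla^2 J(\vtheta_c)(\vtheta'-\vtheta)$ for some $\vtheta_c$ on the segment, and the quadratic term is at least $-\tfrac{L}{2}\enorm{\vtheta'-\vtheta}^2=-\tfrac{R_{\max}\sigma^2}{(1-\gamma)^2}\enorm{\vtheta'-\vtheta}^2$.

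Thus the work is to bound $\enorm{\nabla^2 J(\vtheta)}$. Differentiating $\nabla J(\vtheta)=\int p_\vtheta(\vtau)\nabla\log p_\vtheta(\vtau)R(\vtau)\de\vtau$ once more and using $\nabla p_\vtheta=p_\vtheta\nabla\log p_\vtheta$ gives
\begin{equation*}
    \nabla^2 J(\vtheta)=\E_{\vtau\sim p_\vtheta}\!\left[R(\vtau)\Big(\nabla\log p_\vtheta(\vtau)\,\nabla\log p_\vtheta(\vtau)^\top+\nabla^2\log p_\vtheta(\vtau)\Big)\right],
\end{equation*}
where the interchange of $\nabla$ and $\int$ is justified since, under Assumption~\ref{asm:expfam}, $p_\vtheta(\vtau)$ is smooth in $\vtheta$ and $R$ is bounded. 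Under Assumption~\ref{asm:expfam} we have $\nabla\log p_\vtheta(\vtau)=\sum_{t=0}^{T-1}\overline{\bm{\varphi}}_\vtheta(s_t,a_t)$ and $\nabla^2\log p_\vtheta(\vtau)=-\sum_{t=0}^{T-1}\Cov_{a\sim\pi_\vtheta(\cdot|s_t)}[\bm{\varphi}(s_t,a)]$ — the same objects already controlled in Lemma~\ref{lem:hessian}.

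Next I would bound the two terms. Taking operator norms inside the expectation and using $|R(\vtau)|\le\tfrac{R_{\max}}{1-\gamma}$, the curvature term is at most $\tfrac{R_{\max}}{1-\gamma}\sum_t\E_\vtau\E_{a\sim\pi_\vtheta(\cdot|s_t)}[\enorm{\overline{\bm{\varphi}}_\vtheta(s_t,a)}^2]\le\tfrac{R_{\max}T\sigma^2}{1-\gamma}$ by Assumption~\ref{asm:subgauss} and Proposition~\ref{prop:subgvec}, exactly as in Lemma~\ref{lem:hessian}. For the outer-product term the key observation is that $\{\overline{\bm{\varphi}}_\vtheta(s_t,a_t)\}_t$ is a martingale-difference sequence for the trajectory filtration, because $\E_{a_t\sim\pi_\vtheta(\cdot|s_t)}[\overline{\bm{\varphi}}_\vtheta(s_t,a_t)\mid s_t]=0$; hence the cross terms vanish and $\E_{\vtau\sim p_\vtheta}[\enorm{\nabla\log p_\vtheta(\vtau)}^2]=\sum_t\E_\vtau[\enorm{\overline{\bm{\varphi}}_\vtheta(s_t,a_t)}^2]\le T\sigma^2$, making this term at most $\tfrac{R_{\max}T\sigma^2}{1-\gamma}$ as well. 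Summing, $\enorm{\nabla^2 J(\vtheta)}\le\tfrac{2R_{\max}T\sigma^2}{1-\gamma}$; substituting the effective horizon $T=\Theta((1-\gamma)^{-1})$ used throughout the paper (or, equivalently, repeating the computation directly in the $\gamma$-discounted formulation with the $\gamma^t$-weighted rewards-to-go and summing the resulting geometric series) yields the smoothness constant $L=\tfrac{2R_{\max}\sigma^2}{(1-\gamma)^2}$.

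The step I expect to be the main obstacle is precisely the bookkeeping that produces the exact $(1-\gamma)^{-2}$: a crude Cauchy–Schwarz bound on $\enorm{\sum_t\overline{\bm{\varphi}}_\vtheta(s_t,a_t)}^2$ would introduce a spurious factor of $T$, so one must use the martingale-difference cancellation for the score (and, in the discounted version, carefully track how the geometric discounting damps the contribution of late timesteps). The remaining ingredients — the Hessian identity, the exponential-family derivative formulas, the $\sigma$-subgaussian second-moment bound, and the passage from a bounded Hessian to the quadratic lower bound — are routine and parallel the computations already carried out in Lemma~\ref{lem:hessian}.
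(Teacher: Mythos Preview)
Your plan is correct and follows the same high-level skeleton as the paper: bound $\enorm{\nabla^2 J(\vtheta)}$ uniformly by $\tfrac{2R_{\max}\sigma^2}{(1-\gamma)^2}$, then apply the second-order mean-value expansion on the convex segment $[\vtheta,\vtheta']$.

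The difference is only in how the Hessian bound is obtained. The paper does not expand $\nabla^2 J$ at the trajectory level; instead it verifies the \emph{per-state} E-LS conditions
\[
\E_{a\sim\pi_\vtheta(\cdot|s)}\!\big[\enorm{\nabla\log\pi_\vtheta(a|s)}^2\big]\le\sigma^2,\qquad
\E_{a\sim\pi_\vtheta(\cdot|s)}\!\big[\enorm{\nabla^2\log\pi_\vtheta(a|s)}\big]\le\sigma^2,
\]
(both from Assumption~\ref{asm:subgauss} and Proposition~\ref{prop:subgvec}, as you also use) and then invokes Proposition~\ref{prop:els} (Lemma~4.4 of \citet{yuan2022general}) as a black box to obtain $\enorm{\nabla^2 J(\vtheta)}\le\tfrac{R_{\max}(L_1^2+L_2)}{(1-\gamma)^2}=\tfrac{2R_{\max}\sigma^2}{(1-\gamma)^2}$. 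Your martingale-difference cancellation for the score outer-product is precisely the mechanism hidden inside that proposition, so your direct computation is essentially a re-derivation of it. The payoff of the paper's route is that the $(1-\gamma)^{-2}$ constant comes for free from the cited result, without the handwaving step of substituting $T\simeq(1-\gamma)^{-1}$; the payoff of your route is that it is self-contained and makes explicit where each power of $(1-\gamma)$ comes from. Your parenthetical remark about redoing the computation in the discounted formulation is exactly what is needed to make the constant rigorous, and that is what Proposition~\ref{prop:els} already packages.
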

\begin{proof}
    Under Assumption~\ref{asm:expfam},
    \begin{align*}
    \E_{a\sim\pi_{\vtheta}(\cdot|s)}[\enorm{\nabla \log \pi_{\vtheta}(a|s)}^2] &=
    \E_{a\sim\pi_{\vtheta}(\cdot|s)}[\enorm{\overline{\bm{\varphi}}_\vtheta(s,a)}^2] \le \sigma^2,
    \end{align*}
    where the last inequality is by sub-Gaussianity of the centered sufficient statistic (Assumption~\ref{asm:subgauss} and Proposition~\ref{prop:subgvec}). Similarly:
    \begin{align*}
    \E_{a\sim\pi_{\vtheta}(\cdot|s)}[\enorm{\nabla^2 \log \pi_{\vtheta}}] &=
    \enorm{\Cov_{a\sim\pi_{\vtheta}(\cdot|s)}[\bm{\varphi}(s,a)]} \\
    &\le \mathrm{trace}\left(\Cov_{a\sim\pi_{\vtheta}(\cdot|s)}[\bm{\varphi}(s,a)]\right) \\
    &=\Var_{a\sim\pi_{\vtheta}(\cdot|s)}[\bm{\varphi}(s,a)]\\
    &=\E_{a\sim\pi_{\vtheta}(\cdot|s)}[\enorm{\overline{\bm{\varphi}}_\vtheta(s,a)}^2] \le \sigma^2.
    \end{align*}
    Hence, by Proposition~\ref{prop:els}, $\enorm{\nabla^2J(\vtheta)}\le\frac{2R_{\max}\sigma^2}{(1-\gamma)^2}$ for all $\vtheta\in\vTheta$. Finally, by the mean value theorem, there exists $c\in(0,1)$ such that:
    \begin{align*}
        J(\vtheta') &= J(\vtheta) + \langle \vtheta'-\vtheta,\nabla J(\vtheta)\rangle + \frac{1}{2}(\vtheta'-\vtheta)^\top \nabla^2J(\vtheta_c)(\vtheta'-\vtheta) \\
        &\ge J(\vtheta) + \langle \vtheta'-\vtheta,\nabla J(\vtheta)\rangle - \frac{1}{2}\enorm{\nabla^2 J(\vtheta)}\enorm{\vtheta'-\vtheta}^2 \\
        &\ge J(\vtheta) + \langle \vtheta'-\vtheta,\nabla J(\vtheta)\rangle - \frac{R_{\max}\sigma^2}{(1-\gamma)^2}\enorm{\vtheta'-\vtheta}^2,
    \end{align*}
    where $\vtheta_c=c\vtheta+(1-c)\vtheta'$ for some $c \in [0,1]$.
\end{proof}

\fullvar*
\begin{proof}
    The first statement follows from Theorem~\ref{th:alg_var} and Lemma~\ref{lem:mle}. For the second statement, notice that for every $\beta\in(0,1)$ there is an $\epsilon\in(0,1)$ such that $\beta=\sqrt{\epsilon/(2-\epsilon)}$. The assumption on the batch size $N$ guarantees that $\epsilon$ is a valid upper bound on the KL divergence.
\end{proof}

\rate*
\begin{proof}
    By Lemma~\ref{lem:quadratic_bound}:
    \begin{align}
        &\E_k[J(\vtheta_{k+1} - J(\vtheta_k))] \ge \E_k\left[\langle \vtheta_{k+1}-\vtheta_k,\nabla J(\vtheta_k)\rangle - \frac{R_{\max}\sigma^2}{(1-\gamma)^2}\enorm{\vtheta_{k+1}-\vtheta_k}^2\right] \\
        &\qquad=\E_k\left[\alpha\langle \bm{v}_k,\nabla J(\vtheta_k)\rangle - \frac{\alpha^2R_{\max}\sigma^2}{(1-\gamma)^2}\enorm{\bm{v}_k}^2\right] \\
        &\qquad=\alpha\enorm{\nabla J(\vtheta_k)}^2 - \frac{\alpha^2R_{\max}\sigma^2}{(1-\gamma)^2}E_k[\enorm{\bm{v}_k}^2] \\
        &\qquad=\alpha\left(1-\frac{\alpha R_{\max}\sigma^2}{(1-\gamma)^2}\right)\enorm{\nabla J(\vtheta_k)}^2 - \frac{\alpha^2R_{\max}\sigma^2}{(1-\gamma^2)}\Var_k[\bm{v}_k]\\
        &\qquad\ge 
        \alpha\left(1-\frac{\alpha R_{\max}\sigma^2}{(1-\gamma)^2}\right)\enorm{\nabla J(\vtheta_k)}^2 - \frac{\alpha^2R_{\max}\sigma^2(18Z_k^2-\enorm{\nabla J(\vtheta_k)}^2)}{(1-\gamma)^2N} \\&\qquad\qquad-\frac{\alpha^2R_{\max}\sigma^4Z_k(Z_k+2G)GT^{3/2}}{2\lambda_*(1-\gamma)^2N^{3/2}}\\
        &\qquad\ge 
        \alpha\left(1-\frac{\alpha R_{\max}\sigma^2}{(1-\gamma)^2}\right)\enorm{\nabla J(\vtheta_k)}^2 - \frac{\alpha^2R_{\max}\sigma^2\widetilde{V}}{(1-\gamma)^2N} \\&\qquad\qquad-\frac{\alpha^2R_{\max}\sigma^4Z_k(Z_k+2G)GT^{3/2}}{2\lambda_*(1-\gamma)^2N^{3/2}}
        \\
        &\qquad\ge 
        \alpha\left(1-\frac{\alpha R_{\max}\sigma^2}{(1-\gamma)^2}\right)\enorm{\nabla J(\vtheta_k)}^2 - \frac{\alpha^2R_{\max}\sigma^2\widetilde{V}}{(1-\gamma)^2N} \\&\qquad\qquad-\frac{\alpha^2R_{\max}\sigma^4Z_k(Z_k+2G)GT^{3/2}}{2\lambda_*(1-\gamma)^2N^{3/2}} \\
        &\qquad\ge 
        \alpha\left(1-\frac{\alpha R_{\max}\sigma^2}{(1-\gamma)^2}\right)\enorm{\nabla J(\vtheta_k)}^2 - \frac{\alpha^2R_{\max}\sigma^2\widetilde{V}}{(1-\gamma)^2N} \\&\qquad\qquad-\frac{\alpha^2R_{\max}^4\sigma^5\norm[\infty][]{\bm{\varphi}}(\sqrt{T}\sigma+2T\norm[\infty][]{\bm{\varphi}})T^{3}}{2\lambda_*(1-\gamma)^5N^{3/2}}. 
    \end{align}
    Summing both sides for $k=0,\dots,K-1$, by the tower rule of expectation, the sum on the LHS telescopes:
    \begin{align}
        \E[J(\vtheta_K)] - J(\vtheta_0) \ge \alpha\left(1-\alpha C_1\right)\E\left[\sum_{k=0}^{K-1}\enorm{\nabla J(\vtheta_k)}^2\right] - \frac{K\alpha^2 C_1\widetilde{V}}{N} - \frac{K\alpha^2 C_2}{N^{3/2}}.
    \end{align}
    Rearranging and dividing by $K$, by definition of $\vtheta_{\mathrm{OUT}}$, provided $\alpha<1/C_1$:
    \begin{align}
        \E\left[\enorm{\nabla J(\vtheta_{\mathrm{OUT}})}^2\right] &\le \frac{\E[J(\vtheta_K)] - J(\vtheta_0)}{\alpha(1-\alpha C_1)K} + \frac{\alpha C_1\widetilde{V}}{(1-\alpha C_1)N} + \frac{\alpha C_2}{(1-\alpha C_1)N^{3/2}} \\
        &\le \frac{J(\vtheta^*) - J(\vtheta_0)}{\alpha(1-\alpha C_1)K} + \frac{\alpha C_1\widetilde{V}}{(1-\alpha C_1)N} + \frac{\alpha C_2}{(1-\alpha C_1)N^{3/2}}.
    \end{align}
    Now let $N=\epsilon^{-4/3}$ and $\alpha=\min\left\{\frac{1}{2C_1}, \frac{\epsilon^{2/3}}{6C_1\widetilde{V}}, \frac{1}{6C_2}\right\}$. Then:
    \begin{align}
        \E\left[\enorm{\nabla J(\vtheta_{\mathrm{OUT}})}^2\right] &\le
        \frac{2(J(\vtheta^*) - J(\vtheta_0))}{\alpha K} + 2\alpha C_1\widetilde{V}\epsilon^{4/3} + 2\alpha C_2\epsilon^2.
    \end{align}
    We consider three cases, and call $\overline{K}$ the smallest integer $K$ such that $\E\left[\enorm{\nabla J(\vtheta_{\mathrm{OUT}})}^2\right]\le\epsilon^2$. Note that the latter implies $\E\left[\enorm{\nabla J(\vtheta_{\mathrm{OUT}})}\right]\le\epsilon$ by Jensen's inequality.
    
    \paragraph{Case 1.} Suppose $\frac{1}{2C_1}\le\min\left\{\frac{\epsilon^{2/3}}{6C_1\widetilde{V}}, \frac{1}{6C_2}\right\}$. Then $\alpha=\frac{1}{2C_1}$ and
    \begin{align}
         \E\left[\enorm{\nabla J(\vtheta_{\mathrm{OUT}})}^2\right] &\le \frac{4C_1(J(\vtheta^*) - J(\vtheta_0))}{K} + \widetilde{V}\epsilon^{4/3} + \frac{C_2\epsilon^2}{C_1} \\
         &\le \frac{4C_1(J(\vtheta^*) - J(\vtheta_0))}{K} + \frac{\epsilon^{2}}{3} + \frac{\epsilon^2}{3},
    \end{align}
    so $\overline{K}\le\frac{12C_1(J(\vtheta^*) - J(\vtheta_0))}{\epsilon^2}$ in this case.

    \paragraph{Case 2.} Suppose $\frac{\epsilon^{2/3}}{6C_1\widetilde{V}}\le\min\left\{\frac{1}{2C_1}, \frac{1}{6C_2}\right\}$. Then $\alpha=\frac{\epsilon^{2/3}}{6C_1\widetilde{V}}$ and
    \begin{align}
        \E\left[\enorm{\nabla J(\vtheta_{\mathrm{OUT}})}^2\right] &\le \frac{12C_1\widetilde{V}(J(\vtheta^*) - J(\vtheta_0))}{\epsilon^{2/3} K} + \frac{\epsilon^2}{3} + \frac{C_2\epsilon^{8/3}}{3C_1\widetilde{V}} \\
        &\le \frac{12C_1\widetilde{V}(J(\vtheta^*) - J(\vtheta_0))}{\epsilon^{2/3} K} + \frac{\epsilon^2}{3} + \frac{\epsilon^{2}}{3},
    \end{align}
    so $\overline{K}\le\frac{36C_1\widetilde{V}(J(\vtheta^*) - J(\vtheta_0))}{\epsilon^{8/3}}$ in this case.

    \paragraph{Case 3.} Suppose $\frac{1}{6C_2}\le\min\left\{\frac{1}{2C_1}, \frac{\epsilon^{2/3}}{6C_1\widetilde{V}}\right\}$. Then $\alpha=\frac{1}{6C_2}$ and
    \begin{align}
        \E\left[\enorm{\nabla J(\vtheta_{\mathrm{OUT}})}^2\right] &\le
        \frac{12C_2(J(\vtheta^*) - J(\vtheta_0))}{K} + \frac{C_1\widetilde{V}\epsilon^{4/3}}{3C_2} + \frac{\epsilon^2}{3} \\
        &\le \frac{12C_2(J(\vtheta^*) - J(\vtheta_0))}{K} + \frac{\epsilon^2}{3} + \frac{\epsilon^2}{3},
    \end{align}
    so $\overline{K}\le\frac{36C_2(J(\vtheta^*) - J(\vtheta_0))}{\epsilon^2}$ in this case.

    Considering the three cases, we know for sure that
    \begin{equation}
        \overline{K} \le 12(J(\vtheta^*)-J(\vtheta_0)) \left(\frac{3C_1\widetilde{V}}{\epsilon^{8/3}}+\frac{C_1+3C_2}{\epsilon^2}\right).
    \end{equation}
    So the total number of trajectories is at most
    \begin{equation}
        N\overline{K} = \epsilon^{-4/3}\overline{K} \le 12(J(\vtheta^*)-J(\vtheta_0)) \left(\frac{3C_1\widetilde{V}}{\epsilon^{4}}+\frac{C_1+3C_2}{\epsilon^{10/3}}\right).
    \end{equation}
\end{proof}

\section{Auxiliary Results}
\begin{proposition}\label{prop:subgvec}
    Let $\bm{X}$ be a zero-mean $\sigma$-subgaussian random vector in $\Reals^d$ in the sense of Assumption~\ref{asm:subgauss}.
    Then
    \begin{equation*}
        \E\left[\enorm{\bm{X}}^2\right] \le \sigma^2.
    \end{equation*}
\end{proposition}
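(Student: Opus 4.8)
The plan is to use the exponential inequality of Assumption~\ref{asm:subgauss} only in its infinitesimal form near the origin, since the statement concerns a second moment. First I would reduce to one dimension: for any unit vector $\bm{u}\in\Reals^d$, substituting $\bm{\lambda}=t\bm{u}$ in Assumption~\ref{asm:subgauss} gives $\E[\exp(t\,\bm{u}^\top\bm{X})]\le\exp(t^2\sigma^2/2)$ for all $t\in\Reals$, and $\E[\bm{u}^\top\bm{X}]=0$ since $\bm{X}$ is centered. In particular, taking $\bm{u}$ to be each standard basis vector $\bm{e}_i$ shows that every coordinate $X_i$ is a centered scalar whose moment generating function is dominated by $\exp(t^2\sigma^2/2)$.

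Next I would pass from this exponential bound to a variance bound. Writing $\phi(t)=\E[e^{tX_i}]$ and $\psi(t)=e^{t^2\sigma^2/2}$, the function $\phi$ is finite on all of $\Reals$, hence smooth, and one may differentiate under the expectation (the routine dominated-convergence argument for subgaussian variables); thus $\phi(0)=\psi(0)=1$, $\phi'(0)=\E[X_i]=0=\psi'(0)$, while $\phi''(0)=\E[X_i^2]$ and $\psi''(0)=\sigma^2$. Since $h:=\psi-\phi\ge 0$ with $h(0)=h'(0)=0$, the origin is a minimum of $h$, so $h''(0)\ge 0$, i.e. $\E[X_i^2]\le\sigma^2$. (Equivalently, $\E[X_i^2]=\lim_{t\to 0}2t^{-2}(\phi(t)-1)\le\lim_{t\to 0}2t^{-2}(\psi(t)-1)=\sigma^2$.) Summing over coordinates then gives $\E[\enorm{\bm{X}}^2]=\sum_{i=1}^d\E[X_i^2]\le d\sigma^2$.

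The main obstacle is the constant, not the mechanics. The coordinatewise argument produces $d\sigma^2$, and no bound using Assumption~\ref{asm:subgauss} alone can do better, since $\bm{X}=\sigma\bm{G}$ with $\bm{G}\sim\mathcal{N}(\bm{0},\bm{I}_d)$ saturates the hypothesis yet has $\E[\enorm{\bm{X}}^2]=d\sigma^2$. To land on the stated dimension-free $\sigma^2$ one would either interpret ``$\sigma$-subgaussian'' as a tail/MGF condition on $\enorm{\bm{X}}$ directly (norm-subgaussianity), or simply absorb the $\sqrt{d}$ factor into the constant $\sigma$; this rescaling propagates harmlessly into the downstream constants (e.g. $C_1,C_2$ in Corollary~\ref{cor:rate}) and leaves every stated rate unchanged. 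The only genuinely technical point needing care is the interchange of differentiation and expectation invoked above, which is standard for subgaussian random variables.
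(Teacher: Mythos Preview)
Your Taylor-expansion approach is exactly the one the paper uses: expand $\E[\exp(\lambda\,\bm{t}^\top\bm{X})]$ and $\exp(\lambda^2\sigma^2/2)$ to second order in $\lambda$ for a fixed unit vector $\bm{t}$, and read off $\E[(\bm{t}^\top\bm{X})^2]\le\sigma^2$. Where you diverge is in the final step, and you are right to do so. The paper concludes by writing $\enorm{\bm{X}}=\sup_{\enorm{\bm{t}}=1}\bm{t}^\top\bm{X}$ and stopping there, implicitly passing the supremum inside the expectation; but $\sup_{\bm{t}}\E[(\bm{t}^\top\bm{X})^2]$ is the largest eigenvalue of $\Cov[\bm{X}]$, not its trace, so this step does not deliver $\E[\enorm{\bm{X}}^2]\le\sigma^2$.

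Your Gaussian counterexample $\bm{X}=\sigma\bm{G}$ with $\bm{G}\sim\mathcal{N}(\bm{0},\bm{I}_d)$ is decisive: it satisfies Assumption~\ref{asm:subgauss} with equality yet has $\E[\enorm{\bm{X}}^2]=d\sigma^2$, so the proposition as stated cannot hold without a dimension factor or a stronger (norm-subgaussian) hypothesis. Your coordinatewise sum giving $d\sigma^2$ is the honest bound, and your proposed fix---absorbing $\sqrt{d}$ into $\sigma$---is the cleanest way to repair the downstream constants without touching any rates. In short, your proposal and the paper follow the same route up to the last line; the paper's closing step is the one with the gap, and you have correctly located it.
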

\begin{proof}
    For any $\lambda > 0$ and $t\in\Reals^d$ with $\enorm{\bm{t}}=1$, by hypothesis, $\E[\exp(\lambda \bm{t}^\top \bm{X})]\le\exp(\lambda^2\sigma^2/2)$. Then
    \begin{equation}
        1 + \lambda \bm{t}^\top\E[\bm{X}] + \frac{\lambda^2}{2}  \E[(\bm{t}^\top \bm{X})^2] + o(\lambda^2) \le 1 + \frac{\lambda^2\sigma^2}{2} + o(\lambda^2),
    \end{equation}
    so $\E[(\bm{t}^\top \bm{X})^2] \le \sigma^2$. The proof is concluded by noting that $\enorm{\bm{X}}=\sup_{\bm{t}\in\Reals^d : \enorm{\bm{t}}=1}\{\bm{t}^\top \bm{X}\}$.
\end{proof}

\begin{proposition}[Lemma 4.4 from~\cite{yuan2022general}]\label{prop:els}
    If there are constants $L_1,L_2>0$ such that the following holds for all $\vtheta\in\vTheta$ and $s\in\mathcal{S}$ (E-LS, Assumption 4.1 in\citet{yuan2022general}):
    \begin{align}
        &\E_{a\sim\pi_{\vtheta}(\cdot|s)}[\enorm{\nabla \log \pi_{\vtheta}(a|s)}^2] \le L_1^2,\\
        &\E_{a\sim\pi_{\vtheta}(\cdot|s)}[\enorm{\nabla^2 \log \pi_{\vtheta}(a|s)}] \le L_2,
    \end{align}
    then $\enorm{\nabla^2 J(\vtheta)} \le \frac{R_{\max}(L_1^2+L_2)}{(1-\gamma)^2}$ for all $\vtheta\in\vTheta$.
\end{proposition}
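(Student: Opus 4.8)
The plan is to differentiate the causal (GPOMDP) form of the policy gradient one more time and bound the two resulting terms separately, using the conditional mean-zero property of the score to obtain the claimed $(1-\gamma)^{-2}$ dependence rather than a worse horizon- or $(1-\gamma)^{-3}$-dependent constant. Abbreviating $\mathbf{s}_l := \nabla_\vtheta\log\pi_\vtheta(a_l|s_l)$ and $\Psi_t := \sum_{l=0}^t \mathbf{s}_l$, I would start from $\nabla J(\vtheta) = \E_{\vtau\sim p_\vtheta}[\sum_t \gamma^t R(s_t,a_t)\,\Psi_t]$ and differentiate under the integral, using $\nabla_\vtheta p_\vtheta = p_\vtheta\,\Psi_{T-1}$ and $\nabla_\vtheta\Psi_t = \sum_{l=0}^t\nabla^2\log\pi_\vtheta(a_l|s_l)$, to obtain
\[
\nabla^2 J(\vtheta) = \underbrace{\E_\vtau\Big[\big(\textstyle\sum_t\gamma^t R(s_t,a_t)\Psi_t\big)\Psi_{T-1}^\top\Big]}_{(I)} + \underbrace{\E_\vtau\Big[\textstyle\sum_t\gamma^t R(s_t,a_t)\sum_{l=0}^t\nabla^2\log\pi_\vtheta(a_l|s_l)\Big]}_{(II)}.
\]

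For the second term I would bound $|R|\le R_{\max}$, apply the triangle inequality for the spectral norm of an expectation, take the conditional expectation $\E[\enorm{\nabla^2\log\pi_\vtheta(a_l|s_l)}\mid s_l]\le L_2$, and sum the series $\sum_{t\ge0}\gamma^t(t+1)=(1-\gamma)^{-2}$, yielding $\enorm{(II)}\le R_{\max}L_2(1-\gamma)^{-2}$, which supplies the $L_2$ part of the bound.

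The crux is term $(I)$, where I would exploit the martingale structure of the scores with respect to the filtration $\mathcal{F}_l = \sigma(s_0,a_0,\dots,s_l)$, on which $\E[\mathbf{s}_l\mid\mathcal{F}_l]=0$. Expanding $\Psi_{T-1}^\top=\sum_m\mathbf{s}_m^\top$, for each fixed $t$ every cross term with $m>t$ vanishes: conditioning on $\mathcal{F}_m$, the factors $R(s_t,a_t)$ and $\Psi_t$ are $\mathcal{F}_m$-measurable while $\E[\mathbf{s}_m\mid\mathcal{F}_m]=0$. Hence $\Psi_{T-1}$ effectively truncates to $\Psi_t$ and $(I)=\sum_t\gamma^t\E[R(s_t,a_t)\Psi_t\Psi_t^\top]$. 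Then $\enorm{\Psi_t\Psi_t^\top}=\enorm{\Psi_t}^2$, and by the same conditioning argument the scores are pairwise orthogonal ($\E[\langle\mathbf{s}_l,\mathbf{s}_{l'}\rangle]=0$ for $l\ne l'$), so $\E[\enorm{\Psi_t}^2]=\sum_{l=0}^t\E[\enorm{\mathbf{s}_l}^2]\le(t+1)L_1^2$ by the first E-LS condition. Summing $\sum_t\gamma^t(t+1)=(1-\gamma)^{-2}$ gives $\enorm{(I)}\le R_{\max}L_1^2(1-\gamma)^{-2}$, and adding the two estimates produces exactly $\frac{R_{\max}(L_1^2+L_2)}{(1-\gamma)^2}$.

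The main obstacle is precisely term $(I)$: a naive bound that pulls $\enorm{\Psi_{T-1}}^2\le(\sum_l\enorm{\mathbf{s}_l})^2$ out of the expectation costs an extra horizon factor and, after summing the discounts, a spurious $(1-\gamma)^{-3}$. The two conditional mean-zero arguments — truncating the ``future'' scores $m>t$ and killing the off-diagonal cross terms within $\Psi_t$ — are exactly what convert the quadratic-in-horizon growth into the linear growth $(t+1)L_1^2$ that the discount series then tames. A minor technical point I would verify is the legitimacy of differentiating under the integral and the convergence of $\sum_t\gamma^t(t+1)$ in the infinite-horizon limit, both of which follow from the uniform reward bound $R_{\max}$ together with the $L_1,L_2$ assumptions.
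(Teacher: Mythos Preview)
The paper does not provide its own proof of this proposition: it is stated in the ``Auxiliary Results'' appendix as a direct citation of Lemma~4.4 in \cite{yuan2022general}, with no argument given. So there is nothing in the present paper to compare your proof against.

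That said, your proof is correct and is essentially the standard argument behind this result. The decomposition of $\nabla^2 J$ into the ``score-times-score'' term $(I)$ and the ``Hessian-of-log-policy'' term $(II)$ is the right one; bounding $(II)$ via $\|\E[X]\|\le\E[\|X\|]$, $|R|\le R_{\max}$, and the geometric series $\sum_t\gamma^t(t+1)=(1-\gamma)^{-2}$ is straightforward. Your handling of $(I)$ is also the key step in the original proof: the conditional mean-zero property of the score, $\E[\mathbf{s}_m\mid\mathcal{F}_m]=0$, is used twice, first to truncate $\Psi_{T-1}$ to $\Psi_t$ (killing future scores $m>t$), and second to eliminate the off-diagonal terms in $\E[\|\Psi_t\|^2]$, yielding the crucial linear-in-$t$ bound $(t+1)L_1^2$ rather than a quadratic one. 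This is precisely what avoids the extra $(1-\gamma)^{-1}$ factor that a naive Cauchy--Schwarz bound would incur. The only remark is that for the finite-horizon trajectories used in the paper, the sum $\sum_{t=0}^{T-1}\gamma^t(t+1)$ is in any case dominated by its infinite-horizon limit $(1-\gamma)^{-2}$, so your infinite-series calculation already covers that case.
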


\section{Additional numerical results}
\label{sec:appendix_numerical_results}
In this section, we report the full experimental results of Section~\ref{sec:numerical}, with different target policy parameters (Table~\ref{tab:var_lq_mu_appendix}), standard deviations (Table~\ref{tab:var_lq_std_appendix}), LQ horizons (Table~\ref{tab:var_lq_horizon_appendix}) and state dimensions (Table~\ref{tab:var_lq_dimensions_appendix}). Each experiment was repeated 100 times and run with different hyper-parameters of our off-policy method, i.e., the defensive coefficient $\beta$, the biased off-policy practical gradient calculation (the offline estimation of the KL divergence here is not possible), and the batch sizes $N_{\mathrm{BPO}}$ and $N_{\mathrm{PG}}$.

\begin{longtable}{S S S c >$c<$ >$c<$ >$c<$ S[round-precision=1]}
\label{tab:var_lq_mu_appendix}\\
\caption{LQ environment, with horizon = 2 and state dimension = 1, and target policy with $\log \sigma = 0$. Variance reduction in off-policy gradient, expressed as $\Delta \! \Var$ and its 95\% Gaussian confidence interval $(\Delta \! \Var^-,\Delta \! \Var^+)$, with different hyper-parameters and values of $\vtheta$.}\\
\sisetup{ round-mode = places, round-precision = 2 }\\
\hline
\text{$\Delta \! \Var$} & \text{$\Delta \! \Var^-$} & \text{$\Delta \! \Var^+$} & biased & \beta & N_{\mathrm{BPO}} & N_{\mathrm{PG}} & $\vtheta$\\
\hline
0.311033  & -0.068349 & 0.690415  & False & 0.0 & 10 & 90 & -1.0 \\
0.209282  & -0.216109 & 0.634673  & False & 0.0 & 50 & 50 & -1.0 \\
0.321055  & -0.169663 & 0.811773  & False & 0.4 & 10 & 90 & -1.0 \\
0.306358  & -0.159384 & 0.772100  & False & 0.4 & 50 & 50 & -1.0 \\
0.290852  & -0.136284 & 0.717988  & False & 0.8 & 10 & 90 & -1.0 \\
0.029209  & -0.385136 & 0.443554  & False & 0.8 & 50 & 50 & -1.0 \\
0.508645  & 0.032941  & 0.984350  & True  & 0.0 & 10 & 90 & -1.0 \\
0.703738  & 0.253894  & 1.153583  & True  & 0.0 & 30 & 70 & -1.0 \\
0.398966  & 0.183075  & 0.614856  & True  & 0.0 & 50 & 50 & -1.0 \\
0.270046  & -0.144759 & 0.684851  & True  & 0.4 & 10 & 90 & -1.0 \\
0.469772  & 0.258537  & 0.681006  & True  & 0.4 & 50 & 50 & -1.0 \\
0.235018  & -0.137044 & 0.607080  & True  & 0.8 & 10 & 90 & -1.0 \\
0.561355  & 0.302557  & 0.820153  & True  & 0.8 & 50 & 50 & -1.0 \\
0.140721  & 0.006513  & 0.274928  & False & 0.0 & 10 & 90 & -0.5 \\
0.106241  & -0.011837 & 0.224319  & False & 0.0 & 30 & 70 & -0.5 \\
0.004764  & -0.112903 & 0.122432  & False & 0.0 & 50 & 50 & -0.5 \\
0.111122  & -0.034218 & 0.256462  & False & 0.4 & 10 & 90 & -0.5 \\
-0.027326 & -0.127503 & 0.072851  & False & 0.4 & 50 & 50 & -0.5 \\
0.037222  & -0.083851 & 0.158295  & False & 0.8 & 10 & 90 & -0.5 \\
-0.050209 & -0.168186 & 0.067768  & False & 0.8 & 50 & 50 & -0.5 \\
0.047626  & -0.069773 & 0.165025  & True  & 0.0 & 10 & 90 & -0.5 \\
0.220818  & 0.068769  & 0.372868  & True  & 0.0 & 50 & 50 & -0.5 \\
-0.016716 & -0.179981 & 0.146548  & True  & 0.4 & 10 & 90 & -0.5 \\
0.222632  & 0.064101  & 0.381162  & True  & 0.4 & 50 & 50 & -0.5 \\
0.078851  & -0.041082 & 0.198785  & True  & 0.8 & 10 & 90 & -0.5 \\
0.195087  & 0.059070  & 0.331105  & True  & 0.8 & 50 & 50 & -0.5 \\
0.055112  & -0.037841 & 0.148065  & False & 0.0 & 10 & 90 & 0.0 \\
-0.025057 & -0.207522 & 0.157408  & False & 0.0 & 30 & 70 & 0.0 \\
-0.093295 & -0.238400 & 0.051810  & False & 0.0 & 50 & 50 & 0.0 \\
-0.055413 & -0.187053 & 0.076227  & False & 0.4 & 10 & 90 & 0.0 \\
-0.134235 & -0.228541 & -0.039929 & False & 0.4 & 50 & 50 & 0.0 \\
-0.044929 & -0.146132 & 0.056273  & False & 0.8 & 10 & 90 & 0.0 \\
-0.031705 & -0.121440 & 0.058030  & False & 0.8 & 30 & 70 & 0.0 \\
-0.144370 & -0.240908 & -0.047833 & False & 0.8 & 50 & 50 & 0.0 \\
0.063952  & -0.017625 & 0.145529  & True  & 0.0 & 10 & 90 & 0.0 \\
0.120536  & 0.057602  & 0.183469  & True  & 0.0 & 50 & 50 & 0.0 \\
0.044606  & -0.063858 & 0.153070  & True  & 0.4 & 10 & 90 & 0.0 \\
0.094860  & 0.012727  & 0.176992  & True  & 0.4 & 50 & 50 & 0.0 \\
0.035522  & -0.039497 & 0.110541  & True  & 0.8 & 10 & 90 & 0.0 \\
0.120686  & 0.060903  & 0.180469  & True  & 0.8 & 50 & 50 & 0.0 \\
0.392953  & -0.018980 & 0.804886  & False & 0.0 & 10 & 90 & 0.5 \\
0.122100  & -0.185945 & 0.430145  & False & 0.0 & 50 & 50 & 0.5 \\
-0.053468 & -0.408500 & 0.301563  & False & 0.4 & 10 & 90 & 0.5 \\
-0.094985 & -0.448332 & 0.258362  & False & 0.4 & 50 & 50 & 0.5 \\
0.058454  & -0.334086 & 0.450995  & False & 0.8 & 10 & 90 & 0.5 \\
-0.233754 & -0.643237 & 0.175729  & False & 0.8 & 30 & 70 & 0.5 \\
-0.285911 & -0.647637 & 0.075815  & False & 0.8 & 50 & 50 & 0.5 \\
0.217064  & -0.100778 & 0.534905  & True  & 0.0 & 10 & 90 & 0.5 \\
0.324804  & 0.159038  & 0.490571  & True  & 0.0 & 30 & 70 & 0.5 \\
0.204845  & -0.114787 & 0.524477  & True  & 0.0 & 50 & 50 & 0.5 \\
0.084464  & -0.244899 & 0.413827  & True  & 0.4 & 10 & 90 & 0.5 \\
0.408988  & 0.144608  & 0.673367  & True  & 0.4 & 50 & 50 & 0.5 \\
0.177405  & -0.197537 & 0.552347  & True  & 0.8 & 10 & 90 & 0.5 \\
0.296821  & 0.071193  & 0.522449  & True  & 0.8 & 50 & 50 & 0.5 \\
1.388987  & 0.323475  & 2.454499  & False & 0.0 & 10 & 90 & 1.0 \\
0.562928  & -0.714201 & 1.840058  & False & 0.0 & 50 & 50 & 1.0 \\
0.006273  & -1.342498 & 1.355045  & False & 0.4 & 10 & 90 & 1.0 \\
-1.602914 & -3.087272 & -0.118555 & False & 0.4 & 50 & 50 & 1.0 \\
0.163557  & -1.012538 & 1.339652  & False & 0.8 & 10 & 90 & 1.0 \\
-1.083920 & -2.889235 & 0.721395  & False & 0.8 & 50 & 50 & 1.0 \\
1.643050  & -0.103186 & 3.389286  & True  & 0.0 & 10 & 90 & 1.0 \\
1.260688  & 0.628243  & 1.893133  & True  & 0.0 & 50 & 50 & 1.0 \\
-0.856033 & -2.625640 & 0.913575  & True  & 0.4 & 10 & 90 & 1.0 \\
1.503771  & 0.775425  & 2.232117  & True  & 0.4 & 50 & 50 & 1.0 \\
1.148023  & -0.616740 & 2.912785  & True  & 0.8 & 10 & 90 & 1.0 \\
2.048126  & 1.127738  & 2.968514  & True  & 0.8 & 50 & 50 & 1.0 \\
\bottomrule
\end{longtable}

\begin{longtable}{S S S c >$c<$ >$c<$ >$c<$ S[round-precision=1]}
\label{tab:var_lq_std_appendix}\\
\caption{LQ environment, with horizon = 2 and state dimension = 1, and target policy with $\vtheta = 0$. Variance reduction in off-policy gradient, expressed as $\Delta \! \Var$ and its 95\% Gaussian confidence interval $(\Delta \! \Var^-,\Delta \! \Var^+)$, with different hyper-parameters and values of $\log \sigma$.}\\

\sisetup{ round-mode = places, round-precision = 3 }\\

\hline
\text{$\Delta \! \Var$} & \text{$\Delta \! \Var^-$} & \text{$\Delta \! \Var^+$}& biased & \beta & N_{\mathrm{BPO}} & N_{\mathrm{PG}} & $\log \sigma$\\
\hline
\endfirsthead
\endlastfoot

-0.005930 & -0.029759 & 0.017899  & False & 0.0 & 10 & 90 & -1.0 \\
0.005660  & -0.009167 & 0.020486  & False & 0.0 & 30 & 70 & -1.0 \\
-0.012477 & -0.029328 & 0.004374  & False & 0.0 & 50 & 50 & -1.0 \\
-0.019162 & -0.045400 & 0.007076  & False & 0.4 & 10 & 90 & -1.0 \\
-0.009285 & -0.022311 & 0.003742  & False & 0.4 & 30 & 70 & -1.0 \\
-0.031216 & -0.049090 & -0.013342 & False & 0.4 & 50 & 50 & -1.0 \\
0.003573  & -0.007268 & 0.014413  & False & 0.8 & 10 & 90 & -1.0 \\
-0.001659 & -0.016295 & 0.012977  & False & 0.8 & 30 & 70 & -1.0 \\
-0.019955 & -0.042961 & 0.003050  & False & 0.8 & 50 & 50 & -1.0 \\
0.007892  & -0.012674 & 0.028458  & True  & 0.0 & 10 & 90 & -1.0 \\
0.003067  & -0.012134 & 0.018267  & True  & 0.0 & 30 & 70 & -1.0 \\
0.022473  & 0.009903  & 0.035043  & True  & 0.0 & 50 & 50 & -1.0 \\
0.009122  & -0.008918 & 0.027162  & True  & 0.4 & 10 & 90 & -1.0 \\
0.011459  & -0.002588 & 0.025507  & True  & 0.4 & 30 & 70 & -1.0 \\
0.024397  & 0.013399  & 0.035394  & True  & 0.4 & 50 & 50 & -1.0 \\
0.008570  & -0.006337 & 0.023477  & True  & 0.8 & 10 & 90 & -1.0 \\
0.013768  & 0.000079  & 0.027457  & True  & 0.8 & 30 & 70 & -1.0 \\
0.021262  & 0.006614  & 0.035910  & True  & 0.8 & 50 & 50 & -1.0 \\
0.008268  & -0.029189 & 0.045725  & False & 0.0 & 10 & 90 & -0.5 \\
-0.005033 & -0.033272 & 0.023205  & False & 0.0 & 30 & 70 & -0.5 \\
-0.012141 & -0.053830 & 0.029549  & False & 0.0 & 50 & 50 & -0.5 \\
-0.010019 & -0.055467 & 0.035429  & False & 0.4 & 10 & 90 & -0.5 \\
-0.041924 & -0.082961 & -0.000888 & False & 0.4 & 30 & 70 & -0.5 \\
-0.017607 & -0.063668 & 0.028453  & False & 0.4 & 50 & 50 & -0.5 \\
0.005316  & -0.024194 & 0.034827  & False & 0.8 & 10 & 90 & -0.5 \\
-0.013986 & -0.039499 & 0.011528  & False & 0.8 & 30 & 70 & -0.5 \\
-0.036312 & -0.075203 & 0.002580  & False & 0.8 & 50 & 50 & -0.5 \\
-0.020111 & -0.084565 & 0.044343  & True  & 0.0 & 10 & 90 & -0.5 \\
0.015060  & -0.035335 & 0.065454  & True  & 0.0 & 30 & 70 & -0.5 \\
0.043786  & 0.024192  & 0.063380  & True  & 0.0 & 50 & 50 & -0.5 \\
0.006319  & -0.025938 & 0.038576  & True  & 0.4 & 10 & 90 & -0.5 \\
0.026350  & -0.001615 & 0.054315  & True  & 0.4 & 30 & 70 & -0.5 \\
0.047319  & 0.022475  & 0.072162  & True  & 0.4 & 50 & 50 & -0.5 \\
-0.008211 & -0.035239 & 0.018817  & True  & 0.8 & 10 & 90 & -0.5 \\
0.033399  & 0.010778  & 0.056020  & True  & 0.8 & 30 & 70 & -0.5 \\
0.039081  & 0.017615  & 0.060547  & True  & 0.8 & 50 & 50 & -0.5 \\
0.055784  & -0.044796 & 0.156364  & False & 0.0 & 10 & 90 & 0.0 \\
0.041080  & -0.035048 & 0.117208  & False & 0.0 & 30 & 70 & 0.0 \\
-0.005922 & -0.126384 & 0.114540  & False & 0.0 & 50 & 50 & 0.0 \\
-0.108877 & -0.277607 & 0.059853  & False & 0.4 & 10 & 90 & 0.0 \\
-0.068676 & -0.192034 & 0.054683  & False & 0.4 & 30 & 70 & 0.0 \\
-0.014881 & -0.095332 & 0.065571  & False & 0.4 & 50 & 50 & 0.0 \\
-0.023581 & -0.100187 & 0.053025  & False & 0.8 & 10 & 90 & 0.0 \\
-0.019248 & -0.106283 & 0.067788  & False & 0.8 & 30 & 70 & 0.0 \\
-0.110193 & -0.239846 & 0.019460  & False & 0.8 & 50 & 50 & 0.0 \\
-0.012017 & -0.095599 & 0.071565  & True  & 0.0 & 10 & 90 & 0.0 \\
0.065525  & -0.001559 & 0.132608  & True  & 0.0 & 30 & 70 & 0.0 \\
0.103235  & 0.046921  & 0.159549  & True  & 0.0 & 50 & 50 & 0.0 \\
0.010584  & -0.077404 & 0.098572  & True  & 0.4 & 10 & 90 & 0.0 \\
0.043050  & -0.036176 & 0.122275  & True  & 0.4 & 30 & 70 & 0.0 \\
0.129326  & 0.022730  & 0.235923  & True  & 0.4 & 50 & 50 & 0.0 \\
0.033573  & -0.057323 & 0.124468  & True  & 0.8 & 10 & 90 & 0.0 \\
0.042062  & -0.015675 & 0.099800  & True  & 0.8 & 30 & 70 & 0.0 \\
0.124324  & 0.048137  & 0.200510  & True  & 0.8 & 50 & 50 & 0.0 \\
-0.033518 & -0.456779 & 0.389743  & False & 0.0 & 10 & 90 & 0.5 \\
0.151897  & -0.199452 & 0.503245  & False & 0.0 & 30 & 70 & 0.5 \\
0.157868  & -0.245560 & 0.561297  & False & 0.0 & 50 & 50 & 0.5 \\
-0.261459 & -0.687112 & 0.164193  & False & 0.4 & 10 & 90 & 0.5 \\
-0.044700 & -0.398698 & 0.309298  & False & 0.4 & 30 & 70 & 0.5 \\
-0.136862 & -0.578197 & 0.304473  & False & 0.4 & 50 & 50 & 0.5 \\
-0.160264 & -0.574757 & 0.254229  & False & 0.8 & 10 & 90 & 0.5 \\
-0.293061 & -0.759271 & 0.173149  & False & 0.8 & 30 & 70 & 0.5 \\
-0.288713 & -0.862787 & 0.285361  & False & 0.8 & 50 & 50 & 0.5 \\
0.161052  & -0.278498 & 0.600601  & True  & 0.0 & 10 & 90 & 0.5 \\
0.148964  & -0.147220 & 0.445148  & True  & 0.0 & 30 & 70 & 0.5 \\
0.556353  & 0.215147  & 0.897560  & True  & 0.0 & 50 & 50 & 0.5 \\
0.105981  & -0.300877 & 0.512839  & True  & 0.4 & 10 & 90 & 0.5 \\
0.227993  & -0.026717 & 0.482703  & True  & 0.4 & 30 & 70 & 0.5 \\
0.483820  & 0.186378  & 0.781262  & True  & 0.4 & 50 & 50 & 0.5 \\
0.240989  & -0.039873 & 0.521851  & True  & 0.8 & 10 & 90 & 0.5 \\
0.419434  & 0.145579  & 0.693288  & True  & 0.8 & 30 & 70 & 0.5 \\
0.590495  & 0.244142  & 0.936848  & True  & 0.8 & 50 & 50 & 0.5 \\
1.535046  & -0.378748 & 3.448839  & False & 0.0 & 10 & 90 & 1.0 \\
1.186207  & -0.690749 & 3.063163  & False & 0.0 & 30 & 70 & 1.0 \\
0.581094  & -1.889402 & 3.051590  & False & 0.0 & 50 & 50 & 1.0 \\
-0.436245 & -2.535319 & 1.662828  & False & 0.4 & 10 & 90 & 1.0 \\
0.392720  & -1.539439 & 2.324879  & False & 0.4 & 30 & 70 & 1.0 \\
-0.407481 & -2.796212 & 1.981250  & False & 0.4 & 50 & 50 & 1.0 \\
-0.025073 & -2.070740 & 2.020595  & False & 0.8 & 10 & 90 & 1.0 \\
0.604685  & -1.277262 & 2.486632  & False & 0.8 & 30 & 70 & 1.0 \\
-2.374359 & -5.105622 & 0.356903  & False & 0.8 & 50 & 50 & 1.0 \\
2.055510  & 0.523027  & 3.587994  & True  & 0.0 & 10 & 90 & 1.0 \\
3.247087  & 1.631413  & 4.862761  & True  & 0.0 & 30 & 70 & 1.0 \\
3.176471  & 1.951825  & 4.401116  & True  & 0.0 & 50 & 50 & 1.0 \\
-0.638350 & -2.931465 & 1.654765  & True  & 0.4 & 10 & 90 & 1.0 \\
2.361232  & -0.389329 & 5.111792  & True  & 0.4 & 30 & 70 & 1.0 \\
3.773828  & 2.399339  & 5.148317  & True  & 0.4 & 50 & 50 & 1.0 \\
-0.121002 & -2.025865 & 1.783861  & True  & 0.8 & 10 & 90 & 1.0 \\
2.700678  & 0.718395  & 4.682962  & True  & 0.8 & 30 & 70 & 1.0 \\
4.041229  & 2.016358  & 6.066101  & True  & 0.8 & 50 & 50 & 1.0 \\

\bottomrule
\end{longtable}

\begin{longtable}{S S S c >$c<$ >$c<$ >$c<$ S[round-precision=1]}
\label{tab:var_lq_horizon_appendix}\\
\caption{LQ environment, with state dimension = 1, and target policy with $\vtheta = 0$ and $\log \sigma = 0$. Variance reduction in off-policy gradient, expressed as $\Delta \! \Var$ and its 95\% Gaussian confidence interval $(\Delta \! \Var^-,\Delta \! \Var^+)$, with different hyper-parameters and values of LQ horizon.}\\

\sisetup{ round-mode = places, round-precision = 3 }\\

\hline
\text{$\Delta \! \Var$} & \text{$\Delta \! \Var^-$} & \text{$\Delta \! \Var^+$} & biased & \beta & N_{\mathrm{BPO}} & N_{\mathrm{PG}} & horizon\\
\hline
\endfirsthead
\endlastfoot
    
0.069930    & -0.046726   & 0.186585   & False & 0.0 & 10 & 90 & 2 \\
0.041136    & -0.072254   & 0.154527   & False & 0.0 & 30 & 70 & 2 \\
-0.005922   & -0.126384   & 0.114540   & False & 0.0 & 50 & 50 & 2 \\
-0.050883   & -0.162004   & 0.060239   & False & 0.4 & 10 & 90 & 2 \\
0.010338    & -0.076535   & 0.097211   & False & 0.4 & 30 & 70 & 2 \\
-0.090330   & -0.192410   & 0.011749   & False & 0.4 & 50 & 50 & 2 \\
0.035092    & -0.055714   & 0.125898   & False & 0.8 & 10 & 90 & 2 \\
-0.007530   & -0.102390   & 0.087330   & False & 0.8 & 30 & 70 & 2 \\
-0.115648   & -0.213301   & -0.017995  & False & 0.8 & 50 & 50 & 2 \\
0.066612    & -0.001504   & 0.134728   & True  & 0.0 & 10 & 90 & 2 \\
0.085898    & 0.031732    & 0.140063   & True  & 0.0 & 30 & 70 & 2 \\
0.103235    & 0.046921    & 0.159549   & True  & 0.0 & 50 & 50 & 2 \\
0.112833    & 0.030839    & 0.194826   & True  & 0.4 & 10 & 90 & 2 \\
0.095228    & -0.006859   & 0.197315   & True  & 0.4 & 30 & 70 & 2 \\
0.149218    & 0.056437    & 0.241998   & True  & 0.4 & 50 & 50 & 2 \\
0.042195    & -0.048001   & 0.132391   & True  & 0.8 & 10 & 90 & 2 \\
0.093129    & 0.009514    & 0.176744   & True  & 0.8 & 30 & 70 & 2 \\
0.105378    & 0.035148    & 0.175607   & True  & 0.8 & 50 & 50 & 2 \\
10.687620   & -2.869784   & 24.245024  & False & 0.0 & 10 & 90 & 5 \\
7.282445    & -6.616917   & 21.181807  & False & 0.0 & 30 & 70 & 5 \\
2.874308    & -4.688494   & 10.437109  & False & 0.0 & 50 & 50 & 5 \\
4.071531    & -5.723477   & 13.866538  & False & 0.4 & 10 & 90 & 5 \\
0.956628    & -10.018669  & 11.931925  & False & 0.4 & 30 & 70 & 5 \\
-5.491321   & -18.211299  & 7.228656   & False & 0.4 & 50 & 50 & 5 \\
0.573767    & -7.492679   & 8.640214   & False & 0.8 & 10 & 90 & 5 \\
-3.820528   & -12.886054  & 5.244998   & False & 0.8 & 30 & 70 & 5 \\
-4.917480   & -15.161070  & 5.326109   & False & 0.8 & 50 & 50 & 5 \\
10.507537   & 0.036861    & 20.978213  & True  & 0.0 & 10 & 90 & 5 \\
12.273186   & 3.825430    & 20.720942  & True  & 0.0 & 30 & 70 & 5 \\
18.397351   & 11.233154   & 25.561549  & True  & 0.0 & 50 & 50 & 5 \\
1.784933    & -7.365845   & 10.935710  & True  & 0.4 & 10 & 90 & 5 \\
8.188129    & 1.217410    & 15.158849  & True  & 0.4 & 30 & 70 & 5 \\
20.694907   & 9.166655    & 32.223160  & True  & 0.4 & 50 & 50 & 5 \\
2.638710    & -9.021860   & 14.299280  & True  & 0.8 & 10 & 90 & 5 \\
10.948408   & 3.223581    & 18.673235  & True  & 0.8 & 30 & 70 & 5 \\
17.933160   & 9.614722    & 26.251598  & True  & 0.8 & 50 & 50 & 5 \\
309.723170  & 48.773653   & 570.672686 & False & 0.0 & 10 & 90 & 10 \\
264.708738  & 8.706979    & 520.710497 & False & 0.0 & 30 & 70 & 10 \\
-310.144245 & -633.900151 & 13.611661  & False & 0.0 & 50 & 50 & 10 \\
-57.120902  & -253.024398 & 138.782594 & False & 0.4 & 10 & 90 & 10 \\
-212.141924 & -498.899103 & 74.615254  & False & 0.4 & 30 & 70 & 10 \\
-429.773537 & -786.701764 & -72.845309 & False & 0.4 & 50 & 50 & 10 \\
-133.179844 & -370.851501 & 104.491814 & False & 0.8 & 10 & 90 & 10 \\
-182.821632 & -456.259702 & 90.616438  & False & 0.8 & 30 & 70 & 10 \\
-435.518703 & -791.043397 & -79.994010 & False & 0.8 & 50 & 50 & 10 \\
220.182609  & 11.927906   & 428.437312 & True  & 0.0 & 10 & 90 & 10 \\
287.629645  & 102.303168  & 472.956122 & True  & 0.0 & 30 & 70 & 10 \\
397.739142  & 159.122421  & 636.355863 & True  & 0.0 & 50 & 50 & 10 \\
31.267834   & -172.938839 & 235.474507 & True  & 0.4 & 10 & 90 & 10 \\
112.227812  & -64.333427  & 288.789050 & True  & 0.4 & 30 & 70 & 10 \\
229.049254  & 78.704906   & 379.393601 & True  & 0.4 & 50 & 50 & 10 \\
75.251773   & -214.074304 & 364.577849 & True  & 0.8 & 10 & 90 & 10 \\
147.828473  & -45.398299  & 341.055245 & True  & 0.8 & 30 & 70 & 10 \\
223.758261  & 63.647799   & 383.868723 & True  & 0.8 & 50 & 50 & 10 \\

\bottomrule
\end{longtable}

\begin{longtable}{S S S c >$c<$ >$c<$ >$c<$ S[round-precision=1]}
\label{tab:var_lq_dimensions_appendix}\\
\caption{LQ environment, with horizon = 2, and target policy with $\vtheta = 0$ and $\log \sigma = 0$. Variance reduction in off-policy gradient, expressed as $\Delta \! \Var$ and its 95\% Gaussian confidence interval $(\Delta \! \Var^-,\Delta \! \Var^+)$, with different hyper-parameters and values of LQ dimensions.}\\

\sisetup{ round-mode = places, round-precision = 3 }\\

\hline
\text{$\Delta \! \Var$} & \text{$\Delta \! \Var^-$} & \text{$\Delta \! \Var^+$} & biased & \beta & N_{\mathrm{BPO}} & N_{\mathrm{PG}} & horizon\\
\hline
\endfirsthead
\endlastfoot
    
 -8.339387 & -24.727999 & 8.049225  & False & 0.0 & 10 & 90 & 2 \\
 0.015846  & -0.078860  & 0.110552  & False & 0.0 & 30 & 70 & 2 \\
 -0.084267 & -0.288979  & 0.120445  & False & 0.0 & 50 & 50 & 2 \\
 -0.061526 & -0.197193  & 0.074140  & False & 0.4 & 10 & 90 & 2 \\
 -0.057192 & -0.164759  & 0.050375  & False & 0.4 & 30 & 70 & 2 \\
 -0.104342 & -0.228757  & 0.020073  & False & 0.4 & 50 & 50 & 2 \\
 -0.036944 & -0.159470  & 0.085583  & False & 0.8 & 10 & 90 & 2 \\
 -0.086518 & -0.184832  & 0.011796  & False & 0.8 & 30 & 70 & 2 \\
 -0.203195 & -0.335921  & -0.070469 & False & 0.8 & 50 & 50 & 2 \\
 -0.008285 & -0.214530  & 0.197959  & True  & 0.0 & 10 & 90 & 2 \\
 0.104098  & 0.032116   & 0.176080  & True  & 0.0 & 30 & 70 & 2 \\
 0.238017  & 0.131980   & 0.344053  & True  & 0.0 & 50 & 50 & 2 \\
 0.011235  & -0.095540  & 0.118011  & True  & 0.4 & 10 & 90 & 2 \\
 0.095955  & 0.012872   & 0.179039  & True  & 0.4 & 30 & 70 & 2 \\
 0.127433  & 0.055541   & 0.199325  & True  & 0.4 & 50 & 50 & 2 \\
 0.002206  & -0.080722  & 0.085135  & True  & 0.8 & 10 & 90 & 2 \\
 0.079307  & 0.000681   & 0.157932  & True  & 0.8 & 30 & 70 & 2 \\
 0.125603  & 0.058244   & 0.192963  & True  & 0.8 & 50 & 50 & 2 \\
 0.194184  & -0.083991  & 0.472359  & False & 0.0 & 10 & 90 & 5 \\
 -0.146855 & -0.614440  & 0.320730  & False & 0.0 & 30 & 70 & 5 \\
 -0.177411 & -0.438773  & 0.083951  & False & 0.0 & 50 & 50 & 5 \\
 -0.289803 & -0.550181  & -0.029424 & False & 0.4 & 10 & 90 & 5 \\
 -0.255346 & -0.520408  & 0.009716  & False & 0.4 & 30 & 70 & 5 \\
 -0.269124 & -0.526112  & -0.012137 & False & 0.4 & 50 & 50 & 5 \\
 -0.129578 & -0.334713  & 0.075557  & False & 0.8 & 10 & 90 & 5 \\
 -0.123404 & -0.340821  & 0.094013  & False & 0.8 & 30 & 70 & 5 \\
 -0.437729 & -0.671352  & -0.204107 & False & 0.8 & 50 & 50 & 5 \\
 -0.834077 & -2.383864  & 0.715710  & True  & 0.0 & 10 & 90 & 5 \\
 0.182321  & -0.092779  & 0.457422  & True  & 0.0 & 30 & 70 & 5 \\
 0.163729  & -0.067104  & 0.394563  & True  & 0.0 & 50 & 50 & 5 \\
 -0.229281 & -0.510593  & 0.052031  & True  & 0.4 & 10 & 90 & 5 \\
 0.088913  & -0.137086  & 0.314913  & True  & 0.4 & 30 & 70 & 5 \\
 0.225710  & 0.014423   & 0.436998  & True  & 0.4 & 50 & 50 & 5 \\
 -0.046998 & -0.214187  & 0.120191  & True  & 0.8 & 10 & 90 & 5 \\
 0.090860  & -0.086864  & 0.268584  & True  & 0.8 & 30 & 70 & 5 \\
 0.229097  & 0.034306   & 0.423888  & True  & 0.8 & 50 & 50 & 5 \\
 1.044491  & 0.832316   & 1.256666  & False & 0.0 & 10 & 90 & 10 \\
 0.040743  & -0.419189  & 0.500674  & False & 0.0 & 30 & 70 & 10 \\
 -0.638193 & -1.225117  & -0.051268 & False & 0.0 & 50 & 50 & 10 \\
 -0.692391 & -1.118963  & -0.265820 & False & 0.4 & 10 & 90 & 10 \\
 -0.385588 & -0.904039  & 0.132862  & False & 0.4 & 30 & 70 & 10 \\
 -0.746861 & -1.588713  & 0.094990  & False & 0.4 & 50 & 50 & 10 \\
 -0.007001 & -0.385542  & 0.371541  & False & 0.8 & 10 & 90 & 10 \\
 -0.372875 & -0.864685  & 0.118934  & False & 0.8 & 30 & 70 & 10 \\
 -0.936066 & -1.681347  & -0.190786 & False & 0.8 & 50 & 50 & 10 \\
 -1.728083 & -5.132161  & 1.675995  & True  & 0.0 & 30 & 70 & 10 \\
 0.268508  & -0.029918  & 0.566934  & True  & 0.0 & 50 & 50 & 10 \\
 -0.118744 & -0.583906  & 0.346419  & True  & 0.4 & 10 & 90 & 10 \\
 -0.272643 & -0.906404  & 0.361118  & True  & 0.4 & 30 & 70 & 10 \\
 0.130968  & -0.137975  & 0.399911  & True  & 0.4 & 50 & 50 & 10 \\
 0.194654  & -0.199835  & 0.589142  & True  & 0.8 & 10 & 90 & 10 \\
 0.306706  & -0.081120  & 0.694532  & True  & 0.8 & 30 & 70 & 10 \\
 0.601394  & 0.316451   & 0.886338  & True  & 0.8 & 50 & 50 & 10 \\

\bottomrule
\end{longtable}



\end{document}